\title{Principled Fine-tuning of LLMs from User-Edits: \\ A Medley of Preference, Supervision, and Reward}
\author{%
  Dipendra Misra$^{\star}$\thanks{DM and AP contributed equally and are listed alphabetically. GG contributed before joining DeepMind.} \\
  Databricks Mosaic Research\\
  \And
  Aldo Pacchiano$^{\star}$ \\
  Boston University\\
  \AND
  Ta-Chung Chi \\
  Databricks Mosaic Research \\
  \And
  Ge Gao \\
  Google DeepMind \\
}
\newtheorem{assumption}{Assumption}
\definecolor{base}{HTML}{b7b7b7} %
\definecolor{dpo}{HTML}{6aa84f} %
\definecolor{sft}{HTML}{e69138} %
\definecolor{dpo-sft}{HTML}{3c78d8} %
\definecolor{late-ens}{HTML}{e06666} %
\newcommand{\suploss}{\ell_{\textrm{SUP}}}
\newcommand{\dpoloss}{\ell_{\textrm{PREF}}}
\newcommand{\suppi}{\hat{\pi}_{\textrm{SUP}}}
\newcommand{\dpopi}{\hat{\pi}_{\textrm{PREF}}}
\newcommand{\costpi}{\hat{\pi}_{\textrm{RL}}}
\newcommand{\efpi}{\hat{\pi}_{\textrm{EF}}}
\newcommand{\base}{{\tt Base}}
\newcommand{\sft}{{\tt SFT}}
\newcommand{\dpo}{{\tt DPO}}
\newcommand{\early}{{\tt EarlyEnsemble}}
\newcommand{\late}{{\tt LateEnsemble}}
\newcommand{\SubOpt}{{\tt SubOpt}}
\newcommand{\editdist}{\Delta_{\textrm{edit}}}
\newcommand{\piref}{\pi_{\mathrm{ref}}}
\newcommand{\cmax}{c_{\textrm{max}}}
\newcommand{\minp}{\gamma_{\textrm{min}}}
\newcommand{\Vmax}{V_{\textrm{max}}}
\newcommand{\Reg}{{\tt Reg}}
\newcommand{\unf}{{\tt Unf}}
\newcommand{\TV}[1]{\left\|#1\right\|_{\textrm{TV}}}
\newcommand{\supp}{{\tt supp}}
\newcommand{\hatcost}{\widehat{f}}
\newcommand{\barcost}{\bar{f}}
\newcommand{\errormle}{\Delta_{\textrm{mle}}}
\newcommand{\yt}{\tilde{y}}
\newcommand{\Dipref}{D_{\textrm{pref}}}
\newcommand{\Dpref}{\Dcal_{\textrm{pref}}}
\newcommand{\tofill}[1]{\textcolor{brown}{#1}}
\newcommand{\mycomment}[1]{\hfill\textcolor{blue}{// #1}}
\newcounter{protocol}
\newenvironment{protocol}[1][htb]{%
  \let\c@algorithm\c@protocol
  \renewcommand{\ALG@name}{Protocol}%
  \begin{algorithm}[#1]%
  }{\end{algorithm}
}
\date{}
\pgfplotsset{compat=1.18}
\begin{document}

\doparttoc %
\faketableofcontents %

\maketitle

\begin{abstract}
We study how to fine-tune LLMs using user-edit deployment data consisting of a set of context, an agent's response, and user edits. This deployment data is naturally generated by users in applications such as LLMs-based writing assistants and coding agents. The \emph{natural} origin of user edits makes it a desired source for adapting and personalizing LLMs.
In this setup, there emerges a unification of various feedback types namely preferences, supervised labels, and cost that are typically studied separately in the literature. In this paper, we initiate the theoretical investigation of learning from user edits. We first derive bounds for learning algorithms that learn from each of these feedback types. We prove that these algorithms have different trade-offs depending upon the user, data distribution, and model class. We then propose a simple ensembling procedure to jointly learn from these feedback types. On two domains adapted from Gao et al.~\cite{gao2024aligning}, we show our ensembling procedure outperforms these methods that learn from individual feedback. Further, we show that our proposed procedure can robustly adapt to different user-edit distributions at test time.
\end{abstract}

\section{Introduction}

Post-training of LLMs has become a critical step in developing large language models (LLMs) in recent times. In particular, high-quality data is a valuable artifact for making LLMs useful for domains of interest. Modern approaches typically employ annotators to label a variety of feedback ranging from full human written gold responses to labeling binary preferences~\cite{christiano2017deep,ouyang2022training}. However, these approaches are inherently expensive. In contrast, some recent works have attempted to use deployment logs generated by natural interactions of users with LLM Agents to improve these models~\cite{chen2024retrospective, gao2024aligning,hong2024interactive}. In particular, for a broad class of LLM applications such as coding and writing assistants, user edits~\cite{faltings2023interactive,gao2024aligning} form a natural mode of feedback found in deployment logs. Motivated by this, we study how to fine-tune LLMs to a given task using user-edits in deployment logs.

Consider the application in~\pref{fig:intro} where an LLM agent is being used as a writing assistant. In any given session, a user can query the agent at any time to solve a task such as generating an email given a short description. If the LLM's response is unsatisfactory, the user typically edits the response to fix the deficiencies. For example, the user may prefer a certain writing style that the base LLM agent may not be tailored to generate. We emphasize that these edits are \emph{naturally generated} by the users making it easy to scale this feedback. Further, since the user has a stake in solving their own problem, we can assume that the edit feedback is of reasonable quality. In contrast, in RLHF applications, feedback such as pairwise preference is collected by hiring annotators to label a pair of LLM responses and typically requires a quality check. Finally, edit feedback is also convenient to provide since state-of-the-art LLMs are often able to generate a reasonable response that may only need small edits to make it suitable.

There has been significant progress in the development of LLMs over the last few years. However, LLMs are not oracle and cannot apriori adapt to a given user or task preference~\citep{Salemi2023LaMPWL,Tan2023UserMI}. A common solution is to prompt these models with the needed preferences and requirement often as a system prompt~\citep{Liu2023AFL}. However, users in real applications are not prompt-engineers and may not list all their preferences in the prompt, or may even be aware of their preferences. Further, the needs and preferences can be complex, context-dependent, and maybe hard to describe. This necessitates using of user feedback such as edits to improve the agent's response over time. 

\begin{figure*}[t]
    \centering
    \includegraphics[width=1.0\linewidth]{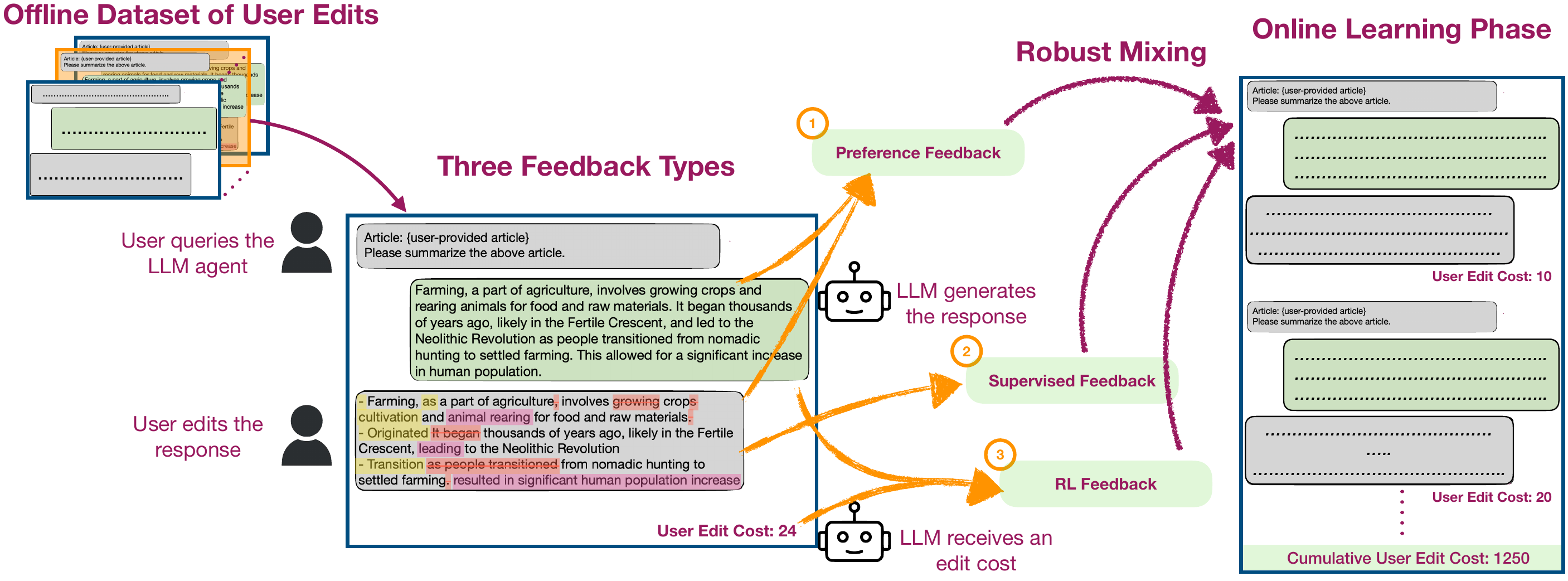}
    \caption{Illustrates our learning from edits setup (\pref{proto:learning-from-edits}). An interesting feature of our setup is that it contains three fundamental types of feedback that can be used to fine-tune policies. We show that using multiple feedback types leads to better policies.}
    \label{fig:intro}
\end{figure*}
Our learning protocol shown in~\pref{fig:intro} consists of an offline learning phase that uses a dataset of deployment logs containing user interactions with the LLM agent. Each data point consists of an original context, along with the agent's response and any user edits. The dataset can be collected from a single user if we are personalizing to a given user's preference, or multiple users if we are learning more general preferences. This dataset is used to adapt the LLM agent. The second phase of our learning consists of an online learning phase during which the agent interacts with a user, or multiple users, to generate a response.  We penalize the agent during this phase by the amount of edits, as measured by an edit cost, that the user has to perform to make up for the deficiencies in the agent's response. As in a real-world setting, user(s) during the online learning phase can be misaligned with the user(s) that provided data in the deployment logs. The goal of a learning algorithm is to effectively perform offline and online learning to minimize the cumulative edit cost during the online learning phase.\looseness=-1

In machine learning, there is a long line of work on development of optimal algorithms given a set of feedback such as from rewards~\cite{sutton1998reinforcement,silver2017mastering}, actions~\cite{ross2011reduction,hussein2017imitation}, or preferences~\cite{christiano2017deep,ouyang2022training}. As shown in~\pref{fig:intro}, an interesting property of our learning setup is that all three aforementioned feedback types co-occur in this setting. Namely, given the context and user edits, one can simply fine-tune the LLM on the user edits. Alternatively, one can use induce a preference data using the user edits as the preferred response over the agent response. Finally, one can also use the edit distance as a reward to perform reinforcement learning. This provides a rich space to develop algorithms. In this work, we initiate a theoretical investigation into learning from user edits. Our purpose is two-fold: one is to provide principled reasoning for our algorithm choices and compare them to other learning settings, and the second is to predict empirical phenomena that can help in deployment. We first provide bounds for three basic algorithms that use the three separate feedback, and demonstrate that these algorithms have different trade-offs. We then propose an ensembling approach that can balance these different trade-offs.\looseness=-1

We evaluate these different methods and our ensembling approach on two domains from Gao et al.,~\cite{gao2024aligning}. These domains evaluate the ability of an LLM agent to personalize to a simulated user's latent preference based purely on user edit feedback. We show that across a range of different settings, the ensembling approach perform better than algorithms that use a single feedback type.

\section{Preliminary}

\indent\textbf{Notation.} We use calligraphic letters such as $\Ucal$ to denote sets. We define $\Delta(\Ucal)$ as the set of all distributions over a countable set $\Ucal$. For any $N \in \NN$, we define the set $[N] = \{1, 2, \cdots, N\}$.
We define the set of all texts as $\Ycal$ and the set of all contexts $\Xcal$. The context can include text along with other multimodal data such as images or videos. We define the set of LLM policies as $\Pi = \cbr{\pi: \Xcal \rightarrow \Delta(\Ycal)}$. Given a context $x \in \Xcal$, and a policy $\pi \in \Pi$, we can sample a response $y \sim \pi(\cdot \mid x)$. We assume that LLMs can accept multimodal input if the context contains multimodal data.\looseness=-1

\indent\textbf{Learning from user-edits.} In our setup (\pref{proto:learning-from-edits}), the agent is repeatedly queried by a user to generate a response $y$ given a context $x$. For example, in the writing assistant application, the assistant agent may be triggered to generate an email draft given a short prompt. The context $x$ will include this prompt along with the content of the existing document and application metadata such as environment variables and the agent's calendar information. The agent will then generate a response $y$, which is the email draft. If the user finds the response unsuitable, they may edit it to $y'$. If they decide not to edit, then we have $y' = y$. We will always state there is a user edit, and if $y=y'$ then we will call this an \emph{empty edit}. We assume the user edits can be modeled with an \emph{unknown} distribution $q: \Xcal \times \Ycal \rightarrow \Delta(\Ycal)$ such that given a context $x$ and response $y$, the user generates the edit $y' \sim q(\cdot \mid x, y)$.

We emphasize that the user performs the edits as they need the result for their own needs. This contrasts with the more commonly used preference data in the RLHF literature which is collected by asking annotators to label a pair of responses. As this is not a natural task, the annotators are often compensated. In contrast, \emph{user edits provide freely generated deployment data for performing never-ending fine-tuning of LLMs}. 

The necessity for user edits is due to deficiencies in the agent's response $y$. Further, the more edits the user has to make, the more deficient is the agent's response. We can measure this deficiency with an edit cost $c = \editdist(y, y')$ where $\editdist: \Ycal^2 \rightarrow [0, \cmax]$ is a suitable edit distance metric. The edit cost $c$ is higher if the user performs more edits and 0 only if edits are empty ($y=y'$). Following Gao et al.,~\cite{gao2024aligning}, we use Levenhstein edit distance over sub-tokens for defining $\editdist$ in our experiments. However, our analysis and algorithm development are independent of the choice of $\editdist$. We are now ready to state our formal setup.

\indent\textbf{Formal Learning Setup.} \pref{proto:learning-from-edits} states our formal setup. We assume access to a dataset of $n$ user edits $\Dcal = \{(x_i, y_i, y'_i\}_{i=1}^n$ and a policy class $\Pi$. For every $i \in [n]$, we assume $x_i \sim \rho(\cdot)$, $y_i \sim \piref(\cdot \mid x_i)$, and $y'_i \sim q(\cdot \mid x_i, y_i)$, where $\rho$ is the context distribution and $\piref$ is a \emph{reference policy} used to generate the agent response. The reference policy can be a pre-trained LLM that is used to warmstart the problem. This is a typical deployment dataset that is encountered in writing and coding assistant applications. %
This data distribution can be used to perform fine-tune policies in the \emph{offline learning phase} (\pref{line:proto-offline}). 

After this, the agent is evaluated over a series of $T$ episodes during the \emph{online learning phase}. In each interaction, the world (which includes the user) will provide a context $x_t \sim \rho$ (\pref{line:proto-context}). The agent can generate a response $y_t$ for this context (\pref{line:proto-response}). Finally, the user generates an edit $y'_t$ which leads to an edit cost of $c_t$ to the agent (\pref{line:proto-edit}-\ref{line:proto-cost}). The goal of the agent is to minimize the total edits performed during these $T$ episodes (\pref{line:proto-evaluation}). We allow the agent to perform \emph{online learning} during these $T$ episodes. However, $T$ may not be large enough to perform a full fine-tuning. Consequently, any learning algorithm should try to effectively use the edited dataset $\Dcal$. %

Our setup can be applied to both settings where edits are performed by an arbitrary set of people or when they are performed by a single person or a particular group. In the first case, we need to learn general preferences, whereas the latter is a \emph{personalization setting} since we need to learn the preferences of a particular person or group.

\begin{protocol}[t]
\caption{Finetuning LLMs from User Edits. An algorithm needs to implement the lines in \tofill{brown}.}
\label{proto:learning-from-edits}
\begin{algorithmic}[1]
\Require Given a dataset $\Dcal = \{(x_i, y_i, y'_i)\}^n_{i=1}$ of user edits. Also, given a policy class $\Pi$.
\State \tofill{Initialize agent using $\Dcal$} \mycomment{Offline learning phase\hspace*{1.7cm}}\label{line:proto-offline}
\For{$t=1, 2, 3, \cdots, T$} 
\State World presents context $x_t$ \label{line:proto-context}
\State \tofill{Agent generates a response $y_t$} \hspace*{8em}%
        \rlap{\smash{$\left.\begin{array}{@{}c@{}}\\{}\\{}\\{}\\{}\end{array}\color{blue}\right\}%
          \color{blue}\begin{tabular}{l}// Online learning and evaluation phase\end{tabular}$}}\label{line:proto-response}
\State User edits the response to $y'_t$ \label{line:proto-edit} 
\State Agent receives a cost for edits $c_t = \editdist(y_t, y'_t)$ \label{line:proto-cost} 
\EndFor 
\State Return $\sum_{t=1}^T c_t$ \label{line:proto-evaluation}
\end{algorithmic}
\end{protocol}

We define a few useful concepts to enable us to state our formal objective. We first define the expected cost of a response $y$ in context $x$ as $c(x, y) = \EE_{y' \sim q(\cdot \mid x, y)}\sbr{\editdist(y, y')}$. Given a policy $\pi \in \Pi$, we then define its expected cost as $J(\pi) = \EE_{x \sim \rho, y \sim \pi(\cdot \mid x)}\sbr{c(x, y)}$. We also define a $\beta$-KL regularized objective as:
\begin{equation}\label{eqn:definition_beta_regularized_objective}
    J_\beta(\pi) = \EE_{x \sim \rho, y \sim \pi(\cdot \mid x)}\sbr{c(x, y) + \beta \log\frac{\pi(y \mid x)}{\piref(y \mid x)}}.
\end{equation}
It is well-known that optimal solution for objective in~\pref{eqn:definition_beta_regularized_objective} is given by $\pi_\star^\beta( y | x) = \frac{\piref( y'|x )}{Z(x)}\exp\left(\frac{-c(x,y)}{\beta} \right)\piref( y|x )$ where $Z(x) = \sum_{y' \in \Ycal} \exp\left(\frac{-c(x,y')}{\beta} \right)$~\cite{rafailov2024direct}. We assume $\pi_\star^\beta \in \Pi$ for the given $\beta$ and following RLHF literature~\cite{ouyang2022training,rafailov2024direct}, we compete with this KL-regularized optimal policy. Formally, we define the sub-optimality ($\SubOpt$) of a policy $\pi$ and the regret ($\Reg_T$) of an agent during the online learning phase as:
\vspace{-.3cm}
\begin{equation}\label{eqn:subopt}
    \SubOpt(\pi) = J_\beta(\pi) - J_\beta(\pi_\star^\beta), \quad \Reg_T = \sum_{t=1}^T \SubOpt\left(\pi_t\right).
\end{equation}
where $\pi_t$ is the policy played by the agent in the $t^{th}$ episode. We also define $D(\pi, \pi') = \EE_{x \sim \rho}\sbr{\TV{\pi(\cdot \mid x) - \pi'(\cdot \mid x)}}$ as the expected total-variation distance between two policies $\pi$ and $\pi'$. Our goal is to find an interactive learning algorithm that learns from the offline dataset $\Dcal$ and minimizes the regret $\Reg_T$ during the online learning phase with high probability.\looseness=-1 %

\section{Principled Learning from User-Edits}
\label{sec:principled}

There are two stages of learning in~\pref{proto:learning-from-edits} offline and online. We discuss learning in these two stages below where one has access to a moderately large dataset of user edits and a much more limited number of online learning episodes.

\subsection{Offline Learning from User-Edits}

Our learning protocol (\pref{proto:learning-from-edits}) contains multiple feedback sources that are typically studied separately in the ML literature. We discuss learning algorithms for each feedback source below.

\noindent\textbf{A. Learning from User Edit Supervision.} User edits $y'$ provide a reasonable sample of the desired behavior. Therefore, one can directly learn a policy $\suppi$ by fine-tuning on the user-edits:
\vspace{-.2cm}
\begin{equation}\label{eqn:sup-policy}
    \suppi = \arg\max_{\pi \in \Pi} \suploss(\pi), \quad \mbox{where} \quad \suploss(\pi) = \frac{1}{n} \sum_{i=1}^n \log \pi(y'_i \mid x_i).
\end{equation}
\noindent\textbf{B. Learning from Preferences.} It is possible that users only partially improve the agent's response $y$ when editing it to $y'$ and that the optimal response is still far from $y'$. In this case, fine-tuning on $y'$ can lead to sub-optimal behavior. In contrast, one can use the observation that as users edited $y$ to $y'$, this implies that $y'$ is preferred over $y$ and we can use this to perform learning from preferences. We can use any preference-learning approach with dataset $\{(x_i, y_i, y'_i)\}_{i=1}^n$ where $y'_i \succeq y_i$. For example, we can perform Direct Preference Optimization (DPO)~\citep{rafailov2024direct} to learn policy $\dpopi$:
\begin{align}
    \dpopi &= \arg\max_{\pi \in \Pi} \dpoloss(\pi), \quad \mbox{where} \label{eqn:dpo-policy}\\
    \dpoloss(\pi) &= \frac{1}{n} \sum_{i=1}^n \log \sigma\rbr{\beta \log \frac{\pi(y'_i \mid x_i)}{\piref(y'_i \mid x_i)} - \beta \log \frac{\pi(y_i \mid x_i)}{\piref(y_i \mid x_i)}}.\nonumber
\end{align}
We emphasize an important difference between the distribution over preferences in~\pref{eqn:dpo-policy} and the typical RLHF literature~\cite{christiano2017deep,bai2022training,ouyang2022training}, where the preference pair $(y, y')$ is typically generated by independently sampling from the reference policy. In contrast, in our setting only one response is generated from the reference policy whereas the other is generated by user edits. This distributional change impacts both the theoretical analysis and empirical performance.

\noindent\textbf{C. Learning from Cost.} We observe a cost $c_i=\editdist(y_i, y'_i)$ for every datapoint which is a sampled from our cost function $c(x, y) = \EE_{y' \sim q(\cdot \mid x, y)}\sbr{\editdist(y, y')}$. We can, therefore, use this to train a cost model which can be used to perform reinforcement learning. Formally, given a cost model family $\Fcal$ we first learn a cost model via square-loss regression:
\begin{equation}
    \hatcost = \arg\min_{f \in \Fcal} \frac{1}{n} \sum_{i=1}^n \rbr{f(x_i, y_i) - c_i}^2
\end{equation}
A direct empirical approach will be to use this cost function and perform RL on an empirical distribution over the contexts in $\Dcal$. However, $\hatcost$ need not be well-calibrated for all responses. Motivated by this consideration, we define a pessimistic cost function, $\barcost(x,y) = \max_{f \in \tilde{\Fcal} } f(x,y)$ where the set of functions $\tilde \Fcal = \left\{ f \in \Fcal \text{ s.t. }  \sum_{i=1}^n \left( f(x,y) -  \hat{f}(x,y) \right) \leq  \gamma(\Fcal, \delta)    \right\}$ for $\gamma(\Fcal, \delta) = \mathcal{O}\left( \log \left(\frac{|\Fcal|}{\delta}\right)  \right)$ is a confidence set of cost functions that agree with the historical data. We then train the policy $\costpi$ to optimize this cost under KL-constraints on our input contexts.
\vspace{-.1cm}
\begin{equation}\label{eqn:pessimism-rl}
    \costpi = \arg\min_{\pi \in \Pi} \EE_{i \sim \unf([n]), y \sim \pi(\cdot \mid x_i)}\sbr{\barcost(x_i, y) + \beta \log \frac{\pi(y\mid x_i)}{\piref(y \mid x_i)}}
\end{equation}
While computing the $\bar{f}$ is computationally impractical in the general setting, we neverthless use this pessimistic RL algorithm in~\pref{eqn:pessimism-rl} for our theoretical analysis.

\noindent\textbf{Early-Ensembling of Losses.} As we will demonstrate later, the aforementioned three approaches have different trade-offs. Therefore, it might be advantageous to learn a robust policy by optimizing jointly over their losses. We call such an approach an \emph{early ensembling}. For example, it is quite common in RLHF literature to add the supervised learning loss to the preference learning loss~\cite{pang2024iterative}:
\begin{equation}\label{eqn:early-ensemble}
    \efpi = \arg\min_{\pi \in \Pi} \rbr{\dpoloss(\pi) + \lambda \suploss(\pi)}.
\end{equation}
We later empirically investigate the form of early ensembling described in~\pref{eqn:early-ensemble}.

\subsection{Adaptation During the Online Learning Case}

During the online learning phase, we encounter a small number of episodes where the main goal is to evaluate the agent. In practice, this will be the deployment phase where the agent interacts with real users. However, this also provides an opportunity to perform any additional adaptation. 

\begin{algorithm}[t]
\caption{$\late$ of Policies using Upper Confidence Bound.}
\label{alg:late-ensemble}
\begin{algorithmic}[1]
\Require Given a dataset $\Dcal = \{(x_i, y_i, y'_i)\}^n_{i=1}$ of user edits. Also, given a policy class $\Pi$.
\State Create a list of policies $\Psi=\sbr{\suppi, \dpopi, \costpi, \efpi}$ by fine-tuning on $\Dcal$\hspace*{0.8cm}
\textcolor{blue}{//~Offline learning} \label{line:policies}
\State Define total cost $C(\pi) = 0$ and count $N(\pi) = 0$ for each policy $\pi \in \Psi$
\For{$t=1, 2, 3, \cdots, T$} 
\State World presents context $x_t$
\State If $t \le |\Psi|$, then $\pi_t = \Psi[t]$ else $\pi_t = \arg\min_{\pi \in \Psi} \cbr{\frac{C(\pi)}{N(\pi)} - \alpha \sqrt{\frac{\log(t)}{N(\pi)}}}$
\State $y_t \sim \pi_t(\cdot \mid x_t)$
\hspace*{21.7em}%
        \rlap{\smash{$\left.\begin{array}{@{}c@{}}\\{}\\{}\\{}\\{}\end{array}\color{blue}\right\}%
          \color{blue}\begin{tabular}{l}// Online learning \end{tabular}$}}\label{line:proto-response}
\State User edits the response to $y'_t$
\State Agent receives a cost for edits $c_t = \editdist(y_t, y'_t)$
\State Update cost and counts: $C(\pi_t) = C(\pi_t) + c_t$; $N(\pi_t) = N(\pi_t) + 1$.
\EndFor 
\label{line:proto-evaluation}
\vspace{-.1cm}
\end{algorithmic}
\end{algorithm}

A key challenge of offline approaches is that different methods have different trade-offs. %
Early-ensembling approaches may not be able to effectively balance between these trade-offs. This is especially true, if the user distribution at test time is different from train time, which can happen in practice. Unfortunately, the small number of online episodes makes it challenging to do any effective fine-tuning. Motivated by these two considerations, we employ a very simple approach of training a set of policies using the different offline learning methods. During the online learning phase, we then run a bandit algorithm to select which of the learned policy to use for generating the response. We use the user-edit cost as input to the bandit algorithm. We call this approach a \emph{late-ensembling of policies}. \pref{alg:late-ensemble} gives an example using the Upper Confidence Bound (UCB) approach~\cite{auer2002finite,lai1985asymptotically,bubeck2012regret}, however, other bandit approaches can also be used.\looseness=-1

\noindent\textbf{Pure Online Learning Setting.} In~\pref{app:theory}, we consider a pure online learning variant of~\pref{proto:learning-from-edits} with a large $T$ and without an offline learning phase, and derive a no-regret algorithm for it.\looseness=-1

\section{Theoretical Analysis of Learning from Edits}
\label{sec:theory}

In the last decade, significant theoretical understanding has been achieved in reinforcement learning where the feedback is reward, and imitation learning where the feedback is action. Can we achieve the same for learning from edits? We initiate the theoretical understanding of learning from edits to provide a motivation for our algorithm design and predict experimental phenomena. 

\subsection{Theoretical Setup and Assumptions.} 

We start by stating our modeling assumption for the user distribution in~\pref{assum:user-dist}. We assume that the user is more likely to edit a response $y$ to $y'$ instead of the reverse depending upon how much more likely is $y'$ under the optimal policy $\pi^\star$ instead of $y$. This is stated formally in~\pref{eqn:balance-eqn} which we call the \emph{balance equation}. This equation can be viewed as serving a similar purpose as the Bradley-Terry distribution in RLHF~\cite{bradley1952rank,christiano2017deep} by providing a smooth characterization of user behavior. We also assume that the user distribution has a non-zero probability of generating the optimal response for any input response $y$. This probability can be small but does not need to scale with the size of the generation space $|\Ycal|$.

\begin{assumption}[User Distribution]\label{assum:user-dist} There exists a (known) $\beta > 0$ such that the user distribution satisfies the \emph{balance equation} below:
\vspace{-.2cm}
\begin{equation}\label{eqn:balance-eqn}
        \forall x \in \Xcal, y, y' \in \Ycal, \quad \frac{q(y' \mid x, y)}{q(y \mid x, y')} = \frac{\pi_\star^\beta(y' \mid x)}{\pi_\star^\beta(y \mid x)}.
    \end{equation}
Further, for any $x \in \Xcal$, the user has at least an $\minp(x) >0$ (possibly user dependent) probability of generating the optimal response $y^\star = \arg\max_{y \in \Ycal} \pi_\star^\beta(y \mid x)$:
\begin{equation}\label{eqn:user-optimality}
    \forall x \in \Xcal, y \in \Ycal, \quad q(y^\star \mid x, y) \ge \minp(x)
\end{equation}
\end{assumption}

In Appendix~\ref{section::appendix_balance_equations}, we provide an example where this assumption holds. 

\paragraph{From Balance Equation to Bradley-Terry Distribution.} An important consequence of~\pref{assum:user-dist} is that the preference distribution in~\pref{eqn:dpo-policy} satisfies the Bradley-Terry assumption. To see this, we first realize that our preference pairs will contain two responses $(y, y')$ given a context $x$ in precisely two situations: either we generate $y \sim \piref(\cdot \mid x)$ and it is edited to $y' \sim q(\cdot \mid x, y)$, or we generate $y' \sim \piref(\cdot \mid x)$ and it is edited to $y \sim q(\cdot \mid x, y)$. The probability that we prefer $y'$ over $y$ ($y' \succeq y$) is then given by the probability of the first situation normalized by the joint probability, i.e.,
\begin{align*}
    \PP(y' \succeq y \mid x, y, y') &= \frac{\piref(y \mid x)q(y' \mid x, y)}{\piref(y \mid x)q(y' \mid x, y) + \piref(y' \mid x)q(y \mid x, y')},\nonumber\\
    &= \frac{\piref(y \mid x)\pi^\beta_\star(y' \mid x)}{\piref(y \mid x)\pi^\beta_\star(y' \mid x) + \piref(y' \mid x)\pi^\beta_\star(y \mid x)}, \quad \mbox{(from \pref{eqn:balance-eqn})}\nonumber\\
    &= \sigma\rbr{c(x, y) - c(x, y')}, \hspace*{1cm} \mbox{(using $\pi^\star_\beta(\tilde{y} \mid x) = \frac{\piref(y \mid x)}{Z(x)}\exp(-\frac{c(x, y)}{\beta})$}.
\end{align*}
This justifies using DPOs even though the joint distribution over the preference pairs $(y, y')$ is different than IID sampling from $\piref$ which is typically studied in the RLHF literature.

As we are working with function approximations, we assume policy realizability, i.e., that our function class is expressive enough to contain the desired policies. This is a standard assumption in theoretical analysis of interactive learning algorithms~\citep{huang2024correcting,xie2024exploratory}.
\begin{assumption}[Realizability]\label{assum:realizability} We assume that $\pi_\star^\beta \in \Pi$ and $q\circ \pi \in \Pi$ where $q\circ \pi$ is the policy defined as $q\circ \pi(y|x) = \sum_{y' \in \Ycal} q(y|x,y') \pi(y'|x)$.  \end{assumption}

Our first result establishes~\pref{assum:user-dist} implies the edits induce a contraction property on the user distribution,
\begin{restatable}{lemma}{contractionpropertymain}\label{lem:contraction_property}
For any $\pi \in \Pi$, the user distribution satisfies the following contraction property:\looseness=-1
\begin{align*}
\forall x \in \Xcal, \qquad        \TV{q \circ \pi(Y \mid x) - \pi_\star^\beta(Y \mid x)} \le \rbr{1 - \minp(x)}\TV{\pi(Y \mid x) - \pi_\star^\beta(Y \mid x)}
\end{align*}
\end{restatable}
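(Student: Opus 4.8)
The plan is to recognize that the balance equation in~\pref{eqn:balance-eqn} is precisely the detailed-balance (reversibility) condition for the Markov kernel $q(\cdot \mid x, \cdot)$ on the response space $\Ycal$, with $\pi_\star^\beta(\cdot \mid x)$ as its reversing measure. The first step is therefore to verify that $\pi_\star^\beta$ is \emph{stationary} under $q$, i.e.\ $q \circ \pi_\star^\beta(\cdot \mid x) = \pi_\star^\beta(\cdot \mid x)$. This follows immediately by summing the detailed-balance identity $q(y \mid x, y')\pi_\star^\beta(y' \mid x) = q(y' \mid x, y)\pi_\star^\beta(y \mid x)$ over $y'$ and using that $q(\cdot \mid x, y)$ is a probability distribution. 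With stationarity in hand, the target inequality becomes $\TV{q \circ \pi(\cdot \mid x) - q \circ \pi_\star^\beta(\cdot \mid x)} \le (1 - \minp(x))\TV{\pi(\cdot \mid x) - \pi_\star^\beta(\cdot \mid x)}$, i.e.\ a contraction statement for the kernel $q$ around its stationary point.

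To obtain the contraction factor, I would exploit the minorization guaranteed by~\pref{eqn:user-optimality}: since $q(y^\star \mid x, y) \ge \minp(x)$ uniformly in the starting response $y$, the kernel splits as a mixture $q(\cdot \mid x, y) = \minp(x)\,\delta_{y^\star}(\cdot) + (1 - \minp(x))\,\tilde q(\cdot \mid x, y)$, where $\tilde q(\cdot \mid x, y) = \frac{1}{1-\minp(x)}\rbr{q(\cdot \mid x, y) - \minp(x)\,\delta_{y^\star}(\cdot)}$ is again a valid Markov kernel (nonnegativity uses exactly the minorization bound, and the masses sum to one). The crucial observation is that the point-mass component $\minp(x)\,\delta_{y^\star}$ does not depend on $y$, so when both $\pi$ and $\pi_\star^\beta$ are pushed through $q$ and subtracted this common component cancels, leaving $q\circ\pi - q\circ\pi_\star^\beta = (1-\minp(x))(\tilde q \circ \pi - \tilde q \circ \pi_\star^\beta)$.

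The final step is to bound the remaining difference using the fact that any Markov kernel is non-expansive in total variation: $\TV{\tilde q \circ \pi - \tilde q \circ \pi_\star^\beta} \le \TV{\pi - \pi_\star^\beta}$. This is the data-processing inequality for TV and follows from a one-line triangle-inequality argument on the $\ell_1$ norm of the signed measure $(\pi - \pi_\star^\beta)$ pushed through $\tilde q$. Combining the cancellation with non-expansiveness yields the claimed factor $(1 - \minp(x))$, and substituting $q \circ \pi_\star^\beta = \pi_\star^\beta$ recovers the statement as written.

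I expect the main subtlety — rather than a genuine obstacle — to be the clean identification of the balance equation as detailed balance and the resulting stationarity of $\pi_\star^\beta$; once that reframing is made, the contraction is a textbook Doeblin-minorization argument. A minor edge case to handle is $\minp(x) = 1$, where $\tilde q$ is undefined, but there the kernel collapses to $\delta_{y^\star}$, forcing $\pi_\star^\beta = \delta_{y^\star}$ and making both sides of the inequality vanish.
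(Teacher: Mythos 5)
Your proposal is correct and takes essentially the same approach as the paper's proof: the paper likewise first establishes stationarity $q \circ \pi_\star^\beta = \pi_\star^\beta$ from the balance equation (its steady-state lemma), and its weights $w(y', y; x) = q(y' \mid x, y) - \minp(x)\,\mathbf{1}\{y' = y^\star\}$ are precisely the unnormalized form of your residual kernel $(1-\minp(x))\,\tilde{q}$. The paper's concluding step---cancelling the common point mass at $y^\star$ and applying the triangle inequality with the row-sum identity $\sum_{y'} w(y', y; x) = 1 - \minp(x)$---is the same computation as your Doeblin-minorization-plus-non-expansiveness argument, just written out entrywise rather than invoked as a data-processing inequality.
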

The proof of~\pref{lem:contraction_property} can be found in~\ref{app:theory}. The contraction property implied by~\pref{lem:contraction_property} indicates that an iterative application of the user distribution approaches a fixed point.

Finally, we make one final assumption for stating results for DPO. We assume that the data distribution satisfies the following concentrability assumption. Similar to realizability assumptions, concentrability assumptions are typical in RL literature (see for example~\cite{huang2024correcting,xie2024exploratory}).

\begin{restatable}{assumption}{assumptionprefconcentrability}[Preference Concentrability] \label{assum:concentrability}For any $\pi \in \Pi$ we assume:
\begin{equation*}
    \frac{\EE_{(x, \yt, \yt') \sim Q_\pi}\sbr{\abr{\beta \log \frac{\pi(\yt' \mid x)}{\pi_\star^\beta(\yt' \mid x)} - \beta \log \frac{\pi(\yt \mid x)}{\pi_\star^\beta(\yt \mid x)}}}}{\EE_{(x, \yt, \yt') \sim Q_{\piref}}\sbr{\abr{\beta \log \frac{\pi(\yt' \mid x)}{\pi_\star^\beta(\yt' \mid x)} - \beta \log \frac{\pi(\yt \mid x)}{\pi_\star^\beta(\yt \mid x)}}}} \le C_{\mathrm{PREF}},
\end{equation*}
    where $Q_\pi(x, \yt, \yt') = \frac{1}{2}\rho(x)\rbr{\pi(\yt' \mid x)\pi^\star(\yt \mid x) + \pi^\star(\yt' \mid x)\pi(\yt \mid x)}$. %
\end{restatable}
This assumption states that our data distribution has enough coverage to allow us to evaluate different policies accurately. In practice, we can expect $C_{\mathrm{PREF}}$ to be small when $\Pi$ is a small constrained class around $\piref$. Alternatively, we can consider pessimistic-versions of preference learning methods~\cite{huangcorrecting} which avoids paying for covering all policies.

\subsection{Theoretical Results.}

We state the results for our different algorithms below and defer their full proof and analysis to~\pref{app:theory}. Our first result is a suboptimality bound for the SFT algorithm.

\begin{restatable}[SFT Result]{theorem}{theoremsftmainsimple}\label{thm:sft-main-simple} Let $\epsilon > 0$ be a sub-optimality target. Under~\pref{assum:user-dist} and~\pref{assum:realizability} to achieve $\SubOpt(\suppi) \leq \epsilon +  \mathcal{O}\left(\min\left(\eta_{\max} \cdot D(\piref, \pi_\star^\beta), \bar{\eta}_{\max} \cdot D^{1/2}(\piref, \pi_\star^\beta) \right)\right)$ with probability at least $1-\delta$ it is enough to set $n \geq \Omega\left( \frac{\log\left(  |\Pi|/\delta\right)}{\epsilon^2} \right)$ where $\eta_{\max} = 1-\min_x \minp(x) $ and $\bar{\eta}_{\max} =  \sqrt{\EE_{x \sim \rho}\sbr{(1-\minp(x))^2}}$ and $D(\piref, \pi_\star^\beta) = \EE_{x \sim \rho}\sbr{\TV{\piref(Y \mid x) - \pi_\star^\beta(Y \mid x)}}$. %
\end{restatable}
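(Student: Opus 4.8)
The plan is to pass through the optimal policy $\pi_\star^\beta$ and decompose the suboptimality into a statistical estimation error and a distributional bias term. The starting point is the exact identity for the KL-regularized objective: substituting the closed form $\pi_\star^\beta(y\mid x)=\piref(y\mid x)e^{-c(x,y)/\beta}/Z(x)$ into $J_\beta$ cancels the partition-function terms and yields $\SubOpt(\pi)=\beta\,\EE_{x\sim\rho}\sbr{\mathrm{KL}\rbr{\pi(\cdot\mid x)\,\|\,\pi_\star^\beta(\cdot\mid x)}}$. I would then reduce this to an expected total-variation quantity, bounding $\SubOpt(\suppi)\lesssim D(\suppi,\pi_\star^\beta)$, and split by the triangle inequality for $D$ (which is an average of a metric) as
\begin{equation*}
D(\suppi,\pi_\star^\beta)\;\le\; \underbrace{D(\suppi,\,q\circ\piref)}_{\textrm{estimation}}\;+\;\underbrace{D(q\circ\piref,\,\pi_\star^\beta)}_{\textrm{bias}}.
\end{equation*}
The observation enabling this split is that SFT is maximum likelihood whose population target is exactly the edit marginal $q\circ\piref$, which is realizable by \pref{assum:realizability}.

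For the estimation term, the data $(x_i,y'_i)$ are i.i.d.\ with $y'_i$ distributed as $q\circ\piref(\cdot\mid x_i)$ after marginalizing $y_i$, so a standard finite-class conditional MLE bound gives $\EE_{x}\sbr{H^2(\suppi(\cdot\mid x),\,q\circ\piref(\cdot\mid x))}=\mathcal{O}(\log(|\Pi|/\delta)/n)$ with probability $1-\delta$; converting Hellinger to total variation and applying Jensen yields $D(\suppi,q\circ\piref)=\mathcal{O}(\sqrt{\log(|\Pi|/\delta)/n})\le\epsilon$ under the stated sample size. For the bias term, I would invoke the contraction property (\pref{lem:contraction_property}) with $\pi=\piref$ to get $\TV{q\circ\piref(\cdot\mid x)-\pi_\star^\beta(\cdot\mid x)}\le(1-\minp(x))\TV{\piref(\cdot\mid x)-\pi_\star^\beta(\cdot\mid x)}$, so that $D(q\circ\piref,\pi_\star^\beta)\le\EE_x\sbr{(1-\minp(x))\TV{\piref(\cdot\mid x)-\pi_\star^\beta(\cdot\mid x)}}$. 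The two terms inside the final $\min$ are then two ways of controlling this average: pulling out $\max_x(1-\minp(x))=\eta_{\max}$ gives the linear term $\eta_{\max}D(\piref,\pi_\star^\beta)$, while Cauchy--Schwarz together with $\TV{\cdot}^2\le\TV{\cdot}$ (total variation lies in $[0,1]$) gives $\bar{\eta}_{\max}\,D^{1/2}(\piref,\pi_\star^\beta)$.

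The step I expect to be the main obstacle is the reduction from the KL-based suboptimality to the total-variation distance $D(\suppi,\pi_\star^\beta)$, since $\mathrm{KL}$ is not controlled by $\TV$ in general (the MLE and contraction arguments only produce total-variation/Hellinger control). Here I would exploit that a bounded edit cost $c\in[0,\cmax]$ forces $\pi_\star^\beta/\piref\in[e^{-\cmax/\beta},e^{\cmax/\beta}]$, so that on a (realizable, near-$\piref$) class $\Pi$ the density ratios $\suppi/\pi_\star^\beta$ are bounded by some $M$; a reverse-Pinsker inequality of the form $\mathrm{KL}(\pi\|\pi_\star^\beta)\le 2(M-1)\TV{\pi-\pi_\star^\beta}$ then yields $\SubOpt(\suppi)=\mathcal{O}(D(\suppi,\pi_\star^\beta))$ with the $\beta,\cmax$-dependent constant absorbed into $\mathcal{O}(\cdot)$. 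Establishing this ratio boundedness (equivalently, controlling the entropy term $\EE_x[\mathrm{KL}(\suppi\|\piref)-\mathrm{KL}(\pi_\star^\beta\|\piref)]$ hidden inside $\SubOpt$) is the delicate part; once it is in hand, the three bounds combine and $\epsilon$ is rescaled to absorb the leading constant.
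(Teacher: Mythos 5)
Your proposal is correct, and its statistical core is exactly the paper's proof of the full version of this result (\pref{thm:sft-main}): SFT's population target is the edit marginal $q\circ\piref$, realizable by \pref{assum:realizability}; a finite-class MLE guarantee gives $D(\suppi,\,q\circ\piref)\lesssim\sqrt{\log(|\Pi|/\delta)/n}$; the triangle inequality plus the contraction property (\pref{lem:contraction_property}) applied to $\piref$ produces the bias term $\EE_{x\sim\rho}\left[(1-\minp(x))\TV{\piref(\cdot\mid x)-\pi_\star^\beta(\cdot\mid x)}\right]$; and the $\min$ arises exactly as you describe, by either pulling out $\eta_{\max}$ or applying Cauchy--Schwarz together with $\TV{\cdot}\in[0,1]$. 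Where you genuinely diverge is the conversion from total variation to regularized suboptimality. You use the identity $\SubOpt(\pi)=\beta\,\EE_{x\sim\rho}\left[\mathrm{KL}\left(\pi(\cdot\mid x)\,\|\,\pi_\star^\beta(\cdot\mid x)\right)\right]$ followed by a reverse-Pinsker inequality under a density-ratio bound $M$; the paper instead proves \pref{lem:tv-subopt}, which splits $J_\beta$ into the unregularized cost gap (at most $2\cmax\epsilon$ by \pref{lem:tv-unreg-subopt}) and the KL terms, drops a non-negative $\mathrm{KL}(\pi_\star^\beta\,\|\,\pi)$, and bounds what remains by $2\Vmax\epsilon$. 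Both routes require precisely the extra hypothesis you flagged as the crux: bounded density ratios over $\Pi$, which the simplified main-text statement omits but the appendix version lists explicitly as \pref{assum:bounded-density-ratio}; your diagnosis that this step cannot go through without such an assumption matches the paper. The quantitative difference is worth noting: the paper's additive treatment pays a factor $2(\cmax+\Vmax)$, linear in the log density-ratio bound, whereas reverse Pinsker (via the $\chi^2$ upper bound on KL) pays the ratio itself, roughly $\beta\left(e^{(\cmax+\Vmax)/\beta}-1\right)$, which is exponentially worse when $(\cmax+\Vmax)/\beta$ is large. Since the theorem hides such problem-dependent constants inside $\mathcal{O}(\cdot)$ and $\Omega(\cdot)$, your argument still establishes the stated result, but the paper's conversion lemma is the sharper one and is the step you should adopt if you want the sample complexity to depend only polynomially on $\cmax+\Vmax$.
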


We state the suboptimality bound for the offline DPO algorithm that treats the edit data set $D = \{(x_i, y_i, y_i')\}_{i=1}^n$ as a preference data set in~\pref{thm:dpo-main-simple}. 

\begin{restatable}[DPO Result]{theorem}{theoremdposimplified}\label{thm:dpo-main-simple} Let $\epsilon > 0$ be a sub-optimality target. Under~\pref{assum:user-dist},~\pref{assum:realizability}, and~\pref{assum:concentrability} to achieve $   \SubOpt(\dpopi)   \leq \epsilon $ with probability at least $1-\delta$ it is sufficient to set $n \geq \Omega\left( \frac{C_{\mathrm{PREF}}\cdot \log\left( |\Pi|/\delta \right)}{\beta^2\left( \sigma'( - V_{\max})\right)^2\epsilon^2}\right)$ where $\Vmax > 0$ satisfies $\max_{\pi \in \Pi}\max_{(x, y) \in \Xcal \times \Ycal}  \beta \log \frac{\pi(y' \mid x)}{\piref(y' \mid x)} \leq \frac{\Vmax}{2}$. 
\end{restatable}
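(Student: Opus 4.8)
The plan is to treat $\dpopi$ from \pref{eqn:dpo-policy} as the maximum-likelihood estimator of a Bradley--Terry model and to convert the resulting probability-space guarantee into a bound on $\SubOpt(\dpopi)$. The excerpt already supplies the key modeling fact: under \pref{assum:user-dist} the label obeys $\PP(y'\succeq y\mid x,y,y')=\sigma\rbr{c(x,y)-c(x,y')}$, and since $\pi_\star^\beta(y\mid x)\propto\piref(y\mid x)\exp(-c(x,y)/\beta)$ one has $c(x,y)-c(x,y')=\beta\log\frac{\pi_\star^\beta(y'\mid x)}{\piref(y'\mid x)}-\beta\log\frac{\pi_\star^\beta(y\mid x)}{\piref(y\mid x)}$. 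Thus $\dpoloss$ is exactly the Bradley--Terry log-likelihood with implicit reward $r_\pi(x,y)=\beta\log\frac{\pi(y\mid x)}{\piref(y\mid x)}$, and by \pref{assum:realizability} the truth $r_{\pi_\star^\beta}$ is realizable in the induced reward class. Writing $h_\pi(x,y,y')=r_\pi(x,y')-r_\pi(x,y)$, the first step is a standard finite-class MLE deviation bound for log-loss, giving, with probability $1-\delta$, a rate-$\log(|\Pi|/\delta)/n$ bound on the expected squared Hellinger distance between the predicted and true Bernoulli laws, and hence (via the total-variation/Hellinger comparison for Bernoulli distributions) $\EE\sbr{\rbr{\sigma(h_{\dpopi})-\sigma(h_{\pi_\star^\beta})}^2}\lesssim \log(|\Pi|/\delta)/n$ under the offline pair distribution.

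Next I would invert the sigmoid to pass from probabilities back to implicit rewards. On the range controlled by $\Vmax$, so that both $h_{\dpopi}$ and $h_{\pi_\star^\beta}$ lie in $[-\Vmax,\Vmax]$, the mean value theorem gives $\abr{\sigma(a)-\sigma(b)}\ge \sigma'(-\Vmax)\abr{a-b}$, because $\sigma'(z)=\sigma(z)(1-\sigma(z))$ is symmetric about $0$ and minimized at the endpoints of the interval. This converts the previous display into $\EE\sbr{\rbr{h_{\dpopi}-h_{\pi_\star^\beta}}^2}\lesssim \log(|\Pi|/\delta)/\rbr{\sigma'(-\Vmax)^2 n}$, which is exactly where the $\sigma'(-\Vmax)$ factor in the statement originates.

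The central step is to relate this implicit-reward error to $\SubOpt(\dpopi)$. I would first prove the identity $\SubOpt(\pi)=\beta\,\EE_{x\sim\rho}\sbr{\mathrm{KL}\rbr{\pi(\cdot\mid x)\,\|\,\pi_\star^\beta(\cdot\mid x)}}$, obtained by substituting $c(x,y)=-\beta\log\frac{\pi_\star^\beta(y\mid x)}{\piref(y\mid x)}-\beta\log Z(x)$ into \pref{eqn:definition_beta_regularized_objective} and cancelling the $\log Z$ term against $J_\beta(\pi_\star^\beta)$. Setting $g(x,y)=\beta\log\frac{\pi(y\mid x)}{\pi_\star^\beta(y\mid x)}$ so that $\beta\,\mathrm{KL}=\EE_{y\sim\pi}\sbr{g(x,y)}$, and using $\EE_{y'\sim\pi_\star^\beta}\sbr{g(x,y')}=-\beta\,\mathrm{KL}(\pi_\star^\beta\|\pi)\le 0$, I can upper bound the KL term by $\EE_{y\sim\pi,\,y'\sim\pi_\star^\beta}\sbr{\abr{g(x,y')-g(x,y)}}$; since $g(x,y')-g(x,y)=h_\pi-h_{\pi_\star^\beta}$ and this is symmetric in the two responses, it equals $\EE_{Q_\pi}\sbr{\abr{h_\pi-h_{\pi_\star^\beta}}}$, i.e.\ the numerator of \pref{assum:concentrability}. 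Applying \pref{assum:concentrability} with $\pi=\dpopi$ transfers this from $Q_{\dpopi}$ to the offline distribution $Q_{\piref}$ at the cost of a factor $C_{\mathrm{PREF}}$, and Cauchy--Schwarz together with the second step bounds the resulting expectation by the MLE $L^2$ guarantee. Solving $\SubOpt(\dpopi)\le\epsilon$ for $n$ then yields the claimed sample complexity.

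I expect the main obstacle to be precisely this final conversion. The sigmoid inversion is valid only on a bounded domain, so the $\Vmax$ boundedness must be used to keep both implicit rewards in range; otherwise $\sigma'(-\Vmax)$ cannot be pulled out as a uniform lower bound. Equally delicate is the distribution mismatch: the MLE error is controlled only under the offline pair distribution, whereas $\SubOpt$ is on-policy under $\dpopi$, and bridging the two is exactly the purpose of \pref{assum:concentrability}. Pinning down the exact polynomial dependence on $C_{\mathrm{PREF}}$, $\beta$, and $\sigma'(-\Vmax)$ in the statement---in particular whether concentrability is applied to the $L^1$ deviation or its square, and how the $\beta$ inside the implicit reward interacts with the $\beta\,\mathrm{KL}$ identity---requires careful bookkeeping of the order of each inequality, and this is where the final factors are fixed.
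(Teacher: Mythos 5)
Your proposal is correct, and it follows the paper's skeleton for the first three steps---treating DPO as MLE over the induced Bradley--Terry classifier class (realizable under \pref{assum:user-dist} and \pref{assum:realizability}), inverting the sigmoid on the range fixed by $\Vmax$ to convert probability-space error into implicit-reward error, and invoking \pref{assum:concentrability} to pass from the offline pair distribution to the on-policy one---but your endgame is genuinely different. The paper lower-bounds the on-policy expected log-ratio error by $\EE_{x \sim \rho}\sbr{\TV{\dpopi(Y \mid x) - \pi_\star^\beta(Y \mid x)}}$ via the elementary inequality $\abr{\log t} \ge \frac{\abr{t-1}}{t+1}$ (\pref{lem:log-bound}), and then converts total variation into suboptimality with \pref{lem:tv-subopt}, paying an extra factor $2(\cmax + \Vmax)$. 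You instead use the exact identity $\SubOpt(\pi) = \beta\, \EE_{x\sim\rho}\sbr{\KL\rbr{\pi(\cdot \mid x) || \pi_\star^\beta(\cdot \mid x)}}$ and bound the KL by the symmetrized on-policy expectation of the absolute implicit-reward difference, which is precisely the numerator of \pref{assum:concentrability}; this bypasses both support lemmas and replaces the $(\cmax+\Vmax)$ range factor by $\beta$, which is typically much smaller, so your route is shorter and, in this factor, tighter. Two caveats. First, your sigmoid inversion still needs the two-sided bound of \pref{assum:bounded-density-ratio}: the one-sided condition quoted in the simplified statement does not bound your $h_\pi$ from below, so your approach does not actually dispense with that assumption and you should invoke it explicitly, as the paper does. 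Second, your derivation---exactly like the paper's own \pref{thm:dpo-main}---yields $C_{\mathrm{PREF}}^2$ in the sample complexity and does not produce a bare $\beta^{-2}$ factor; the first-power $C_{\mathrm{PREF}}$ and the $\beta^2$ in the denominator of the simplified statement are inconsistencies between the main text and the paper's appendix proof, not defects in your argument. (Note also that both you and the paper identify the offline pair distribution $\Dipref$ with $Q_{\piref}$ when applying concentrability; the paper asserts this identification without proof, so it is a shared step rather than a gap you introduced.)
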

The proof of~\pref{thm:dpo-main-simple} is inspired by the techniques of works such as~\cite{rosset2024direct,xie2024exploratory}. Our final result characterizes the sample complexity of learning from costs. 

\begin{restatable}[RL Result]{theorem}{theoremcostmainresult}\label{theorem::regularized_cost_optimality_guarantee_simple}
Let $\epsilon > 0$ be a sub-optimality target. Under~\pref{assum:realizability} the policy  $ \costpi$ satisfies $\SubOpt(\costpi) \leq \epsilon$ with probability at least $1-\delta$  when $n\geq \Omega\left( \frac{\log\left(  |\Pi|/\delta\right)}{\epsilon^2} + \frac{\bar{C}_\star^2\cdot d_{\mathrm{Elud}}(\Fcal)\log\left(  |\Fcal|/\delta\right)}{\epsilon^2}    \right)$ where $\bar{C}_\star^2 = \mathbb{E}_{ \piref}\left[ \left(\frac{\pi_\star^\beta(y|x)}{\piref(y|x)} \right)^2\right]$ is the policy concentrability coefficient of $\piref$ and $d_{\mathrm{Elud}}(\Fcal)$ is the Eluder dimension -a measure of statistical complexity- of the cost function class $\Fcal$ (see~\cite{russo2013eluder}). 
\end{restatable}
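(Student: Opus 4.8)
The plan is to treat the program in \pref{eqn:pessimism-rl} as a standard \emph{pessimistic offline contextual-bandit} problem, with the edit cost $c(x,y)$ playing the role of the single-step cost and $\Fcal$ the value-function class. I would prove the bound in three movements: a squared-loss regression guarantee that makes the confidence set valid, a pessimism decomposition of $\SubOpt(\costpi)$, and an Eluder-dimension control of the residual ``confidence width'', with the concentrability coefficient entering through a single change of measure.

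First I would establish the regression and confidence-set guarantee. Uniform convergence for squared-loss regression over the finite class $\Fcal$ (using that costs lie in $[0,\cmax]$ and that $c$ is realizable in $\Fcal$) gives, with probability $1-\delta/2$, that $\EE_{x\sim\rho,\,y\sim\piref}[(\hatcost(x,y)-c(x,y))^2]=O(\log(|\Fcal|/\delta)/n)$ and, crucially, that $c$ lies in the confidence set $\tilde\Fcal$ of radius $\gamma=O(\log(|\Fcal|/\delta))$. Hence every $f\in\tilde\Fcal$ has in-sample squared error $O(\gamma)$ relative to $c$, and since $c\in\tilde\Fcal$ the pessimistic cost is a pointwise over-estimate, $\barcost(x,y)\ge c(x,y)$; this ``pessimism is valid'' property is exactly what the construction of $\tilde\Fcal$ is designed to buy.

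Second I would decompose the suboptimality. Writing $\bar J_\beta(\pi)=\EE_{x\sim\rho,\,y\sim\pi}[\barcost(x,y)+\beta\log(\pi(y\mid x)/\piref(y\mid x))]$ for the population pessimistic objective and $\hat J_\beta$ for its empirical (over $x_1,\dots,x_n$) counterpart, pessimism gives $J_\beta(\costpi)\le\bar J_\beta(\costpi)$, the minimizer property of $\costpi$ gives $\hat J_\beta(\costpi)\le\hat J_\beta(\pi_\star^\beta)$ (using $\pi_\star^\beta\in\Pi$ from \pref{assum:realizability}), and $\bar J_\beta(\pi_\star^\beta)-J_\beta(\pi_\star^\beta)=\EE_{x\sim\rho,\,y\sim\pi_\star^\beta}[\barcost-c]$. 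Chaining these, and controlling the $\hat J_\beta$--$\bar J_\beta$ gap at both policies by a Hoeffding/union bound over the finite class $\Pi$ (the integrand, cost plus a bounded KL term, is bounded), yields
\[
\SubOpt(\costpi)\;\le\;O\!\left(\sqrt{\tfrac{\log(|\Pi|/\delta)}{n}}\right)\;+\;\EE_{x\sim\rho,\,y\sim\pi_\star^\beta}\big[\barcost(x,y)-c(x,y)\big],
\]
where the first term is responsible for the $\log(|\Pi|/\delta)/\epsilon^2$ requirement.

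Third --- and this is where I expect the real work to be --- I would bound the pessimism term by an expected confidence width. Since $c\in\tilde\Fcal$, we have $\barcost-c\le w(x,y):=\max_{f,f'\in\tilde\Fcal}(f-f')(x,y)$, and a single Cauchy--Schwarz change of measure from $\pi_\star^\beta$ to $\piref$ gives
\[
\EE_{\rho,\pi_\star^\beta}[\barcost-c]\;\le\;\sqrt{\EE_{\piref}\!\Big[\big(\tfrac{\pi_\star^\beta(y\mid x)}{\piref(y\mid x)}\big)^2\Big]}\cdot\sqrt{\EE_{\rho,\piref}[w^2]}\;=\;\bar{C}_\star\,\sqrt{\EE_{\rho,\piref}[w^2]},
\]
isolating the concentrability coefficient exactly as in the statement. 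The remaining factor is where the Eluder dimension enters: since $\tilde\Fcal$ is built from $n$ i.i.d.\ samples from $\rho\times\piref$ with radius $\gamma$, the Eluder ``sum-of-widths'' argument (\cite{russo2013eluder}) bounds the average squared width at the data points, and i.i.d.\ concentration transfers this to $\EE_{\rho,\piref}[w^2]=O(d_{\mathrm{Elud}}(\Fcal)\,\gamma/n)$ up to logarithmic factors. Substituting $\gamma=O(\log(|\Fcal|/\delta))$ gives the pessimism term $O\big(\bar{C}_\star\sqrt{d_{\mathrm{Elud}}(\Fcal)\log(|\Fcal|/\delta)/n}\big)$, and forcing both error terms below $\epsilon$ produces the stated $n$. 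The main obstacle is precisely this last step: converting the \emph{in-sample} confidence radius into the \emph{out-of-sample} expected squared width under $\piref$ via the Eluder dimension, while keeping the width controlled in $\ell_2(\rho\times\piref)$ \emph{before} the change of measure, so that the importance weight does not re-introduce a higher moment of the regression error.
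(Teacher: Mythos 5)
Your proposal is correct and follows essentially the same route as the paper's proof: a least-squares confidence set that makes pessimism valid ($c\in\hat{\Fcal}$, hence $\barcost \ge c$ pointwise), an empirical-optimality-plus-Hoeffding decomposition over $\Pi$ contributing the $\sqrt{\log(|\Pi|/\delta)/n}$ term, and a Cauchy--Schwarz change of measure introducing $\bar{C}_\star$ followed by an Eluder-dimension argument combined with martingale (Freedman) concentration to transfer the in-sample confidence width to $\EE_{\piref}\bigl[(\barcost - c)^2\bigr]$. The only cosmetic differences are that you compare directly against $\pi_\star^\beta$ in the empirical objective, whereas the paper routes through the pessimistic optimum $\pi_\star^{\bar{f},\beta}$ and restricts the optimization to a bounded-value policy set $\Pi(\bar{f})$ precisely so that the Hoeffding step applies without any extra boundedness assumption; neither change affects the substance of the argument.
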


\noindent\textbf{Discussion of Trade-Offs.} Theorems~\ref{thm:sft-main-simple},~\ref{thm:dpo-main-simple}, and~\ref{theorem::regularized_cost_optimality_guarantee_simple} provide some guidance on the cases where using the edit data as a preference dataset is better or worse than doing imitation learning on the edits or learning the cost function and doing RL. When the preference coverage coefficient $C_{\mathrm{PREF}}$ is small, the sample complexity of DPO can be much lower than that of SFT and RL. When the weighted approximation error $\min\cbr{\eta_{\max}D(\piref, \pi_\star^\beta), \bar{\eta}_{\max}D^{1/2}(\piref, \pi_\star^\beta)}$ is smaller than the target error $\epsilon$, SFT is the preferred method. Finally, when the approximation error and preference concentrability are large, but policy concentrability is small and the cost function is simple—for instance, a low-dimensional linear function—learning the cost function and then optimizing it is the most sample-efficient approach.\looseness=-1 %

\section{Experimental Results and Discussion}
\label{sec:exp}

We empirically evaluate our theoretical findings in this section to see if they apply in complex settings similar to what is encountered in practice.

\noindent\textbf{Task Setup.} 
We evaluate on two tasks: email writing and summarization, from Gao et al.,~\cite{gao2024aligning}. For each task, there exists a dataset of articles from 4 domains. Corresponding to each task and domain, there is a list of latent user preferences described as a \emph{preference string}, for example, ``\emph{structured, straight to the points, respectful, professional greeting and closing}''. The setup follows~\pref{proto:learning-from-edits}. In each round, the agent is given a context describing a task along with a given article. For example, summarize a given article. The \emph{agent does not have access to the latent preference string} but must generate a response that satisfies this preference. An LLM user generates an edit given the response, context, and latent preference string corresponding to the domain of the article in the context. The use an LLM-based user allows for reproducible experiments which facilitates rapid advancement in algorithm development.\footnote{Gao et al.,~\cite{gao2024aligning} took steps to validate their LLM user. Please see their paper for details.} Finally, we compute edit distance using Levenshtein distance normalized by the number of tokens in the agent response. We use the NLTK word tokenizer to compute the edit distance. We use Qwen 3 32B instruct model as our user but remove think tokens before using the response. \looseness=-1

The key challenge of this task is that latent preferences are context-dependent as articles from different domains have different preferences. Further, even for a single domain, there are multiple preferences in a preference string. For example, the aforementioned preference string contains multiple individual preferences such as ``\emph{second person narrative}'' and ``\emph{show emotions}''. These two challenges occur in real-world applications. For example, a person may prefer to write informal emails to their friends but write formal reviews for their office reports. Note that the context does not state which domain the article in the given context comes from. An LLM agent must implicitly infer the domain from the context, learn the appropriate preference for it, and then use it to generate an appropriate response. 

\noindent\textbf{Strong and Weak User.} We extend the original setup of Gao et al.,~\cite{gao2024aligning} to consider two types of users: strong users and weak users. A \emph{strong user} generates edits based on every individual preference in the article's latent preference string. In contrast, a \emph{weak user} samples a subset of preferences in the article's preference string and uses them to generate the edits. For example, in any given interaction a weak user may sample two preferences ``\emph{second person narrative}'' and ``\emph{brief}'' and perform edits to satisfy these while ignoring the other preferences in the preference string. The weak user models user who may only prefer to perform small edits at a time. Conceptually, both the weak and strong users have the same optimal behavior. This is because the optimal response that satisfies all the preference for the strong user, also satisfies any subset of preference that the weak user can sample. However, while strong user perform edits that rapidly converges to the optimal policy (higher $\minp$), the weak user's edit will slowly take the response towards the optimal behavior (smaller $\minp$). \looseness=-1%

\noindent\textbf{Offline Dataset.} We collect an offline interaction dataset between the agent model and the user. This dataset is supposed to represent deployment logs found in typical LLM agent applications for writing and coding assistants. We use Llama 3.1 8b Instruct model as our agent model. We perform generate responses by doing greedy decoding. We collect a dataset of 20,000 examples for the summarization task and 10,000 examples for the email writing task. We collect these datasets separately with both strong and weak users. This gives us 4 separate offline datasets.

\noindent\textbf{Online Learning Phase.} We evaluate the model for $T=200$ examples for summarization and email writing. We always use the strong user during test time regardless of which user was used to generate the offline dataset. %
We run each experiment with 3 different seeds.

\noindent\textbf{Methods and Implementation.} We consider the following approaches: (i) $\base$ which generates from the base agent model, (ii) $\sft$ which performs SFT on the training data, (iii) $\dpo$ which runs the DPO algorithm on the training data, (iv) $\early$ which performs early ensembling of $\dpo$ and $\sft$ losses (\pref{eqn:early-ensemble}), and (v) $\late$ which performs late ensembling (\pref{alg:late-ensemble}) using policies trained by methods (ii)-(iv). All generations are performed greedily and with a max number of generation tokens of 1000. We do not evaluate~\pref{eqn:pessimism-rl} give challenges in implementing it.\looseness=-1%

We sweep over various hyperparameters including learning rate and epochs for $\sft$ and $\dpo$, as well as $\beta$ for $\dpo$, and $\beta, \lambda$ for $\early$. We pick the best hyperparameters by maximizing log-loss on a held-out validation set of 200 user-edits. We found in early studies that despite sweeping over the hyperparameters, both $\dpo$ and $\early$ learn policies that tend to repeat text until their max tokens expire. We, therefore, perform a post-generation trimming strategy where we trim the generations once it starts repeating 200 consecutive characters. For a fair comparison, we apply this post-processing to the output of all methods. An alternative approach could be to use explicit length penalty~\cite{park2024disentangling} or more stable preference-learning approaches such as Rebel~\cite{gao2024rebel}. See~\pref{app:exp-details} for full details on experimental setting.

\begin{table*}[]
    \centering
    \begin{tabular}{l|c|c|c|c||c}
           \textbf{Method} & \multicolumn{2}{c|}{\textbf{Summarization}} &   \multicolumn{2}{c}{\textbf{Email Writing}} \\
          & \textbf{Strong User} & \textbf{Weak User} & \textbf{Strong User} & \textbf{Weak User} & \textbf{Max SubOpt}\\
         \hline
         \base & ${0.9455}_{\small\pm 0.01}$ & ${0.9445}_{\small\pm 0.02}$ & ${0.5108}_{\small\pm 0.03}$ & ${0.4923}_{\small\pm 0.01}$ & ${0.7364}_{\small\pm 0.10}$\\
 \sft & ${0.5377}_{\small\pm 0.02}$ & ${0.9304}_{\small\pm 0.19}$ & ${0.4159}_{\small\pm 0.05}$ & ${0.4539}_{\small\pm 0.03}$ & ${0.5772}_{\small\pm 0.19}$\\
 \dpo & ${1.0790}_{\small\pm 0.06}$ & ${0.8267}_{\small\pm 0.06}$ & ${\textbf{0.3365}}_{\small\pm 0.00}$ & ${\textbf{0.3368}}_{\small\pm 0.01}$ & ${0.8698}_{\small\pm 0.11}$\\
 \early & ${\textbf{0.2092}}_{\small\pm 0.09}$ & ${\textbf{0.3586}}_{\small\pm 0.01}$ & ${0.3438}_{\small\pm 0.06}$ & ${0.4864}_{\small\pm 0.01}$ & ${0.1612}_{\small\pm 0.01}$\\
 \late & ${0.2768}_{\small\pm 0.13}$ & ${0.4403}_{\small\pm 0.03}$ & ${0.4202}_{\small\pm 0.11}$ & ${0.3739}_{\small\pm 0.04}$ & ${\textbf{0.1586}}_{\small\pm 0.04}$\\
         \hline
    \end{tabular}
    \caption{Results on the summarization and email writing domain. We calculate the mean total edit distance $\frac{1}{T}\sum_{t=1}^Tc_t$ during the online learning phase. We report average performance across 3 seeds.}
    \label{tab:main_results}
\end{table*}

\begin{table*}[]
    \centering
    \begin{tabular}{l|c|c|c|c||c}
           \textbf{Method} & \multicolumn{2}{c|}{\textbf{Summarization}} &   \multicolumn{2}{c}{\textbf{Email Writing}} \\
          & \textbf{Strong User} & \textbf{Weak User} & \textbf{Strong User} & \textbf{Weak User} & \textbf{Max. SubOpt}\\
         \hline
         \base & ${0.8498}_{\small\pm 0.01}$ & ${0.8558}_{\small\pm 0.01}$ & ${0.4241}_{\small\pm 0.01}$ & ${0.4239}_{\small\pm 0.01}$ & ${0.6654}_{\small\pm 0.01}$\\
 \sft & ${0.4890}_{\small\pm 0.01}$ & ${0.7698}_{\small\pm 0.01}$ & ${0.3377}_{\small\pm 0.02}$ & ${0.3866}_{\small\pm 0.01}$ & ${0.5121}_{\small\pm 0.02}$\\
 \dpo & ${0.9650}_{\small\pm 0.03}$ & ${0.7800}_{\small\pm 0.02}$ & ${0.2955}_{\small\pm 0.01}$ & ${\textbf{0.3047}}_{\small\pm 0.02}$ & ${0.7805}_{\small\pm 0.02}$\\
 \early & ${\textbf{0.1845}}_{\small\pm 0.02}$ & ${\textbf{0.2577}}_{\small\pm 0.02}$ & ${\textbf{0.2406}}_{\small\pm 0.03}$ & ${0.4001}_{\small\pm 0.03}$ & ${0.0955}_{\small\pm 0.03}$\\
 \late & ${0.2509}_{\small\pm 0.04}$ & ${0.3403}_{\small\pm 0.01}$ & ${0.3123}_{\small\pm 0.01}$ & ${0.3428}_{\small\pm 0.03}$ & ${\textbf{0.0862}}_{\small\pm 0.02}$\\
        \hline
    \end{tabular}
    \caption{Transfer learning results with a Llama-3.3-70B-Instruct User at test time.}
    \label{tab:transfer_results}
\end{table*}

\noindent\textbf{Main Results.} We report the main results in~\pref{tab:main_results} with the Qwen-3 32B user. We report mean edit cost across rounds ($T$) and seeds. We also report \emph{Max. SubOpt} that measures worst-case performance gap across the four settings between a given approach and the best performance. We see that $\base$ method does not perform well showing the need for adaptation. Intuitively, we would expect a strong user to have strong convergence to the optimal policy. In our theory, this would be reflected in the value of $\minp$ in~\pref{assum:user-dist}. We see this in our results where $\sft$ is able to learn and performs much better when trained and tested on strong user. Its performance when trained on weak user is weaker consistent with~\pref{thm:sft-main-simple}. %
In contrast, $\dpo$'s theoretical analysis does not depend on the choice of $\minp$ and instead only depends on coverage and the validity of balance equation. We see that $\dpo$ performs better than $\sft$ when we train on weak user and test on strong user. However, when both users are strong, then there is no clear winner between $\sft$ and $\dpo$. Overall, this illustrates the trade-off predicted by our theory where $\sft$ can take advantage whenever the user-edits converges faster to the optimal policy, however, it is also susceptible when this is not the case whereas $\dpo$ is more robust to this factor.\looseness=-1

We see that $\early$ can exploit these trade-offs to achieve better performance than both $\sft$ and $\dpo$ on the summarization domain, however, it under-performs $\dpo$ on the email writing domain. One explanation can be sensitivity to the choice of $\lambda$. While we perform a grid search over this hyperparameter, and use a validation loss over user-edits, it is possible that this selection scheme does not correspond well with test-time performance.
Finally, our $\late$ approach is able to more readily adapt at test time by making suitable trade-offs and achieve the lowest worst case SubOpt value. We emphasize that late ensembling isn't guaranteed to do better in practice. Specifically, if one of the policy in $\Psi$ on~\pref{line:policies} in \pref{alg:late-ensemble}, is always better than others, then UCB will achieve lower performance than this policy due to exploring other sub-optimal policies.\looseness=-1

\noindent\textbf{Transfer Learning to Different User LLM.} We investigate how robust these methods are to a different user LLM. This mimics a common real-world setting where the training data may come from one set of users, however, testing maybe done with a different set of users. These two sets of users can still have a common preference because these maybe dependent on organization, platform, or other controlled features such as age, or location. We use Llama-3.3-70B-Instruct model as our user model during the online testing phase while using our previously trained models. ~\pref{tab:transfer_results} shows the results. We see that the main trends from~\pref{tab:main_results} carry over even when we test with a different user LLM. In particular, our ensemble approach $\late$ achieves the best SubOpt value.

\noindent\textbf{Sampling Generations and Test-time plot}. We provide a few sample generations showing some success and failure cases in~~\pref{tab:ex_sft_dpo}. We also visualize the test-time performance of different methods on the summarization task in~\pref{fig:cumm-user-edit-qwen-summarization}. As we can see, the $\late$ approach converges to the best method in both cases as expected. We provide additional details in~\pref{app:exp-details}.

\begin{table*}[t!]
    \centering \small
    \setlength{\tabcolsep}{3pt}
    \caption{Generations from $\early$ on the summarization task with a strong user.}   
    \begin{tabular}{p{0.08\linewidth} p{0.90\linewidth}} %
        \toprule
        \textbf{Type} & \textbf{(Latent User Preference String) Agent Response} \\
        \midrule
         Success
         &  (\textit{\textbf{question answering style!}}) **Q: What was the state of the land and people after the calamity?**  
A: The land was exhausted, the towns were stripped of their fighting population, and only a shadow of a kingdom remained.  $\cdots$ [truncated] \\
      &   (\textit{\textbf{bullet points, parallel structure, brief}}) **Updated Mn PBPK model**: Reflects rapid Mn association/dissociation in tissues, with saturation of binding sites limiting accumulation.  \\
    & - **Key changes**:  \\
    &  - Rapid Mn binding and dissociation rates.  \\
    &  - High-affinity binding with rapid exchange between bound and free forms.  \\
    &  - Tissue-specific binding capacities to account for background Mn levels.  \\
    &- **Model performance**: Successfully describes Mn kinetics in adult rats exposed to $\cdots$ [truncated]  \\
        \midrule
         Failure
         & (\textit{\textbf{inquisitive, skillful foreshadowing, with emojis}}) **Q: What is the traditional approach to drug development, and why is it no longer effective?** \\
        & A: The traditional approach focuses on a single target protein, disease, and drug. However, this paradigm is now considered untenable due to the promiscuity of drug-like compounds, which can interact with multiple target proteins.\\
        &**Q: How do current target-fishing approaches address the promiscuity issue?**\\  
        &A: Current approaches, such as single-label multi-classification schemes, $\cdots$ [truncated] \\
        & (\textit{\textbf{question answering style}}) The article compares two ministers, Mr. Mearns and Rev. Mr. Tindall, highlighting their differences in style and delivery. Mr. Mearns is described as steady, patient, and well-spoken, $\cdots$ [truncated] \\
        \bottomrule
    \end{tabular} 
    \label{tab:ex_sft_dpo}
\end{table*}

\begin{figure}
  \begin{subfigure}[t]{.5\textwidth}
    \centering
    \includegraphics[width=\linewidth]{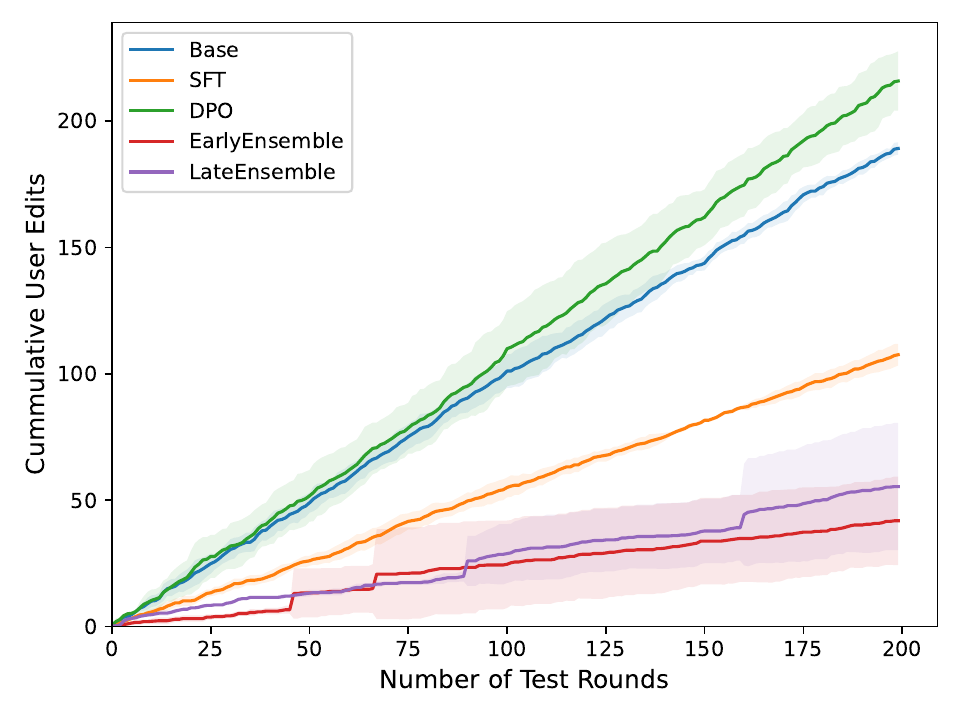}
    \caption{Summarization Domain with Strong User}
  \end{subfigure}
  \hfill
  \begin{subfigure}[t]{.5\textwidth}
    \centering
    \includegraphics[width=\linewidth]{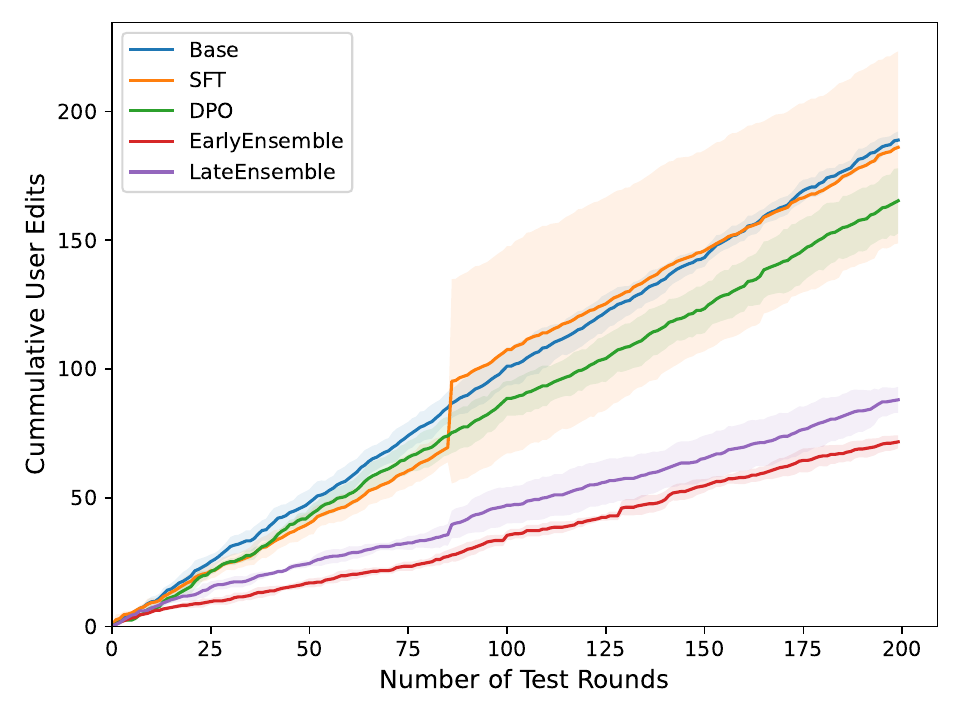}
    \caption{Summarization Domain with Weak User}
  \end{subfigure}
  \caption{Cumulative User Edits at test time corresponding to ~\pref{tab:main_results} for the different setup. For each round, we show mean and standard deviation across 3 seeds.}
  \label{fig:cumm-user-edit-qwen-summarization}
\end{figure}

\section{Conclusion and Limitations.} User edits provide a natural feedback for adapting LLMs. In this work, we initiated the theoretical investigation into learning from user edits. We propose a learning setup (\pref{proto:learning-from-edits}) and analyze different algorithms in this setting. We show that none of these algorithms is always desired and propose a simple ensembling approach that empirically works best in two domains. Future works can investigate limitations of our work such as relaxing different assumptions, deriving optimal algorithms, and evaluating these algorithms in a human study. %

\bibliographystyle{abbrv}
\bibliography{neurips2025}

\newpage
\appendix
\addcontentsline{toc}{section}{Appendix} %

\part{Appendix} %
\parttoc %

\section{Theoretical Analysis with Edit Distance}
\label{app:theory}

We provide a table of notations for convenience in~\pref{tab:notation}.

\begin{table}[!h]
    \centering
    \begin{tabular}{l|l}
        \textbf{Notation} & \textbf{Description}\\
        \hline 
        $\Delta(\Ucal)$ & Set of all distributions over a countable set $\Ucal$.\\
        $[N]$ & Denotes the set $\cbr{1, 2, \cdots, N}$ for any $N \in \NN$.\\
        $\Xcal$ & List of contexts which can include text, images, etc.\\
        $\Ycal$ & List of text-based agent response.\\
        $\rho \in \Delta(\Xcal)$ & Distribution over context at both train time and test time \\
        $q: \Xcal \times \Ycal \rightarrow \Delta(\Ycal)$ & User distribution, given a context $x \in \Xcal$ and agent response $y \in \Ycal$,\\
        &the user generate an edited response $y' \in \Ycal$ with probability $q(y' \mid x, y)$.\\
        $\editdist: \Ycal^2 \rightarrow [0, \cmax]$ & Computes edit distance $\editdist(y, y')$ for editing $y$ to $y'$.\\
        & We assume $\editdist(y, y) = 0$.\\
        $c: \Xcal \times \Ycal \rightarrow [0, \cmax]$ & Expected edit cost $c(x, y)$ for response $y$ in context $x$.\\
        $\pi: \Xcal \rightarrow \Delta(\Ycal)$ & A general notation for a policy $\pi$ that generates a response $y$ given context \\
        &$x$ with probability $\pi(y \mid x)$.\\
        $\Pi$ & A list of policies.\\
        $\beta$ & KL-divergence coefficient. It is always a non-negative coefficient.\\
        $\pi^\star_\beta$ & KL-regularized optimal policy \\
        $\suppi$ & Policy trained with supervised fine-tuning ($\sft$). \\
        $\dpopi$ & Policy trained with Direct Preference Optimization ($\dpo$).\\
        $\costpi$ & Policy trained with reinforcement learning ($\costpi$).\\
        $\efpi$ & Policy trained with early ensembling ($\early$).\\
        \hline
    \end{tabular}
    \caption{Table of important notations and their description.}
    \label{tab:notation}
\end{table}

\subsection{Properties of the Balance Equation}\label{section::appendix_balance_equations}

Our setup consists of a distribution $\rho \in \Delta(\Xcal)$ over context $x$, a user edit distribution $q: \Xcal \times \Ycal \rightarrow \Delta(\Ycal)$, and an edit distance $\editdist: \Ycal \times \Ycal \rightarrow [0, \cmax]$. We make two assumptions on our setup:

\begin{equation*}
    \forall x \in \Xcal, y, y' \in \Ycal, \qquad \frac{q(y' \mid x, y)}{q(y \mid x, y)} = \frac{\pi_\star^\beta(y' \mid x)}{\pi_\star^\beta(y \mid x)},
\end{equation*}

and that for every $x \in \Xcal$, there exists $y^\star$ such that for every $y \in \Ycal$, we have $q(y^\star \mid x, y) \ge \minp = \min_x \minp(x)$. For simplicity we consider a constant $\minp$ function in this section. We now consider examples of this setup. 

\paragraph{Example 1.} We start with a singleton context set $\Xcal = \{x\}$ and $\Ycal = \{y_1, y_2, \cdots, y_N\}$. We define the user edit distribution below:
\begin{equation*}
    q(y_N \mid y_i, x) = \minp + \frac{1 - \minp}{N}, \qquad q(y_j \mid x, y_i) = \frac{1 - \minp}{N}, \forall j \ne N
\end{equation*}

We define edit distance as follows $\editdist(y, y') = \delta\cbr{y \ne y'}$ for any $y$ and $y'$. Using this, we get the expected cost as $c(x, y) = \delta - \delta \frac{1 - \minp}{N}$ for every $y \ne y_N$ and $c(x, y_N) = \delta - \delta \rbr{\minp + \frac{1 - \minp}{N}}$. Let $\bar{c} = c(x, y_N)$ then for every $y \ne y_N$ we have $c(x, y) = \bar{c} + \delta \minp$. Hence, the preferred response is $y_N$. We also satisfy the constrained that we must have at least $\minp$ going to the optimal response. That leaves only the balance equation. The optimal policy is given by:
\begin{equation*}
    \pi^\beta_\star(y \mid x) = \frac{1}{Z_\beta(x)}\exp\rbr{-\frac{c(x, y)}{\beta}}
\end{equation*}

For any $y, y' \ne y_N$ we have $q(y' \mid x, y) = q(y \mid x, y') = \frac{1 - \minp}{N}$. Further, we also have $\pi^\beta_\star(y \mid x) = \pi^\beta_\star(y' \mid x) = \frac{1}{Z_\beta(x)}\exp\rbr{-\frac{\bar{c} + \delta \minp}{\beta}}$. Therefore, the balance equation is satisfied in this case. When both are $y_N$ then the balance equation is also satisfied. This leaves a single case where one of them is $y_N$ and other is not. Let the other be $y$.

We have:
\begin{equation*}
    \frac{q(y_N \mid x, y)}{q(y \mid x, y_N)} = \frac{\minp + \frac{1 - \minp}{N}}{\frac{1 - \minp}{N}} = \frac{1 + N \minp - \minp}{1 - \minp}
\end{equation*}

and 
\begin{equation*}
    \frac{\pi^\star(y_N \mid x)}{\pi^\star(y \mid x)} = \exp\rbr{\frac{c(x, y) - c(x, y_N)}{\beta}} = \exp\rbr{\frac{\delta\minp}{\beta}}
\end{equation*}

Combining them we get:
\begin{equation}
    \frac{1 + N \minp - \minp}{1 - \minp} = \exp\rbr{\frac{\delta\minp}{\beta}}
\end{equation}

It is straightforward to see that for any choice of $N$ and $\minp$, we can define $\beta$ to satisfy this last equation.

\subsubsection{Contraction Property}

\begin{lemma}[$\pi^\star$ as steady-state of user distribution]\label{lem:steady-state-user} For any $x \in \Xcal$ we have $q \circ \pi^\beta_\star(Y \mid x) = \pi^\beta_\star(Y \mid x)$.
\end{lemma}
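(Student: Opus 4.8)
The plan is to recognize the balance equation of~\pref{assum:user-dist} as a \emph{detailed-balance} (reversibility) condition for the Markov transition kernel $q(\cdot \mid x, \cdot)$, which immediately identifies $\pi_\star^\beta$ as a reversible --- hence stationary --- distribution of that kernel. Concretely, I would cross-multiply~\pref{eqn:balance-eqn} to obtain, for every fixed $x$ and all $y, y' \in \Ycal$,
\begin{equation*}
    q(y' \mid x, y)\,\pi_\star^\beta(y \mid x) = q(y \mid x, y')\,\pi_\star^\beta(y' \mid x).
\end{equation*}
This is exactly the statement that the probability flux from $y$ to $y'$ equals the flux from $y'$ to $y$ when the starting response is drawn from $\pi_\star^\beta$.

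\textbf{Key steps.} First I would unfold the definition of the composed kernel from~\pref{assum:realizability}, writing $q \circ \pi_\star^\beta(y \mid x) = \sum_{y' \in \Ycal} q(y \mid x, y')\,\pi_\star^\beta(y' \mid x)$. Second, I would substitute the detailed-balance identity above into each summand, replacing $q(y \mid x, y')\,\pi_\star^\beta(y' \mid x)$ by $q(y' \mid x, y)\,\pi_\star^\beta(y \mid x)$. Third, since $\pi_\star^\beta(y \mid x)$ no longer depends on the summation index, I would factor it out:
\begin{equation*}
    q \circ \pi_\star^\beta(y \mid x) = \pi_\star^\beta(y \mid x) \sum_{y' \in \Ycal} q(y' \mid x, y).
\end{equation*}
Finally, because $q(\cdot \mid x, y)$ is a probability distribution over the edited response, $\sum_{y' \in \Ycal} q(y' \mid x, y) = 1$, yielding $q \circ \pi_\star^\beta(y \mid x) = \pi_\star^\beta(y \mid x)$ for every $y$ and $x$, which is the claim.

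\textbf{Main obstacle.} There is essentially no analytic difficulty here; the only thing to be careful about is the \emph{direction of the conditioning} in the notation $q(\,\cdot \mid x, \cdot\,)$ --- namely that in $q(a \mid x, b)$ the second argument $b$ is the \emph{starting} response and $a$ is the \emph{edited} one --- so that the detailed-balance substitution is applied to the correct factor and the index that is summed over is the one carrying the probability-one normalization. I would also remark that this lemma is consistent with, and in fact pinned down by, the contraction property of~\pref{lem:contraction_property}: that result shows iterating $q$ strictly contracts toward $\pi_\star^\beta$ in expected total-variation distance whenever $\minp(x) > 0$, so $\pi_\star^\beta$ must be its unique fixed point, and the present computation exhibits that fixed point explicitly.
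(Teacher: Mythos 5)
Your proof is correct and takes essentially the same approach as the paper's: both cross-multiply the balance equation into the detailed-balance identity $q(y' \mid x, y)\,\pi_\star^\beta(y \mid x) = q(y \mid x, y')\,\pi_\star^\beta(y' \mid x)$ and then invoke the normalization of the conditional edit distribution, differing only in which response variable gets summed over (you sum over the edited response, the paper over the starting one), which is the same computation up to relabeling. One caution: keep your closing appeal to \pref{lem:contraction_property} strictly as a consistency remark, since the paper proves that contraction property \emph{using} this steady-state lemma, so treating it as justification here would be circular.
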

\begin{proof} Fix $x \in \Xcal$, then for any $y, y' \in \Ycal$ we have from the balance equation in~\pref{assum:user-dist} the following:
\begin{equation}
    \frac{q(y' \mid x, y)}{q(y \mid x, y')} = \frac{\pi^\beta_\star(y' \mid x)}{\pi^\beta_\star(y \mid x)} \Rightarrow q(y' \mid x, y) \pi^\beta_\star(y \mid x) = \pi^\beta_\star(y' \mid x) q(y \mid x, y')
\end{equation}
Summing over $y$ on both sides gives:
\begin{align*}
    \sum_{y \in \Ycal} q(y' \mid x, y) \pi^\beta_\star(y \mid x) &= \pi^\beta_\star(y' \mid x) \sum_{y \in \Ycal} q(y \mid x, y'),\\
    &= \pi^\beta_\star(y' \mid x).
\end{align*}
This is equivalent to $q \circ \pi^\star = \pi^\star$.
\end{proof}

\contractionpropertymain*

\begin{proof} Fix $x \in \Xcal$. We start by using~\pref{lem:steady-state-user} to express:
\begin{align*}
    \TV{q \circ \pi(Y \mid x) - \pi^\beta_\star(Y \mid x)} &= \TV{q \circ \pi(Y \mid x) - q \circ \pi^\beta_\star(Y \mid x)},\\
    &= \frac{1}{2} \sum_{y ' \in \Ycal} \abr{\sum_{y \in \Ycal} q(y' \mid x, y)\pi(y \mid x) - \sum_{y \in \Ycal} q(y' \mid x, y) \pi^\beta_\star(y \mid x)}
\end{align*}

We define a quantity $w(y', y; x)$ as follows:
\begin{equation*}
    w(y', y; x) = \begin{cases} q(y^\star \mid x, y) - \minp(x), \quad &\mbox{if } y'=y^\star\\ q(y' \mid x, y), \quad &\mbox{otherwise}\end{cases}
\end{equation*}

\pref{assum:user-dist} implies $w(y', y; x) \ge 0$ and
\begin{align}\label{eqn:sum-w-row}
    \sum_{y' \in \Ycal} w(y', y; x) &= w(y^\star, y; x) \\
    &\quad + \sum_{y' \in \Ycal, y' \ne y^\star} w(y', y; x) = q(y^\star \mid x, y) - \minp(x) + \sum_{y' \in \Ycal, y' \ne y^\star} q(y' \mid x, y), \nonumber \\
    &= 1 - \minp(x). \nonumber
\end{align}

Plugging it, we get:
\begin{align*}
    &\TV{q \circ \pi(Y \mid x) - \pi^\beta_\star(Y \mid x)} \\
    &= 
    \frac{1}{2} \abr{\sum_{y \in \Ycal} \cbr{w(y^\star, y; x) + \minp(x)} \pi(y \mid x) - \sum_{y \in \Ycal} \cbr{w(y^\star, y; x) + \minp(x)}\pi^\beta_\star(y \mid x)} \\
    &\quad + \frac{1}{2} \sum_{y ' \in \Ycal, y'\ne y^\star} \abr{\sum_{y \in \Ycal} w(y', y; x)\pi(y \mid x) - \sum_{y \in \Ycal} w(y', y; x)\pi^\beta_\star(y \mid x)},\\
    &=\frac{1}{2} \sum_{y ' \in \Ycal} \abr{\sum_{y \in \Ycal} w(y', y; x)\pi(y \mid x) - \sum_{y \in \Ycal} w(y', y; x)\pi^\beta_\star(y \mid x)}
\end{align*}

We view $w(y', y; x)$ as a $|\Ycal| \times |\Ycal|$ matrix $W$ with entries $W_{y'y} = w(y', y; x) \geq 0$ satisfying $\sum_{y'} W_{y',y} = 1-\minp(x)$. We omit $x$ from the $W$ notation as it is fixed. Similarly, we define $|\Ycal|$-dimensional vectors $\pi$ and $\pi^\star$ with entries $\pi(y \mid x)$ and $\pi^\beta_\star(y \mid x)$. Using these matrix and vector notation we can express:
\begin{align*}
    \TV{q \circ \pi(Y \mid x) - \pi^\star(Y \mid x)} &= \frac{1}{2} \sum_{y ' \in \Ycal} \abr{\sum_{y \in \Ycal} w(y', y; x)\pi(y \mid x) - \sum_{y \in \Ycal} w(y', y; x)\pi^\beta_\star(y \mid x)}\\
    &\leq\frac{1}{2}  \sum_{y  \in \Ycal} \sum_{y' \in \Ycal}w(y', y; x) \cdot \abr{ \pi(y \mid x) -  \pi^\beta_\star(y \mid x)}\\
    &= \frac{1}{2}  \sum_{y  \in \Ycal} (1-\minp(x))\cdot \abr{ \pi(y \mid x) -  \pi^\beta_\star(y \mid x)}\\
    &= (1-\minp(x))\cdot\TV{ \pi(Y|x) - \pi^\beta_\star(Y|x)  } 
\end{align*}
where the second step uses the non-negatitivy of $w(y'y; x)$ and triangle inequality and the third step uses~\pref{eqn:sum-w-row}.
\end{proof}

\subsection{General Results}
\label{sec:optimal_policy}

Recall that we define optimal policy as a KL-regularized policy:
\begin{align}\label{eqn:optimal}
    \pi_\star^\beta &= \arg\min_{\pi} J_\beta(\pi), \mbox{where}\\
    J_\beta(\pi) &= \EE_{x \sim \rho, y \sim \pi}\sbr{c(x, y) + \beta \KL(\pi || \piref)},
\end{align}
$\beta > 0$ is a chosen hyperparameter defining the objective. We can establish easily that $\pi^\star$ is given by:
\begin{equation*}
    \pi_\star^\beta(y \mid x) \propto \piref(y \mid x) e^{-\beta c(x, y)}
\end{equation*}

This is a standard result but we prove it in the next Lemma for completion.
\begin{lemma}[Optimal Policy]\label{lem:optimal_proof} The optimal policy satisfies $\pi^\beta_\star(y \mid x) = W^\star(x) \piref(y \mid x) e^{-c(x, y)/\beta}$ for every $x \in \Xcal, y \in \Ycal$, where $W^\star(x) = \sum_{y' \in \Ycal} \piref(y' \mid x) e^{-c(x, y')/\beta}$ is the normalization constant.
\end{lemma}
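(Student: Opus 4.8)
The plan is to exploit the fact that the objective $J_\beta$ separates over contexts: minimizing $J_\beta$ reduces to an independent pointwise minimization of an inner objective at each fixed $x$, and that inner objective can be rewritten as a single KL divergence against an exponentially tilted version of $\piref$ (up to an additive constant), so that the minimizer is read off directly from Gibbs' inequality.

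First I would fix $x \in \Xcal$ and isolate the inner objective
\[
    G_x(\pi) = \sum_{y \in \Ycal} \pi(y \mid x)\rbr{c(x, y) + \beta \log \frac{\pi(y \mid x)}{\piref(y \mid x)}},
\]
which depends on $\pi$ only through the conditional $\pi(\cdot \mid x)$, so that $J_\beta(\pi) = \EE_{x \sim \rho}\sbr{G_x(\pi)}$. Because this is a $\rho$-weighted sum in which the conditionals for distinct contexts enter in separate summands and are mutually unconstrained, minimizing $J_\beta$ over $\Pi$ is equivalent to minimizing $G_x$ over the simplex $\Delta(\Ycal)$ separately for each $x$.

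Next I would introduce the tilted distribution $p(y \mid x) = W^\star(x)\,\piref(y \mid x)\,e^{-c(x, y)/\beta}$, with $W^\star(x)$ the constant making $p(\cdot \mid x)$ sum to one over $\Ycal$. The key algebraic step is to absorb the cost into the logarithm, writing $c(x, y) + \beta \log \frac{\pi(y \mid x)}{\piref(y \mid x)} = \beta \log \frac{\pi(y \mid x)}{\piref(y \mid x)\, e^{-c(x, y)/\beta}} = \beta \log \frac{\pi(y \mid x)}{p(y \mid x)} + C(x)$, where $C(x)$ depends on $x$ but not on $y$ or $\pi$. Taking the $\pi(\cdot\mid x)$-expectation and using $\sum_{y} \pi(y \mid x) = 1$ yields
\[
    G_x(\pi) = \beta\,\KL\rbr{\pi(\cdot \mid x) \,\|\, p(\cdot \mid x)} + C(x).
\]
Since $C(x)$ is independent of $\pi$ and $\KL \ge 0$ with equality if and only if $\pi(\cdot \mid x) = p(\cdot \mid x)$, the unique minimizer of $G_x$ is $\pi^\beta_\star(\cdot \mid x) = p(\cdot \mid x)$, which is exactly the claimed exponential-tilting form.

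The computation is entirely routine and I do not anticipate any genuine obstacle; the only points needing care are justifying the context-wise decomposition in the first step and invoking uniqueness, which follows from strict convexity of $\KL$ in its first argument (Gibbs' inequality being tight only at equality). An alternative route via a Lagrange multiplier $\lambda(x)$ for the constraint $\sum_{y} \pi(y \mid x) = 1$ produces the stationarity condition $c(x, y) + \beta + \beta \log \frac{\pi(y \mid x)}{\piref(y \mid x)} + \lambda(x) = 0$, which solves to the same tilted form with $\lambda(x)$ fixed by normalization, and convexity of $G_x$ certifies that this stationary point is the global minimum.
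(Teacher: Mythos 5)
Your proof is correct, but it takes a genuinely different route from the paper's. The paper proves \pref{lem:optimal_proof} by exactly the Lagrangian method you relegate to a remark at the end: it forms $\Lcal = J_\beta(\pi) + \sum_x \mu_x \rbr{\sum_y \pi(y \mid x) - 1}$, sets $\partial \Lcal / \partial \pi(y \mid x) = 0$, and solves the stationarity condition, with the multiplier absorbed into the normalization constant. Your main argument --- absorbing the cost into the logarithm so that the per-context objective becomes $\beta\,\KL\rbr{\pi(\cdot \mid x) \,\|\, p(\cdot \mid x)} + C(x)$ and then invoking Gibbs' inequality --- buys several things the paper's proof does not deliver explicitly: it certifies \emph{global} (not merely stationary) optimality with no second-order or convexity check, it gives uniqueness of the minimizer for free from the equality case of $\KL \ge 0$, and it never differentiates $\log \pi(y \mid x)$ at points where $\pi(y \mid x)$ could vanish, an issue the Lagrangian computation implicitly assumes away. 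The paper's route is more mechanical and, as written, stops at the first-order condition. One small remark: the lemma as stated multiplies by $W^\star(x) = \sum_{y'} \piref(y' \mid x) e^{-c(x, y')/\beta}$, which is a typo in the paper (the tilted density must be \emph{divided} by that sum); your convention of defining $W^\star(x)$ as whatever constant makes $p(\cdot \mid x)$ sum to one is the correct reading and sidesteps the inconsistency. Both proofs share the minor caveat that the context-wise decomposition only pins down $\pi^\beta_\star(\cdot \mid x)$ for $x$ in the support of $\rho$.
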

\begin{proof} The Lagrangian for the problem is given by:
\begin{equation*}
    \Lcal = J_\beta(\pi) + \sum_{x} \mu_x \rbr{\sum_{y} \pi(y \mid x) - 1},
\end{equation*}
where $\{\mu_x \mid x \in \Xcal\}$ are our Lagrange multipliers. Taking derivative wrt to $\pi(y \mid x)$ for a fixed value of $x \in \Xcal$ and $y \in \Ycal$ and setting it to 0 we get:
\begin{align*}
    \frac{\partial \Lcal}{\partial \pi(y \mid x)} = \rho(x) c(x, y) + \beta \rho(x) \log \pi(y \mid x) + \beta \rho(x) - \beta \rho(x) \log \piref(y \mid x)  + \mu_x = 0
\end{align*}

Rearranging the terms we get:
\begin{equation*}
    \log \frac{\pi(y \mid x)}{\piref(y \mid x)} + 1 + \frac{\mu_x}{\beta \rho(x)} = - \frac{c(x, y)}{\beta} 
\end{equation*}

let $W^\star(x) = \exp\rbr{- 1 - \frac{\mu_x}{\beta \rho(x)}}$, we get:
\begin{equation*}
    \pi(y \mid x) = W'(x)\piref(y \mid x) e^{- \frac{c(x, y)}{\beta}},
\end{equation*}
the value of $W^\star(x)$ serves as a normalization factor to ensure $\sum_{y \in \Ycal} \pi(y \mid x) = 1$ and is enforced by taking derivative wrt to $\mu_x$ and setting them to 0. This gives $W^\star(x) = \sum_{y' \in \Ycal} \piref(y' \mid x) e^{-c(x, y')/\beta}$.
\end{proof}

Recall that we define $\SubOpt(\pi) = J_\beta(\pi^\beta_\star) -  J_\beta(\pi)$. In many practical setting, we want to compete with the regularized policy but we only care about the unregularized cost. We capture this by defining the \emph{unregularized sub-optimality}:
\begin{equation}
    \SubOpt_0(\pi) = J_0(\pi) -  J_0(\pi^\beta_\star)
\end{equation}

We also define \emph{unregularized regret} as
\begin{equation}
    \Reg^0_T = \sum_{t=1}^T \SubOpt_0(\pi_t),
\end{equation}
where $\pi_t$ is the policy played by the agent in round $t$. If we play a fixed policy $\pi$ for all $T$ rounds then we have $\Reg^0_T = \sum_{t=1}^T \SubOpt_0(\pi) = T\SubOpt_0(\pi)$. In these cases, we will use $\Reg^0_T(\pi)$ to denote the regret of this fixed policy. 

A useful lemma below relates the total variation between policies to the suboptimality.
\begin{lemma}[TV to Unregularized Suboptimality]\label{lem:tv-unreg-subopt} If $\EE_{x \sim \rho}\sbr{\TV{\pi(Y \mid x) - \pi^\beta_\star(Y \mid x)}} \le \epsilon$, then $\SubOpt_0(\pi) \le 2\epsilon \cmax$ and $\Reg^0_T(\pi) \le 2\epsilon \cmax T$.
\end{lemma}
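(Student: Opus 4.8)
The plan is to bound the unregularized suboptimality $\SubOpt_0(\pi) = J_0(\pi) - J_0(\pi_\star^\beta)$ directly in terms of the given total-variation control. Recall $J_0(\pi) = \EE_{x \sim \rho, y \sim \pi(\cdot \mid x)}[c(x, y)]$, so the difference $J_0(\pi) - J_0(\pi_\star^\beta)$ is a difference of expectations of the \emph{same} cost function $c(x, \cdot)$ under two different response distributions $\pi(\cdot \mid x)$ and $\pi_\star^\beta(\cdot \mid x)$, averaged over $x \sim \rho$. The natural tool is the standard inequality that for a bounded function $g$ with $\|g\|_\infty \le M$ and two distributions $p, p'$, one has $|\EE_{p}[g] - \EE_{p'}[g]| \le 2M \TV{p - p'}$.

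First I would write, for each fixed $x$,
\begin{align*}
    \abr{\EE_{y \sim \pi(\cdot \mid x)}\sbr{c(x, y)} - \EE_{y \sim \pi_\star^\beta(\cdot \mid x)}\sbr{c(x, y)}} &= \abr{\sum_{y \in \Ycal} c(x, y)\rbr{\pi(y \mid x) - \pi_\star^\beta(y \mid x)}} \\
    &\le \cmax \sum_{y \in \Ycal} \abr{\pi(y \mid x) - \pi_\star^\beta(y \mid x)} \\
    &= 2\cmax \TV{\pi(Y \mid x) - \pi_\star^\beta(Y \mid x)},
\end{align*}
using that $c(x, y) \in [0, \cmax]$ and the definition of total-variation distance as half the $\ell_1$ distance. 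Then I would take expectation over $x \sim \rho$ and apply the hypothesis $\EE_{x \sim \rho}[\TV{\pi(Y \mid x) - \pi_\star^\beta(Y \mid x)}] \le \epsilon$ to conclude $\SubOpt_0(\pi) \le |J_0(\pi) - J_0(\pi_\star^\beta)| \le 2\cmax \epsilon$. The regret bound follows immediately since playing the fixed policy $\pi$ for all $T$ rounds gives $\Reg_T^0(\pi) = T\,\SubOpt_0(\pi) \le 2\epsilon\cmax T$.

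There is essentially no hard part here; this is a routine application of the variational characterization of total variation combined with the boundedness of the edit cost. The only point requiring minor care is that the bound on $c$ holds pointwise ($c(x, y) \le \cmax$ for all $x, y$, which follows from $\editdist$ taking values in $[0, \cmax]$), so that the constant $\cmax$ can be pulled out uniformly before integrating over $x$. I would also note that the factor of $2$ in the final bound arises precisely from the convention that $\TV{\cdot}$ denotes half the $\ell_1$ distance between the distributions, so the $\ell_1$ norm contributes a factor of $2$ relative to the TV distance.
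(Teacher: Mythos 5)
Your proposal is correct and follows essentially the same argument as the paper's proof: bound the per-context difference of expected costs by $\cmax$ times the $\ell_1$ distance between $\pi(\cdot \mid x)$ and $\pi_\star^\beta(\cdot \mid x)$, identify the $\ell_1$ distance as twice the total variation, take expectation over $x \sim \rho$, and obtain the regret bound by multiplying by $T$ for a fixed policy. The only cosmetic difference is that you carry absolute values throughout (which is slightly cleaner), whereas the paper bounds $\SubOpt_0(\pi)$ directly; the substance is identical.
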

\begin{proof} 
    \begin{align*}
        \SubOpt_0(\pi) = J_0(\pi) -  J_0(\pi^\star) &= \EE_{x \sim \rho}\sbr{\sum_{y \in \Ycal} \pi(y \mid x) c(x, y) - \sum_{y \in \Ycal} \pi^\beta_\star(y \mid x) c(x, y)}, \\
        &\le \EE_{x \sim \rho}\sbr{\sum_{y \in \Ycal} \abr{\pi(y \mid x) - \pi^\beta_\star(y \mid x)} c(x, y)},\\
        &= 2 \cmax \EE_{x \sim \rho}\sbr{\TV{\pi(Y \mid x) - \pi^\beta_\star(Y \mid x)}} \le 2 \epsilon \cmax.
    \end{align*}
Finally, $\Reg^0_T(\pi) = T\SubOpt_0(\pi) \le 2\epsilon\cmax T$
\end{proof}

\paragraph{Bounding Regularized Suboptimality and Regret.} One challenge in bounding regularized suboptimality $\SubOpt(\pi) = J_\beta(\pi) - J_\beta(\pi^\beta_\star)$ and regularized regret $\Reg_T = \sum_{t=1}^T \SubOpt(\pi_t)$ is dealing with KL divergence. E.g., a policy $\pi$ can satisfy $\TV{\pi(Y \mid x) - \pi^\beta_\star(Y \mid x)} \le \epsilon$ for some $x$ but still have $\KL(\pi(Y \mid x) || \piref(Y \mid x)) = \infty$ if $\pi(y \mid x) = \epsilon$, $\pi^\beta_\star(y \mid x) = \piref(y \mid x) = 0$. 

However, in RLHF analysis it is often assumed that densities are bounded~\cite{rosset2024direct,xie2024exploratory} and this allows us to avoid these cases. In fact, we will later assume this in our proof of DPO. Specifically, we have

\begin{assumption}[Bounded Density Ratios]\label{assum:bounded-density-ratio} We assume there exists $\Vmax > 0$, such that:
\begin{equation*}
\forall \pi \in \Pi, x \in \Xcal, y \in \Ycal,   \qquad  \abr{\log \frac{\pi(y \mid x)}{\piref(y \mid x)}} \le \frac{\Vmax}{\beta}.
\end{equation*}
\end{assumption}

Under this assumption we can bound the regularized suboptimality and regret.
\begin{lemma}[TV to Regularized Suboptimality]\label{lem:tv-subopt} If $\EE_{x \sim \rho}\sbr{\TV{\pi(Y \mid x) - \pi^\beta_\star(Y \mid x)}} \le \epsilon$, then under~\pref{assum:bounded-density-ratio}, we have $\SubOpt(\pi) \le 2\epsilon (\cmax + \Vmax)$ and $\Reg_T(\pi) \le 2\epsilon (\cmax + \Vmax) T$.
\end{lemma}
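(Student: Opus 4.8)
The plan is to decompose the regularized suboptimality $\SubOpt(\pi) = J_\beta(\pi) - J_\beta(\pi^\beta_\star)$ into a cost term and a KL term, and to bound each contribution by the total-variation distance $\epsilon$. Writing out the definition in~\pref{eqn:definition_beta_regularized_objective}, we have
\begin{align*}
\SubOpt(\pi) &= \EE_{x \sim \rho}\Bigl[\sum_{y} \pi(y \mid x) c(x, y) - \sum_{y} \pi^\beta_\star(y \mid x) c(x, y)\Bigr]\\
&\quad + \beta\,\EE_{x \sim \rho}\Bigl[\sum_{y} \pi(y \mid x) \log\tfrac{\pi(y \mid x)}{\piref(y \mid x)} - \sum_{y} \pi^\beta_\star(y \mid x)\log\tfrac{\pi^\beta_\star(y \mid x)}{\piref(y \mid x)}\Bigr].
\end{align*}
The first bracket is exactly the unregularized gap $\SubOpt_0(\pi)$, which~\pref{lem:tv-unreg-subopt} already controls by $2\epsilon\cmax$, so I would simply invoke that lemma for this piece.

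For the KL term, the key idea is to rewrite the difference of the two KL divergences as a single expectation against $\pi - \pi^\beta_\star$ of the log-density-ratio, and then apply~\pref{assum:bounded-density-ratio} pointwise. Concretely I would regroup the terms as
\begin{align*}
&\beta\sum_{y}\bigl(\pi(y\mid x) - \pi^\beta_\star(y\mid x)\bigr)\log\tfrac{\pi^\beta_\star(y\mid x)}{\piref(y\mid x)} + \beta\sum_{y}\pi(y\mid x)\log\tfrac{\pi(y\mid x)}{\pi^\beta_\star(y\mid x)},
\end{align*}
where the second sum is $\beta\,\KL(\pi \,\|\, \pi^\beta_\star) \ge 0$; since we want an upper bound on $\SubOpt(\pi)$, this nonnegative KL term is the main obstacle because it is not immediately controlled by total variation. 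The cleanest fix is to bound the full KL term directly rather than splitting off a KL: write the bracketed KL difference as $\sum_y \pi(y\mid x)\log\tfrac{\pi(y\mid x)}{\piref(y\mid x)} - \sum_y \pi^\beta_\star(y\mid x)\log\tfrac{\pi^\beta_\star(y\mid x)}{\piref(y\mid x)}$ and treat the bounded log-ratio $\abr{\log(\pi/\piref)}\le \Vmax/\beta$ and $\abr{\log(\pi^\beta_\star/\piref)}\le \Vmax/\beta$ from~\pref{assum:bounded-density-ratio} as a uniformly bounded ``reward'', so that the whole KL-term behaves like a bounded cost function with range $\Vmax$. This lets me reuse the same total-variation-to-cost argument from~\pref{lem:tv-unreg-subopt} with $\cmax$ replaced by $\Vmax$, yielding a bound of $2\epsilon\Vmax$ on the regularized part.

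Combining the two pieces gives $\SubOpt(\pi) \le 2\epsilon\cmax + 2\epsilon\Vmax = 2\epsilon(\cmax + \Vmax)$, and the regret bound follows immediately since $\Reg_T(\pi) = T\,\SubOpt(\pi) \le 2\epsilon(\cmax + \Vmax) T$, mirroring the final line of~\pref{lem:tv-unreg-subopt}. The step I expect to require the most care is handling the KL difference: naively expanding it produces a genuinely nonnegative $\KL(\pi\,\|\,\pi^\beta_\star)$ term that points the wrong way for an upper bound, so the argument must avoid introducing that term and instead view the bounded log-density-ratio as a single bounded integrand against $\pi(\cdot\mid x) - \pi^\beta_\star(\cdot\mid x)$, exactly where~\pref{assum:bounded-density-ratio} is essential and why it was introduced just before this lemma.
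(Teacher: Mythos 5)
Your overall skeleton matches the paper's: split $\SubOpt(\pi)$ into the unregularized cost gap plus a KL gap, invoke \pref{lem:tv-unreg-subopt} to get $2\epsilon\cmax$ for the former, and target $2\epsilon\Vmax$ for the latter. You also correctly diagnose that the regrouping which produces $+\beta\KL(\pi\,||\,\pi^\beta_\star)$ is a dead end (it is in fact circular: since $\beta\log\frac{\pi^\beta_\star(y\mid x)}{\piref(y\mid x)} = -c(x,y)-\beta\log Z(x)$, the cross term in that regrouping cancels the cost gap exactly, so the nonnegative KL \emph{is} the entire suboptimality). The gap is in your proposed fix. The KL difference has the form $\EE_{\pi}[f_1]-\EE_{\pi^\beta_\star}[f_2]$ with two \emph{different} integrands, $f_1=\log(\pi/\piref)$ under $\pi$ and $f_2=\log(\pi^\beta_\star/\piref)$ under $\pi^\beta_\star$. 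Your claim that boundedness of both log-ratios lets you ``reuse the TV-to-cost argument with $\cmax$ replaced by $\Vmax$'' is unsound: the argument in \pref{lem:tv-unreg-subopt} requires the \emph{same} function integrated against both measures, and a difference of expectations of two different bounded functions is not controlled by total variation at all. For instance, with $\pi=\pi^\beta_\star$, $f_1\equiv \Vmax/\beta$ and $f_2\equiv-\Vmax/\beta$, the TV distance is $0$ while the difference equals $2\Vmax/\beta$. Boundedness alone cannot close this step.

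The missing idea --- the crux of the paper's proof --- is that you cannot avoid splitting off a KL term; you must split it off in the \emph{other} direction. Adding and subtracting $\beta\EE_{\pi^\beta_\star}\sbr{\log(\pi/\piref)}$ gives
\begin{equation*}
\beta\EE_{\pi}\sbr{\log \tfrac{\pi}{\piref}} - \beta\EE_{\pi^\beta_\star}\sbr{\log \tfrac{\pi^\beta_\star}{\piref}}
= \beta\,\EE_{x\sim\rho}\sbr{\sum_{y}\rbr{\pi(y\mid x)-\pi^\beta_\star(y\mid x)}\log\tfrac{\pi(y\mid x)}{\piref(y\mid x)}}
- \beta\,\EE_{x\sim\rho}\sbr{\KL\rbr{\pi^\beta_\star(Y\mid x)\,||\,\pi(Y\mid x)}},
\end{equation*}
where the residual KL now enters with a \emph{negative} sign and can be discarded for an upper bound, and the remaining term is a single integrand bounded by $\Vmax/\beta$ (via \pref{assum:bounded-density-ratio}) against the signed measure $\pi-\pi^\beta_\star$, yielding at most $2\epsilon\Vmax$. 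So your instinct to ``view the bound as a single bounded integrand against $\pi-\pi^\beta_\star$'' is the right one, but this is not an identity --- it is an inequality whose validity is precisely the nonnegativity of $\KL(\pi^\beta_\star\,||\,\pi)$, i.e., exactly the kind of KL-splitting step your write-up declares it will avoid. Once this step is inserted, your argument becomes the paper's proof.
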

\begin{proof}
We define shorthand $\EE_{\pi_1}\sbr{\log \frac{\pi_2}{\pi_3}} = \EE_{x \sim \rho(\cdot), y \sim \pi_1(y \mid x)}\sbr{\log \frac{\pi_2(y \mid x)}{\pi_3(y \mid x)}}$. Then we have:
\begin{align*}
    \SubOpt(\pi) &= J_\beta(\pi) - J_\beta(\pi^\beta_\star) \\
    &= J_0(\pi) - J_0(\pi^\beta_\star) + \beta\EE_{\pi}\sbr{\log \frac{\pi}{\piref}} - \beta\EE_{\pi^\beta_\star}\sbr{\log \frac{\pi^\beta_\star}{\piref}}  \\
    &\le 2\epsilon\cmax + \beta\EE_{\pi}\sbr{\log \frac{\pi}{\piref}} - \beta\EE_{\pi^\beta_\star}\sbr{\log \frac{\pi^\beta_\star}{\piref}}\\
    &= 2\epsilon\cmax + \beta\EE_{\pi}\sbr{\log \frac{\pi}{\piref}} - \beta\EE_{\pi^\beta_\star}\sbr{\log \frac{\pi}{\piref}} + \beta\EE_{\pi^\beta_\star}\sbr{\log \frac{\pi}{\piref}} - \beta\EE_{\pi^\beta_\star}\sbr{\log \frac{\pi^\beta_\star}{\piref}}\\
    &= 2\epsilon\cmax + \beta\EE_{\pi}\sbr{\log \frac{\pi}{\piref}} - \beta\EE_{\pi^\beta_\star}\sbr{\log \frac{\pi}{\piref}} - \beta\EE_{\pi^\beta_\star}\sbr{\log \frac{\pi^\beta_\star}{\pi}}\\
    &\le 2\epsilon\cmax + \beta\EE_{\pi}\sbr{\log \frac{\pi}{\piref}} - \beta\EE_{\pi^\beta_\star}\sbr{\log \frac{\pi}{\piref}},
\end{align*}
where the first inequality follows from~\pref{lem:tv-unreg-subopt} and the last inequality non-negativity of KL divergence: 
\begin{equation*}
    -\beta \EE_{\pi^\beta_\star}\sbr{\log \frac{\pi^\beta_\star}{\pi}} = -\beta \EE_{x \sim \rho}\sbr{\KL\rbr{\pi^\beta_\star(Y \mid x) || \pi(Y \mid x)}} \le 0.
\end{equation*}

Finally, we have:
\begin{align*}
    \SubOpt(\pi) &\le 2\epsilon\cmax + \beta\EE_{\pi}\sbr{\log \frac{\pi}{\piref}} - \beta\EE_{\pi^\beta_\star}\sbr{\log \frac{\pi}{\piref}}\\
    &= 2\epsilon\cmax + \beta\EE_{x \sim \rho}\sbr{\sum_{y \in \Ycal}\rbr{\pi(y \mid x) - \pi^\beta_\star(y \mid x)} \log \frac{\pi(y \mid x)}{\pi^\beta_\star(y \mid x)}}\\
    &\le 2\epsilon\cmax + \beta\EE_{x \sim \rho}\sbr{\sum_{y \in \Ycal}\abr{\pi(y \mid x) - \pi^\beta_\star(y \mid x)} \abr{\log \frac{\pi(y \mid x)}{\pi^\beta_\star(y \mid x)}}}\\
    &\le 2\epsilon\cmax + 2\Vmax\EE_{x \sim \rho}\sbr{\TV{\pi(Y \mid x) - \pi^\beta_\star(Y \mid x)}}\\
    &\le 2\epsilon(\cmax + \Vmax),
    \end{align*}
where the third step uses triangle inequality and fourth step uses~\pref{assum:bounded-density-ratio}. Alternatively we can bound $\beta\EE_{x \sim \rho}\sbr{\sum_{y \in \Ycal}\abr{\pi(y \mid x) - \pi^\beta_\star(y \mid x)} \abr{\log \frac{\pi(y \mid x)}{\pi^\beta_\star(y \mid x)}}}$ as follows, 
\begin{align*}
    \beta\EE_{x \sim \rho}\sbr{\sum_{y \in \Ycal}\abr{\pi(y \mid x) - \pi^\beta_\star(y \mid x)} \abr{\log \frac{\pi(y \mid x)}{\pi^\beta_\star(y \mid x)}}}
    &\le \\
    &\leq 2\Vmax\EE_{x \sim \rho}\sbr{\TV{\pi(Y \mid x) - \pi^\beta_\star(Y \mid x)}}\\
    &\le 2\epsilon \Vmax,
\end{align*}

Finally, $\Reg_T(\pi) = \sum_{t=1}^T \SubOpt(\pi) = \SubOpt(\pi)T \le 2\epsilon(\cmax + \Vmax)T$.
\end{proof}

The advantage of these results is that we can focus on bounding total variation of the learned policy with respect to $\pi^\beta_\star$ and the result for sub-optimality and regret for either regularized or unregularized objective follows.

\subsection{Leaning from Supervised Feedback}\label{section::supervised_theory}

We recall that we are given a dataset of user-edits $\Dcal = \{(x_i, y_i, y'_i)\}_{i=1}^n$ where for every $i \in [n]$, we have $x_i \sim \rho(\cdot)$, $y_i \sim \piref(\cdot \mid x_i)$, and $y'_i \sim q(\cdot \mid x_i, y_i)$. The supervised fine-tuning  ($\sft$) approach learns the following policy:
\begin{equation*}
    \suppi = \arg\max_{\pi \in \Pi} \frac{1}{n} \sum_{i=1}^n \log \pi(y'_i \mid x_i).
\end{equation*}

\begin{restatable}[SFT Result Full]{theorem}{theoremsftmain}\label{thm:sft-main} Under~\pref{assum:user-dist}, ~\pref{assum:realizability} and~\pref{assum:bounded-density-ratio}, we have that when $n \geq  \frac{4(\cmax + \Vmax)^2\log\left( \ln |\Pi|/\delta\right)}{\epsilon^2}$ then with probability at least $1-\delta$:
\begin{align*}
     \SubOpt(\suppi) &\leq \epsilon +  2(\cmax + \Vmax)  \rbr{\errormle(n, \delta) + \min\left(\eta_{\max}\cdot D(\piref, \pi^\beta_\star), \bar{\eta}_{\max}\cdot D^{1/2}(\piref, \pi^\beta_\star)\right)} 
\end{align*}
where $\eta_{\max} = 1-\min_x \minp(x) $ and $\bar{\eta}_{\max} = \sqrt{\EE_{x \sim \rho}\sbr{(1-\minp(x))^2}}$ for $\minp(x) $ defined in~\pref{assum:user-dist} and $$D(\piref, \pi_\star^\beta) = \EE_{x \sim \rho}\sbr{\TV{\piref(Y \mid x) - \pi_\star^\beta(Y \mid x)}} .$$
\end{restatable}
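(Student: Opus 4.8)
The key structural observation is that SFT is nothing but maximum-likelihood estimation against the \emph{edit-induced marginal} $q \circ \piref$: since each triple is generated as $x_i \sim \rho$, $y_i \sim \piref(\cdot \mid x_i)$, $y'_i \sim q(\cdot \mid x_i, y_i)$, the targets $y'_i$ are, conditioned on $x_i$, i.i.d.\ samples from $(q\circ\piref)(\cdot \mid x_i) = \sum_{y}\piref(y\mid x_i)\,q(\cdot \mid x_i, y)$, and by \pref{assum:realizability} this marginal lies in $\Pi$. The plan is therefore to first invoke \pref{lem:tv-subopt} (which is where \pref{assum:bounded-density-ratio} enters) to reduce the regularized suboptimality to a total-variation bound, $\SubOpt(\suppi) \le 2(\cmax+\Vmax)\,D(\suppi, \pi_\star^\beta)$, and then split this distance by the triangle inequality into an estimation part and a bias part:
\[
D(\suppi, \pi_\star^\beta) \;\le\; \underbrace{D(\suppi,\, q\circ\piref)}_{\text{MLE estimation error}} \;+\; \underbrace{D(q\circ\piref,\, \pi_\star^\beta)}_{\text{bias of the edit target}}.
\]

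For the bias term I would apply the contraction property \pref{lem:contraction_property} with $\pi = \piref$, giving pointwise $\TV{q\circ\piref(\cdot\mid x) - \pi_\star^\beta(\cdot\mid x)} \le (1-\minp(x))\,\TV{\piref(\cdot\mid x) - \pi_\star^\beta(\cdot\mid x)}$. Taking $\EE_{x\sim\rho}$ then admits two complementary bounds: pulling out the factor $1-\minp(x)$ by its supremum $\eta_{\max} = 1-\min_x \minp(x)$ yields $\eta_{\max}\,D(\piref, \pi_\star^\beta)$, whereas Cauchy--Schwarz followed by $\TV \le 1$ (so that $\TV^2 \le \TV$) yields $\bar{\eta}_{\max}\, D^{1/2}(\piref, \pi_\star^\beta)$ with $\bar{\eta}_{\max} = \sqrt{\EE_{x\sim\rho}\sbr{(1-\minp(x))^2}}$. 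Taking the minimum gives exactly the bias term in the statement. Conceptually this quantifies the irreducible gap: a single pass of edits already pulls $\piref$ a factor $(1-\minp)$ closer to $\pi_\star^\beta$, but no amount of data removes this, since MLE fits $q\circ\piref$ and not $\pi_\star^\beta$ itself.

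The remaining estimation term $D(\suppi, q\circ\piref)$ I would control by a standard finite-class MLE concentration argument over the i.i.d.\ data with the realizable target $q\circ\piref \in \Pi$. I would organize the statistical error into the typical MLE rate, captured by $\errormle(n,\delta) = \widetilde{O}(\sqrt{\log(|\Pi|/\delta)/n})$, plus a high-probability fluctuation. Since $D(\suppi, q\circ\piref)$ is a total-variation quantity bounded in $[0,1]$, controlling its deviation to within $\epsilon/(2(\cmax+\Vmax))$ via a Hoeffding/martingale bound is what produces the threshold $n \ge 4(\cmax+\Vmax)^2\log(\ln|\Pi|/\delta)/\epsilon^2$; multiplying this fluctuation back through the factor $2(\cmax+\Vmax)$ from \pref{lem:tv-subopt} sends the additive $\epsilon$ outside the bracket while scaling $\errormle$ by $2(\cmax+\Vmax)$, exactly matching the claimed form. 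Reassembling the three pieces then gives the theorem.

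I expect the main obstacle to be the estimation step rather than the (essentially algebraic) bias step: specifically, securing a TV-distance MLE guarantee that is compatible with \pref{lem:tv-subopt} while keeping the log-density ratios bounded under \pref{assum:bounded-density-ratio}, and cleanly separating the two high-probability events—the MLE-consistency event carrying $\log|\Pi|$ and the deviation event carrying $\epsilon$—so they combine under a single $\delta$ budget. The conceptual crux, which makes the bias term genuinely unavoidable, is verifying that the MLE target is the marginal $q\circ\piref$ (realizable by \pref{assum:realizability}) rather than $\pi_\star^\beta$.
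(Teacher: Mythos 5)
Your proposal is correct and follows essentially the same route as the paper's proof: the same triangle-inequality decomposition through the realizable MLE target $q\circ\piref$, the same use of \pref{lem:contraction_property} with the two complementary bounds ($\eta_{\max}$ via the supremum and $\bar{\eta}_{\max}$ via Cauchy--Schwarz with $\mathrm{TV}\le 1$), and the same reduction from regularized suboptimality to total variation via \pref{lem:tv-subopt}. The only differences are cosmetic — you apply \pref{lem:tv-subopt} first rather than last, and the paper uses a single MLE concentration event (the additive $\epsilon$ in the statement arises purely from the sample-size choice making $2(\cmax+\Vmax)\errormle(n,\delta)\le\epsilon$, not from a second Hoeffding event) — neither of which changes the substance of the argument.
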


\begin{proof} The Bayes classifier of this log-likelihood problem is given by $q\circ \piref: \Xcal \rightarrow \Delta(\Ycal)$ where $q\circ \piref(y' \mid x) = \sum_{y \in \Ycal} q(y' \mid y, x) \piref(y \mid x)$. From~\pref{assum:realizability} we have realizability and so using standard finite-sample guarantees for maximum likelihood (e.g., \cite{geer2000empirical}) we have%
\begin{equation}\label{eqn:sft-mle-bound}
    \EE_{x \sim \rho}\sbr{\TV{\suppi(Y \mid x) - q\circ \piref(Y \mid x)}} \le \errormle(n, \delta) \defeq \sqrt{\frac{1}{n}\log \rbr{\frac{\ln|\Pi|}{\delta}}}
\end{equation}
with probability at least $1-\delta$ for any fixed $\delta \in (0, 1)$. Using triangle inequality we have:
\begin{align*}
    \EE_{x \sim \rho}\sbr{\TV{\suppi(Y \mid x) - \pi^\beta_\star(Y \mid x)}} &\le \EE_{x \sim \rho}\sbr{\TV{\suppi(Y \mid x) - q\circ \piref(Y \mid x)}} + \\
    &\quad \EE_{x \sim \rho}\sbr{\TV{q \circ \piref(Y \mid x) - \pi^\beta_\star(Y \mid x)}},\\
    &\le \errormle(n, \delta) +  \EE_{x \sim \rho}\sbr{(1-\minp(x))\TV{\piref(Y \mid x) - \pi^\beta_\star(Y \mid x)}},
\end{align*}

where the last line uses~\pref{lem:contraction_property} and~\pref{eqn:sft-mle-bound}. 

We compute two distinct bounds for the last term and consider the tightest among them: 
\begin{multline*}
\EE_{x \sim \rho}\sbr{(1-\minp(x))\TV{\piref(Y \mid x) - \pi^\beta_\star(Y \mid x)}} \\\qquad \qquad \qquad\qquad \qquad\quad\stackrel{(ii)}{\leq}  \sqrt{\EE_{x \sim \rho}\sbr{(1-\minp(x))^2} \cdot \EE_{x\sim \rho}\sbr{\TV{\piref(Y\mid x) - \pi^\beta_\star(Y \mid x)}^2}}\\
\stackrel{(ii)}{\leq} \sqrt{ \EE_{x \sim \rho}\sbr{(1-\minp(x))^2}\EE_{x\sim \rho}\sbr{\TV{\piref(Y\mid x) - \pi^\beta_\star(Y \mid x)}}}
\end{multline*}
where inequality $(i)$ holds because of Cauchy-Schwartz and $(ii)$ because the $\mathrm{TV}$ distance lies in $[0,1]$. Moreover, the following inequality also holds:
\begin{multline*}
\EE_{x \sim \rho}\sbr{(1-\minp(x))\TV{\piref(Y \mid x) - \pi^\beta_\star(Y \mid x)}} \leq \\ (1-\min_{x} \minp(x)) \cdot  \EE_{x\sim \rho}\sbr{\TV{\piref(Y\mid x) - \pi^\beta_\star(Y \mid x)}}
\end{multline*}
 
Thus we conclude that, 
\begin{multline*}
    \EE_{x \sim \rho}\sbr{\TV{\suppi(Y \mid x) - \pi^\beta_\star(Y \mid x)}}  \leq \errormle(n, \delta) + \\\min\left(  (1-\min_x \minp(x)) D(\piref, \pi_\star^\beta) ,  \sqrt{ \EE_{x \sim \rho}\sbr{(1-\minp(x))^2}} D^{1/2}(\piref, \pi_\star^\beta)  \right) 
\end{multline*}

We can use~\pref{lem:tv-unreg-subopt} or~\pref{lem:tv-subopt} to get bounds on sub-optimality and regret. In particular, using~\pref{lem:tv-subopt} we get:
\begin{align*}
    \SubOpt(\suppi) &\le  2(\cmax + \Vmax)\rbr{\errormle(n, \delta) + \min\left(\eta_{\max}\cdot D(\piref, \pi^\beta_\star), \bar{\eta}_{\max}\cdot D^{1/2}(\piref, \pi^\beta_\star)\right)},\\
    \Reg_T(\suppi) &\le 2(\cmax + \Vmax)\rbr{\errormle(n, \delta) + \min\left(\eta_{\max}\cdot D(\piref, \pi^\beta_\star), \bar{\eta}_{\max} \cdot D^{1/2}(\piref, \pi^\beta_\star)\right)}T.
\end{align*}
where recall that $\eta_{\max} = 1-\min_x \minp(x) $ and $\bar{\eta}_{\max} = \sqrt{ \EE_{x \sim \rho}\sbr{(1-\minp(x))^2}}$.

Finally, setting $n = \frac{4(\cmax + \Vmax)^2\log\left( \ln |\Pi|/\delta\right)}{\epsilon^2}$ we get the desired result. 
\end{proof}

\subsection{Theoretical Analysis with Preference Feedback}\label{section::appendix_theory_preferences}

Given the user edit data $\Dcal = \{(x_i, y_i, y'_i)\}_{i=1}^n$, we first create a preference learning dataset $\Dpref=\{(x_i, \yt_i, \yt'_i, z_i)\}_{i=1}^n$ as follows: for every $i \in [n]$, we sample $z_i \in \unf(\cbr{\pm 1})$ and
if $z_i=1$ then we define $\yt_i=y_i$ and $\yt'_i = y'_i$, otherwise, we define $\yt_i=y'_i$ and $\yt'_i = y_i$.

Given the LLM policy class $\Pi$, we induce a binary classifier  class $\mathcal{C} = \{f_\pi: (x, \yt, \yt') \rightarrow \Delta(\cbr{\pm 1}) \mid \forall \pi \in \Pi\}$ as:
\begin{align}
\mathcal{C}  = \{f_\pi: (x, \yt, \yt') &\rightarrow \Delta(\cbr{\pm 1}) \mid \forall \pi \in \Pi\}, \quad\mbox{where} \notag\\
    \forall \pi \in \Pi, \quad f_\pi(Z=z \mid x, \tilde{y}, \tilde{y}') &= \sigma \rbr{z \log \frac{\pi(\tilde{y}' \mid x)}{\piref(\tilde{y}' \mid x)} - z \log \frac{\pi(\tilde{y} \mid x)}{\piref(\tilde{y} \mid x)}}.\label{equation::definition_classifier_class}
\end{align}
The size of $\mathcal{C}$ is the same as the size of the policy class $\Pi$.  

We define the empirical log-likelihood $\dpoloss(\pi)$ for a policy $\pi \in \Pi$ as:
\begin{equation}\label{eqn:dpo-classification}
    \dpoloss(\pi) = \frac{1}{n} \sum_{i=1}^n \log f_\pi\rbr{z_i \mid x_i, \yt_i, \yt'_i}.
\end{equation}

Using the construction of $\Dpref$ from $\Dcal$ we can express $\dpoloss(\pi)$ as:
\begin{equation}\label{eqn:dpo-erm}
\dpoloss(\pi) =  \frac{1}{n} \sum_{i=1}^n \log \sigma \rbr{\log \frac{\pi(y'_i \mid x_i)}{\piref(y'_i \mid x_i)} - \log \frac{\pi(y_i \mid x_i)}{\piref(y_i \mid x_i)}},
\end{equation}
which is the standard DPO loss without $\beta$ factor. Our theoretical analysis uses this variant of DPO without $\beta$ since in our analysis $\beta$ is a given value with respect to which~\pref{assum:user-dist} holds. In our experiments, we use the standard version of DPO with the $\beta$ term. Given the model class $\mathcal{C}$, we maximize the log-likelihood:
\begin{equation}
    \dpopi = \arg\max_{f_\pi \in \mathcal{C}} \dpoloss(\pi).
\end{equation}

Each datapoint $(x_i, \yt_i, \yt'_i, z_i) \in \Dpref$ is sampled IID from a distribution and let this be denoted by
$\Dipref(x, \yt, \yt', z)$. From the construction of $\Dpref$ we have:

\begin{equation}\label{eqn:dipref-cases}
    \Dipref(x, \yt, \yt', z) =  
    \begin{cases}
        \frac{1}{2} \rho(x)\piref(\yt \mid x) q(\yt' \mid x, \yt), \quad &z=1\\ 
        \frac{1}{2}\rho(x)\piref(\yt' \mid x) q(\yt \mid x, \yt'), \quad &z=-1
    \end{cases}
\end{equation}

Here we show in detail our calculations from~\pref{sec:theory} of how the distribution $\Dpref(z=1 \mid x, \tilde{y}, \tilde{y'})$ satisfies the Bradley-Terry Assumption. As we are using a variant of DPO without $\beta$, we will instead use a variant of Bradley-Terry Distribution as well which has a $\beta$ temperature.

\begin{lemma}[Satisfying Bradley-Terry Distribution]\label{lem:bt-dist} Under~\pref{assum:user-dist}, the probability of preferring $\tilde{y}'$ over $\tilde{y}$, i.e., $\tilde{y} \preceq \tilde{y}'$, in our preference dataset $\Dpref$ satisfies the Bradley-Terry assumption with respect to the cost function $c$. Formally, for every $x \in \supp~\rho, \tilde{y}, \tilde{y}' \in \Ycal$ we have:
\begin{equation*}
    \Dpref(z=1 \mid x, \tilde{y}, \tilde{y}') = \sigma\rbr{\frac{c(x, y) - c(x, \tilde{y}')}{\beta}},
\end{equation*}
where $\sigma(t) = \frac{1}{1 + \exp(-t)}$ is the sigmoid function.
\end{lemma}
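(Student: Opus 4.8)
The plan is to compute the conditional $\Dpref(z=1 \mid x, \tilde{y}, \tilde{y}')$ directly from the joint distribution in~\pref{eqn:dipref-cases} via Bayes' rule, and then simplify using the two structural facts available to us: the balance equation of~\pref{assum:user-dist} and the closed form of $\pi_\star^\beta$ from~\pref{lem:optimal_proof}. Since the derivation in~\pref{sec:theory} already sketches the essential algebra, this lemma is really just a careful formalization of that computation with the $\beta$-scaled cost in place, so I expect no genuine obstacle---only bookkeeping of cancellations and a note on where the densities are nonzero.

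First I would restrict attention to triples $(x, \tilde{y}, \tilde{y}')$ of positive marginal probability (so that conditioning is well-defined and the relevant densities are strictly positive) and write
\[
\Dpref(z=1 \mid x, \tilde{y}, \tilde{y}') = \frac{\Dipref(x, \tilde{y}, \tilde{y}', 1)}{\Dipref(x, \tilde{y}, \tilde{y}', 1) + \Dipref(x, \tilde{y}, \tilde{y}', -1)}.
\]
Substituting the two cases of~\pref{eqn:dipref-cases} and cancelling the common factor $\tfrac{1}{2}\rho(x)$ leaves the ratio $\frac{\piref(\tilde{y}\mid x)\,q(\tilde{y}'\mid x,\tilde{y})}{\piref(\tilde{y}\mid x)\,q(\tilde{y}'\mid x,\tilde{y}) + \piref(\tilde{y}'\mid x)\,q(\tilde{y}\mid x,\tilde{y}')}$.

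Next I would invoke the balance equation~\pref{eqn:balance-eqn} in the form $q(\tilde{y}'\mid x,\tilde{y})/q(\tilde{y}\mid x,\tilde{y}') = \pi_\star^\beta(\tilde{y}'\mid x)/\pi_\star^\beta(\tilde{y}\mid x)$. Dividing numerator and denominator of the ratio by $q(\tilde{y}\mid x,\tilde{y}')$ and applying this identity eliminates the user distribution $q$ entirely, producing $\frac{\piref(\tilde{y}\mid x)\,\pi_\star^\beta(\tilde{y}'\mid x)}{\piref(\tilde{y}\mid x)\,\pi_\star^\beta(\tilde{y}'\mid x) + \piref(\tilde{y}'\mid x)\,\pi_\star^\beta(\tilde{y}\mid x)}$, which is exactly the second line of the main-text derivation.

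Finally, I would substitute the closed form $\pi_\star^\beta(y\mid x) = \tfrac{\piref(y\mid x)}{Z(x)}\exp\rbr{-c(x,y)/\beta}$ from~\pref{lem:optimal_proof}. The factor $\piref(\tilde{y}\mid x)\,\piref(\tilde{y}'\mid x)/Z(x)$ then appears in both numerator and denominator and cancels, leaving $\frac{\exp(-c(x,\tilde{y}')/\beta)}{\exp(-c(x,\tilde{y}')/\beta) + \exp(-c(x,\tilde{y})/\beta)}$; dividing through by the numerator yields precisely $\sigma\rbr{(c(x,\tilde{y}) - c(x,\tilde{y}'))/\beta}$, the claimed Bradley--Terry form. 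The only point requiring any care is verifying that each cancelled quantity is nonzero, which is guaranteed on the positive-probability triples we conditioned on at the outset.
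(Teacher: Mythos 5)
Your proposal is correct and follows essentially the same route as the paper's own proof: Bayes' rule on the joint distribution from~\pref{eqn:dipref-cases}, the balance equation of~\pref{assum:user-dist} to eliminate $q$, and the closed form of $\pi_\star^\beta$ from~\pref{lem:optimal_proof} to reduce to the sigmoid. The only difference is your explicit restriction to positive-probability triples to justify the cancellations, a point the paper handles implicitly via the $x \in \supp~\rho$ restriction in the statement.
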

\begin{proof}
 The probability of the event $\tilde{y} \preceq \tilde{y}'$ is given by $\Dpref(z = 1 \mid x, \tilde{y}, \tilde{y}')$ which is also the Bayes' classifier of our classification problem. Starting from Bayes's theorem we get:
 \begin{align*}
     \Dpref(z=1 \mid x, \tilde{y}, \tilde{y}') &= \frac{\Dpref(z=1, x, \tilde{y}, \tilde{y}')}{\Dpref(z=1, x, \tilde{y}, \tilde{y}') + \Dpref(z=-1, x, \tilde{y}, \tilde{y}')}, \\
     &= \frac{\piref(\yt \mid x) q(\yt' \mid x, \yt)}{\piref(\yt \mid x) q(\yt' \mid x, \yt) + \piref(\yt' \mid x) q(\yt \mid x, \yt')}, &\mbox{(using~\pref{eqn:dipref-cases})},\\
     &= \frac{\piref(\yt \mid x)\pi^\beta_\star(\yt' \mid x)}{\piref(\yt \mid x) \pi^\beta_\star(\yt' \mid x) + \piref(\yt' \mid x) \pi^\beta_\star(\yt \mid x)}, &\mbox{(using~\pref{assum:user-dist})},\\
     &= \frac{\exp(-c(x, \yt')/\beta)}{\exp(-c(x, \yt')/\beta) + \exp(-c(x, \yt)/\beta)}, &\mbox{(using~\pref{lem:optimal_proof})}\\
     &= \sigma\rbr{\frac{c(x, \yt) - c(x, \yt')}{\beta}}.
 \end{align*}
 \end{proof}

From~\pref{assum:realizability} we have $\pi^\beta_\star \in \Pi$ which gives
\begin{align*}
    f_{\pi^\beta_\star}(Z=1 \mid x, \yt, \yt') &= \sigma\rbr{\beta \log \frac{\pi^\beta_\star(\yt' \mid x)}{\piref(\yt' \mid x)} - \beta \log \frac{\pi^\beta_\star(\yt \mid x)}{\piref(\yt \mid x)}}\\
    &= \sigma\rbr{\frac{c(x, \yt) - c(x, \yt')}{\beta}}.
\end{align*}

This means that we have realizability for the classification problem in~\pref{eqn:dpo-classification}. We can then use standard MLE guarantees to state our next result.

\begin{lemma}[MLE Guarantee] For any $\delta \in (0, 1)$, under~\pref{assum:realizability} we have:
\begin{equation}\label{eqn:dpo-mle}
    \EE_{(x, \yt, \yt') \sim \Dipref}\sbr{\TV{f_{\dpopi}(Z \mid x, \yt, \yt') - \Dipref(Z \mid x, \yt, \yt')}} \le \sqrt{\frac{2}{n} \ln\frac{|\Pi|}{\delta}},
\end{equation}
where $\Dipref(x, \yt, \yt')$ is the marginal distribution of~\pref{eqn:dipref-cases} and given by $\Dipref(x, \yt, \yt') = \frac{\rho(x)}{2}\rbr{\piref(\yt \mid x)q(\yt' \mid x, \yt) + \piref(\yt' \mid x)q(\yt \mid x, \yt')}$. 
\end{lemma}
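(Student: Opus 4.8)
The statement is the standard finite-class maximum-likelihood generalization bound, specialized to the conditional classification problem in~\pref{eqn:dpo-classification}, so the plan is to invoke the canonical MLE argument after verifying its hypotheses. First I would record that the problem is realizable: by~\pref{assum:realizability} we have $\pi_\star^\beta \in \Pi$, and the computation immediately preceding the lemma shows $f_{\pi_\star^\beta}(Z \mid x, \yt, \yt') = \Dipref(Z \mid x, \yt, \yt')$, so the Bayes-optimal conditional $f^\star \defeq \Dipref(\cdot \mid \cdot)$ lies in the induced class $\mathcal{C}$, whose cardinality equals $|\Pi|$. Writing $w = (x, \yt, \yt')$ for the covariate and $z$ for the label, the data are $n$ IID draws $w_i \sim \Dipref$, $z_i \sim f^\star(\cdot \mid w_i)$, and $f_{\dpopi}$ is by definition the MLE over $\mathcal{C}$.

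The heart of the argument is an exponential-moment (Hellinger-affinity) bound. For a fixed $f \in \mathcal{C}$, a direct computation gives $\EE_{z \sim f^\star(\cdot \mid w)}[\sqrt{f(z \mid w)/f^\star(z \mid w)}] = \sum_z \sqrt{f(z \mid w)\, f^\star(z \mid w)} = 1 - h^2(w)$, where $h^2(w)$ denotes the squared Hellinger distance between $f(\cdot \mid w)$ and $f^\star(\cdot \mid w)$. Using $1 - t \le e^{-t}$ together with independence across the $n$ samples yields $\EE[\prod_{i=1}^n \sqrt{f(z_i \mid w_i)/f^\star(z_i \mid w_i)}] \le \exp(-n\, \EE_w[h^2(w)])$. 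Applying Markov's inequality to $\exp(\tfrac12 \sum_i \log(f(z_i \mid w_i)/f^\star(z_i \mid w_i)))$ and a union bound over the finite class $\mathcal{C}$ then give, with probability at least $1-\delta$ and simultaneously for all $f \in \mathcal{C}$, the inequality $n\, \EE_w[h^2(f, f^\star)] \le \ln(|\Pi|/\delta) + \tfrac12 \sum_{i=1}^n \log(f^\star(z_i \mid w_i)/f(z_i \mid w_i))$.

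I would then instantiate this at $f = f_{\dpopi}$. Because $f_{\dpopi}$ maximizes $\sum_i \log f(z_i \mid w_i)$ over $\mathcal{C}$ and $f^\star \in \mathcal{C}$, the empirical log-likelihood-ratio term $\sum_i \log(f^\star(z_i \mid w_i)/f_{\dpopi}(z_i \mid w_i))$ is nonpositive, so it drops out and leaves $\EE_w[h^2(f_{\dpopi}, f^\star)] \le \ln(|\Pi|/\delta)/n$. Converting Hellinger to total variation via the pointwise inequality $\TV{\,\cdot\,}^2 \le 2 h^2$ gives $\EE_w[\TV{f_{\dpopi}(Z \mid w) - f^\star(Z \mid w)}^2] \le 2\ln(|\Pi|/\delta)/n$, and a final application of Jensen's inequality (concavity of the square root) moves the expectation inside the root to produce exactly $\EE_w[\TV{f_{\dpopi}(Z \mid w) - f^\star(Z \mid w)}] \le \sqrt{(2/n)\ln(|\Pi|/\delta)}$, matching~\pref{eqn:dpo-mle}.

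The routine parts are the affinity identity and the constant bookkeeping; the one step deserving care is the exponential-moment inequality, where the conditioning between the covariate $w$ and the label $z$ must be handled correctly so that the per-sample factors genuinely multiply and the relaxation $1 - t \le e^{-t}$ is applied to the averaged Hellinger term $\EE_w[h^2(w)]$ rather than pointwise. The binary label space here makes the affinity computation trivial, so I expect no real obstacle beyond faithfully tracking the factor of $2$ arising from $\TV{\,\cdot\,} \le \sqrt{2}\, h$ together with the union-bound term, which are precisely what combine to yield the stated constant.
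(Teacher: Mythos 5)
Your proposal is correct and takes essentially the same route as the paper: the paper's proof simply invokes Theorem~21 of Agarwal et al.~\cite{agarwal2020flambe} --- the standard finite-class realizable MLE guarantee, itself proved by precisely the Hellinger-affinity, exponential-moment, Markov-plus-union-bound argument you write out --- and then applies the inequality $\EE[\sqrt{X^2}] \le \sqrt{\EE[X^2]}$ (which the paper calls Chernoff's bound but is really Jensen, as you correctly identify) to pass from squared total variation to total variation. The only difference is that you inline the proof of the cited black box rather than citing it, and you do so correctly, including the realizability verification $f_{\pi_\star^\beta} = \Dipref(Z \mid \cdot)$ and the bookkeeping that yields the stated constant $\sqrt{(2/n)\ln(|\Pi|/\delta)}$.
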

\begin{proof} This is a standard MLE bound under realizability. Proof follows from using Theorem 21 of Agarwal et al.~\cite{agarwal2020flambe} and applying Chernoff's bound $\EE[\sqrt{X^2}] \le \sqrt{\EE[X^2]}$.
\end{proof}
Using $\TV{p-q} = \frac{1}{2}\nbr{p-q}_1$ for two finite value distributions $p, q$ we get:
\begin{align}
    &\TV{f_{\dpopi}(Z \mid x, \yt, \yt') - \Dipref(Z \mid x, \yt, \yt')} \nonumber\\
    &\qquad = \abr{f_{\dpopi}(Z=1 \mid x, \yt, \yt') - \Dipref(Z=1 \mid x, \yt, \yt')}.\label{eqn:tv-l1}
\end{align}

For convenience, we define a notion of implicit cost $\hat{c}(x, y)$ given by:
\begin{equation}\label{eqn:implicit-cost-1}
    \dpopi(y \mid x) = W(x)\piref(y \mid x) \exp(-\hat{c}(x, y)/\beta),
\end{equation}
where $W(x) = \sum_{y' \in \Ycal}\piref(y' \mid x) \exp(-\hat{c}(x, y')/\beta)$ is the normalization constant. As $\hat{c} \rightarrow c$, we have $\dpopi \rightarrow \pi^\beta_\star$. Rewriting~\pref{eqn:implicit-cost-1}, we get: 
\begin{equation}\label{eqn:implicit-cost-2}
    \hat{c}(x, y) = -\beta \log \frac{\dpopi(y \mid x)}{\piref(y \mid x)} + \beta \log W(x)
\end{equation}
which gives
\begin{equation}\label{eqn:implicit-reward-diff}
    f_{\dpopi}(Z=1 \mid x, \yt, \yt') = \sigma\rbr{\frac{\hat{c}(x, \yt) - \hat{c}(x, \yt')}{\beta}}.
\end{equation}

Using Equations~\ref{eqn:tv-l1} and~\ref{eqn:implicit-reward-diff} and~\pref{lem:bt-dist} we can simplify \pref{eqn:dpo-mle} to:
\begin{equation}\label{eqn:dpo-mle-2}
    \EE_{(x, \yt, \yt') \sim \Dipref}\sbr{\abr{\sigma\rbr{\frac{\hat{c}(x, \yt) - \hat{c}(x, \yt')}{\beta}} - \sigma\rbr{\frac{c(x, \yt) - c(x, \yt'))}{\beta}}}} \le \sqrt{\frac{2}{n} \ln\frac{|\Pi|}{\delta}}.
\end{equation}

To further simplify this result, we will need certain properties of the sigmoid function (\pref{lem:sigmoid_bounds} and~\pref{lem:sigmoid-derivative-bound}). We will also assume~\pref{assum:bounded-density-ratio}.

Let $Z_1 = \nicefrac{(\hat{c}(x, \yt) - \hat{c}(x, \yt'))}{\beta}$ and $Z_2 = \nicefrac{(c(x, \yt) - c(x, \yt'))}{\beta}$. We have $|Z_1| \le \nicefrac{2\Vmax}{\beta}$ from~\pref{assum:bounded-density-ratio} and $\abr{Z_2} \le \cmax$  since $c(x, y) \in [0, \cmax]$. Then starting from~\pref{eqn:dpo-mle-2} we get:
\begin{align*}
    &\abr{\sigma\rbr{\hat{c}(x, \yt) - \hat{c}(x, \yt')} - \sigma(c(x, \yt) - c(x, \yt'))} \\
    &= \abr{\sigma(Z_1) - \sigma(Z_2)} \\
    &\ge \min\{\sigma'(Z_1), \sigma'(Z_2)\}\abr{Z_1 - Z_2},  &\mbox{(using~\pref{lem:sigmoid_bounds})},\\
    &\ge \min\cbr{\sigma'(2\Vmax/\beta), \sigma'(\cmax)} \abr{Z_1 - Z_2}  &\mbox{(using~\pref{lem:sigmoid-derivative-bound})}
\end{align*}

This gives us:
\begin{equation}\label{eqn:dpo-cost-diff}
   \EE_{(x, \yt, \yt') \sim \Dipref}\sbr{\abr{ \rbr{\hat{c}(x, \yt) - c(x, \yt')} - \rbr{c(x, \yt) - c(x, \yt')}}} \leq \epsilon_n,
\end{equation}
where $\epsilon_n = \frac{\beta}{\min\cbr{\sigma'(2\Vmax/\beta), \sigma'(\cmax)}}\sqrt{\frac{2}{n} \ln\frac{|\Pi|}{\delta}}$.

Using~\pref{lem:optimal_proof} we have:
\begin{equation}\label{eqn:dpo-cost}
    c(x, \yt) = -\beta \log \frac{\pi^\beta_\star(\yt \mid x)}{\piref(\yt \mid x)} + \beta \log W^\star(x).
\end{equation}
Plugging in values of $c$ from~\pref{eqn:dpo-cost} and $\hat{c}$ from~\pref{eqn:implicit-cost-2} in~\pref{eqn:dpo-cost-diff} we get:
\begin{align*}
   \EE_{\Dipref}\sbr{\abr{\rbr{\beta \log \frac{\dpopi(\yt \mid x)}{\piref(\yt \mid x)} - \beta \log \frac{\dpopi(\yt' \mid x)}{\piref(\yt' \mid x)}} - ( \beta \log \frac{\pi^\beta_\star(\yt \mid x)}{\piref(\yt \mid x)} -  \beta \log \frac{\pi^\beta_\star(\yt' \mid x)}{\piref(\yt' \mid x)}}} \leq \epsilon_n,
\end{align*}
which simplifies to 
\begin{align*}
   \EE_{(x, \yt, \yt') \sim \Dipref}\sbr{\abr{\beta \log \frac{\dpopi(\yt \mid x)}{\pi^\beta_\star(\yt \mid x)} - \beta \log \frac{\dpopi(\yt' \mid x)}{\pi^\beta_\star(\yt' \mid x)}}} \leq \epsilon_n,
\end{align*}

We now invoke the concentrability assumption (\pref{assum:concentrability}) which we restate below for convenience:
\assumptionprefconcentrability*

to get:
\begin{equation}\label{eqn:dpo-after-concentrability}
   \EE_{(x, \yt, \yt') \sim Q_{\dpopi}}\sbr{\abr{\beta \log \frac{\dpopi(\yt' \mid x)}{\pi^\beta_\star(\yt' \mid x)} - \beta \log \frac{\dpopi(\yt \mid x)}{\pi^\beta_\star(\yt \mid x)}}} \leq C_{\mathrm{PREF}}\cdot \epsilon_n,
\end{equation}
where we used the fact that $\Dipref = Q_{\piref}$. Concentrability style assumptions are common in analysis of RL algorithms~\cite{chen2019information,rosset2024direct}. Our assumption is similar to the one in Theorem 1 of Rosset et al.,~\cite{rosset2024direct}. the main difference is that our assumption is designed for learning from user-edits while that of Rosset et al., is designed for iterative learning with preference feedback.

Simplifying~\pref{eqn:dpo-after-concentrability} gives:
\begin{equation}
    \EE_{(x, \yt, \yt') \sim Q_{\dpopi}}\sbr{\abr{\log \frac{\dpopi(\yt' \mid x)\pi^\beta_\star(\yt \mid x)}{\pi^\beta_\star(\yt' \mid x)\dpopi(\yt \mid x)}}} \leq \frac{C_{\mathrm{PREF}}}{\beta}\cdot \epsilon_n
\end{equation}

We lower bound the left hand side of this assumption:

\begin{align*}
   &\EE_{(x, \yt, \yt') \sim Q_{\dpopi}}\sbr{\abr{\log \frac{\dpopi(\yt' \mid x)^\beta_\star(\yt \mid x)}{\pi^\beta_\star(\yt' \mid x)\dpopi(\yt \mid x)}}}, \\
   &\ge \EE_{(x, \yt, \yt') \sim Q_{\dpopi}}\sbr{\frac{\abr{\frac{\dpopi(\yt' \mid x)\pi^\beta_\star(\yt \mid x)}{\pi^\beta_\star(\yt' \mid x)\dpopi(\yt \mid x)} - 1}}{\frac{\dpopi(\yt' \mid x)\pi^\beta_\star(\yt \mid x)}{\pi^\beta_\star(\yt' \mid x)\dpopi(\yt \mid x)} + 1}}, \quad \mbox{(using $|\log(t)|\ge \frac{|t-1|}{t+1}$ for $t>0$ from~\pref{lem:log-bound})}\\
   &= \EE_{(x, \yt, \yt') \sim Q_{\dpopi}}\sbr{\frac{\abr{\dpopi(\yt' \mid x)\pi^\beta_\star(\yt \mid x) - \pi^\beta_\star(\yt' \mid x)\dpopi(\yt \mid x)}}{\dpopi(\yt' \mid x)\pi^\beta_\star(\yt \mid x) + \pi^\beta_\star(\yt' \mid x)\dpopi(\yt \mid x)}},\\
   &= \frac{1}{2}\EE_{x \sim \rho}\sbr{\sum_{\yt, \yt' \in \Ycal}\abr{\dpopi(\yt' \mid x)\pi^\beta_\star(\yt \mid x) - \pi^\beta_\star(\yt' \mid x)\dpopi(\yt \mid x)}}, \quad \mbox{(using the definition of $Q_{\dpopi}$)}\\
   &\ge \frac{1}{2}\EE_{x \sim \rho}\sbr{\sum_{\yt \in \Ycal}\abr{\sum_{\yt' \in \Ycal}\dpopi(\yt' \mid x)\pi^\beta_\star(\yt \mid x) - \pi^\beta_\star(\yt' \mid x)\dpopi(\yt \mid x)}}, \quad \mbox{(triangle inequality)}\\
    &= \EE_{x \sim \rho}\sbr{\TV{\piref(Y \mid x) - \pi^\beta_\star(Y \mid x)}}
\end{align*}

Combining this all we get:
\begin{equation}\label{eqn:dpo-main-bound}
\EE_{x \sim \rho}\sbr{\TV{\piref(Y \mid x) - \pi^\beta_\star(Y \mid x)}} \le \frac{C_{\mathrm{PREF}}}{\min\cbr{\sigma'(2\Vmax/\beta), \sigma'(\cmax)}}\sqrt{\frac{2}{n} \ln\frac{|\Pi|}{\delta}}
\end{equation}

We can wrap up the proof with the application of~\pref{lem:tv-subopt}.

\begin{restatable}[DPO Result]{theorem}{theoremdpo}\label{thm:dpo-main} Under~\pref{assum:user-dist},~\pref{assum:realizability}, and~\pref{assum:concentrability},~\pref{assum:bounded-density-ratio} we have that when $n \geq \frac{8C_{\mathrm{PREF}}^2 (\cmax + \Vmax)^2 \ln \frac{|\Pi|}{\delta}}{\min\cbr{ \sigma'(2V_{\max}/\beta), \sigma'(-\cmax)}^2 \epsilon^2}$ then with probability at least $1-\delta$:
\begin{equation}
     \SubOpt(\dpopi)   \leq \epsilon. 
\end{equation}

\end{restatable}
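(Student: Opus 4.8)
The plan is to lean on the chain of inequalities already assembled in the lead-up, which terminates in the total-variation bound \pref{eqn:dpo-main-bound} (controlling $\EE_{x\sim\rho}[\TV{\dpopi(Y\mid x) - \pi^\beta_\star(Y\mid x)}]$), and then to convert that TV bound into a sub-optimality guarantee through \pref{lem:tv-subopt} before solving for the sample size $n$. Concretely, the preceding derivation treats DPO as maximum-likelihood estimation over the induced binary-classifier class $\mathcal{C}$: \pref{lem:bt-dist} uses the balance equation (\pref{assum:user-dist}) to force the preference-label distribution to be Bradley--Terry with respect to the true cost $c$, which together with realizability (\pref{assum:realizability}) makes $f_{\pi^\beta_\star}$ the Bayes classifier, so the standard MLE guarantee controls $\EE_{\Dipref}[\TV{f_{\dpopi}-\Dipref}]$ at rate $\sqrt{(2/n)\ln(|\Pi|/\delta)}$.

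The next stage, which I would follow as written, transfers this classifier-level error to the learned policy. Invoking the inverse-Lipschitz behaviour of the sigmoid on the bounded domain guaranteed by \pref{assum:bounded-density-ratio} and the cost bound $\cmax$ (via \pref{lem:sigmoid_bounds} and \pref{lem:sigmoid-derivative-bound}), the classifier bound becomes a bound on the expected absolute difference of implicit-cost gaps; substituting $c$ and $\hat{c}$ through \pref{lem:optimal_proof} rewrites this as $\EE_{\Dipref}[|\beta\log\frac{\dpopi(\yt\mid x)}{\pi^\beta_\star(\yt\mid x)} - \beta\log\frac{\dpopi(\yt'\mid x)}{\pi^\beta_\star(\yt'\mid x)}|]\le\epsilon_n$. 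Since $\Dipref=Q_{\piref}$, the preference-concentrability assumption (\pref{assum:concentrability}) re-weights this to the policy-induced measure $Q_{\dpopi}$, and the elementary inequality $|\log t|\ge |t-1|/(t+1)$ followed by a triangle inequality over the marginal extracts exactly $\EE_{\rho}[\TV{\dpopi-\pi^\beta_\star}]$, yielding \pref{eqn:dpo-main-bound}.

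Given \pref{eqn:dpo-main-bound}, the theorem is then immediate. I would set the TV bound $\epsilon' := \frac{C_{\mathrm{PREF}}}{\min\{\sigma'(2\Vmax/\beta),\sigma'(\cmax)\}}\sqrt{(2/n)\ln(|\Pi|/\delta)}$, apply \pref{lem:tv-subopt} (valid under \pref{assum:bounded-density-ratio}) to obtain $\SubOpt(\dpopi)\le 2(\cmax+\Vmax)\epsilon'$, and then impose $2(\cmax+\Vmax)\epsilon'\le\epsilon$. Solving this inequality for $n$ gives $n\ge \frac{8 C_{\mathrm{PREF}}^2(\cmax+\Vmax)^2\ln(|\Pi|/\delta)}{\min\{\sigma'(2\Vmax/\beta),\sigma'(\cmax)\}^2\,\epsilon^2}$, which is exactly the stated threshold (using $\sigma'(\cmax)=\sigma'(-\cmax)$ by symmetry of $\sigma'$).

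I expect the genuine obstacle to lie in the concentrability transfer rather than in the final algebra. The preference pairs here are generated by one $\piref$-sample composed with a user edit, not by IID $\piref$ sampling as in standard RLHF, so the on-data MLE guarantee lives under $Q_{\piref}$ whereas the sub-optimality we care about is governed by the policy-induced $Q_{\dpopi}$; absorbing this distribution shift is precisely what \pref{assum:concentrability} is engineered for and is the origin of the $C_{\mathrm{PREF}}$ factor. The other delicate point is keeping the sigmoid-derivative constant $\min\{\sigma'(2\Vmax/\beta),\sigma'(\cmax)\}$ bounded away from zero, which is exactly why the bounded-density-ratio assumption is required to make the inverse-Lipschitz step non-vacuous.
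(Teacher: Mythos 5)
Your proposal is correct and follows essentially the same route as the paper's own proof: the MLE guarantee on the induced classifier class, the sigmoid inverse-Lipschitz step under the bounded-density-ratio and cost bounds, the transfer from $Q_{\piref}$ to $Q_{\dpopi}$ via preference concentrability, the $|\log t|\ge|t-1|/(t+1)$ extraction of the TV distance, and finally \pref{lem:tv-subopt} with the same algebra for $n$. Your observation that $\sigma'(\cmax)=\sigma'(-\cmax)$ reconciles the constant in the derivation with the one in the theorem statement, and you correctly read \pref{eqn:dpo-main-bound} as a bound on $\EE_{x\sim\rho}\sbr{\TV{\dpopi(Y\mid x)-\pi_\star^\beta(Y\mid x)}}$ despite the typo there.
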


\begin{proof}
We set $n$ such that $
\frac{\epsilon}{2(\cmax + \Vmax)} =  \frac{C_{\mathrm{PREF}}}{\min\cbr{\sigma'(2\Vmax/\beta), \sigma'(\cmax)}}\sqrt{\frac{2}{n} \ln\frac{|\Pi|}{\delta}}$ which together with~\pref{eqn:dpo-main-bound} and~\pref{lem:tv-subopt} gives us $\SubOpt(\dpopi)   \leq \epsilon$ and $\Reg_T(\dpopi) \le \epsilon T$. 
\end{proof}

We can simplify~\pref{thm:sft-main} and~\pref{thm:dpo-main} if we define $\Vmax$ to be the max of the quantity defined in~\pref{assum:bounded-density-ratio} and $\cmax$.

\subsection{Learning from Cost}\label{section::learning_from_cost}

Throughout this section, we will assume that the cost functions are bounded.

\begin{assumption}\label{assumption::bounded_costs}
The cost functions $f \in \Fcal$ satisfy $|f(x,y) | \leq \cmax$ for some $\cmax > 0$ for all $x\in \Xcal$ and $y \in \Ycal$. 
\end{assumption}

Provided the true cost function satisfies  $c \in \Fcal$ where $\Fcal$ is a class of functions from the space of pairs $\Xcal \times \Ycal$ to the real numbers we can estimate the mean cost function by least squares regression on the observed costs gathered by $\piref$. 
\begin{align*}
    \hat{f} &= \arg\min_{f \in \Fcal} \frac{1}{n} \sum_{i=1}^n \rbr{f(x_i, y_i) - c_i}^2,
\end{align*}
The estimator $\hat{f}$ satisfies,
\begin{restatable}{lemma}{lemmacostsleastsquares}\label{lemma::concentration_confidence_set}
Let and $\delta \in (0,1)$ and $\{(x_i, y_i,c_i)\}_{i=1}^n$ be a dataset of prompt answers and costs. If Assumption~\ref{assumption::bounded_costs} holds then
\begin{equation*}
    \mathbb{E}_{x\sim  \rho, y \sim \piref(\cdot |x)}\left[   (\hat{f}(x,y) - c(x,y))^2  \right] \leq b \cdot \cmax^2 \log\left( \frac{|\Fcal|}{\delta} \right)  
\end{equation*} 
and defining the confidence set of plausible cost functions as
\begin{equation}\label{equation::cost_confidence_set_definition}
   \hat{\Fcal} = \left\{ f \text{ s.t. } \sum_{i=1}^n\left( f(x_i, y_i) - \hat{f}(x_i, y_i) \right)^2 \leq  b \cdot \cmax^2 \log\left( \frac{|\Fcal|}{\delta} \right)    \right\}
\end{equation}
for a universal constant $b \geq  1$. We have $c \in \hat{\Fcal}$ and $  \mathbb{E}_{x\sim  \rho, y \sim \piref(\cdot |x)}\left[   (f(x,y) - c(x,y))^2  \right] \leq b \cdot \cmax^2 \log\left( \frac{|\Fcal|}{\delta} \right)  $ for all $f \in \hat{\Fcal}$ with probability at least $1-\delta/2$. 
\end{restatable}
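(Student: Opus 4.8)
The plan is to treat this as a standard realizable square-loss regression bound for a finite function class, specialized to the observation that the observed cost $c_i = \editdist(y_i, y'_i)$ is bounded and has conditional mean exactly equal to the target regressor: $\EE[c_i \mid x_i, y_i] = c(x_i, y_i)$ by the definition $c(x,y) = \EE_{y' \sim q(\cdot \mid x,y)}[\editdist(y,y')]$, together with $c \in \Fcal$. Thus $c$ is the Bayes predictor of the regression problem, and all three claims — the population error bound on $\hat{f}$, the membership $c \in \hat{\Fcal}$, and the uniform population bound over $\hat{\Fcal}$ — will follow from a single uniform concentration inequality on the \emph{excess square loss} combined with the optimality of $\hat{f}$.

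First I would fix any $f \in \Fcal$ and define the per-sample excess loss $\xi_i(f) = (f(x_i,y_i) - c_i)^2 - (c(x_i,y_i) - c_i)^2$. Using $a^2 - b^2 = (a-b)(a+b)$ and $\EE[c_i \mid x_i,y_i] = c(x_i,y_i)$, the key identity is $\EE[\xi_i(f) \mid x_i,y_i] = (f(x_i,y_i) - c(x_i,y_i))^2$, so the \emph{mean} of the excess loss is the squared population bias, while its conditional \emph{variance} is at most $O(\cmax^2)$ times that same squared bias (each factor $f - c_i$ and $c - c_i$ ranges over a window of width $O(\cmax)$ by \pref{assumption::bounded_costs}). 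This variance-to-mean self-bounding structure is precisely what makes a Bernstein/Freedman bound sharp here.

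Next I would apply Bernstein's inequality to $\tfrac{1}{n}\sum_i \xi_i(f)$ for each fixed $f$ and union bound over the finite class $\Fcal$, paying $\log(|\Fcal|/\delta)$, then use the self-bounding variance and AM--GM to absorb the $\sqrt{\text{variance}}$ term. This yields, with probability at least $1-\delta/2$ and simultaneously for all $f \in \Fcal$, a two-sided comparison between the empirical mean of $\xi_i(f)$ and its population value $\EE_{\piref}[(f - c)^2]$ up to an additive slack of order $\cmax^2 \log(|\Fcal|/\delta)/n$. Taking $f = \hat{f}$ and invoking optimality of the least-squares solution, namely $\sum_i \xi_i(\hat{f}) \le 0$ since $c$ is feasible, immediately bounds $\EE_{\piref}[(\hat{f}-c)^2]$, giving the first claim.

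Finally, for the remaining two claims I would use the same concentration in its in-sample form. Expanding the optimality condition $\sum_i (\hat{f}-c_i)^2 \le \sum_i (c-c_i)^2$ gives $\sum_i (\hat{f}-c)^2 \le 2\sum_i (\hat{f}-c)(c_i-c)$, and controlling the conditionally mean-zero cross term by the same Bernstein argument shows $\sum_i (\hat{f}(x_i,y_i)-c(x_i,y_i))^2 \le b\,\cmax^2\log(|\Fcal|/\delta)$; since $\hat{\Fcal}$ is exactly the in-sample ball of this radius around $\hat{f}$, this proves $c \in \hat{\Fcal}$. For any $f \in \hat{\Fcal}$, the defining constraint plus a triangle inequality in the empirical $L^2$ norm bounds $\sum_i (f-c)^2$, and transferring this in-sample bound to the population via the uniform inequality above gives the stated error for every member of $\hat{\Fcal}$. \textbf{The main obstacle} is the Bernstein step: getting the variance absorption and constants right so that one universal $b$ simultaneously serves the population bound, the radius defining $\hat{\Fcal}$, and the per-member guarantee; the optimality comparison and triangle-inequality steps are routine once that uniform inequality is established.
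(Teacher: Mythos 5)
There is no paper proof to compare against here: the paper states this lemma without proof, treating it as a standard least-squares confidence-set result, and only consumes the event $\mathcal{E}$ it defines downstream in Lemma~\ref{lemma::expected_square_error_pessimism_func} and Theorem~\ref{lemma::regularized_optimality_guarantee}. Your proposal supplies exactly the canonical argument that such an assertion points to, and its structure is sound. The three pillars are all correct: (i) the Bayes property $\EE[c_i \mid x_i, y_i] = c(x_i,y_i)$ together with realizability $c \in \Fcal$ makes $c$ the population minimizer, so the excess loss $\xi_i(f) = (f(x_i,y_i)-c_i)^2 - (c(x_i,y_i)-c_i)^2$ has conditional mean $(f(x_i,y_i)-c(x_i,y_i))^2$ and conditional second moment at most a constant times $\cmax^2$ times that mean, since $f$, $c$, and $c_i$ all lie in $[-\cmax,\cmax]$; (ii) Freedman/Bernstein plus a union bound over the finite class, with the variance absorbed by AM--GM, gives the two-sided empirical-to-population comparison, and $\sum_i \xi_i(\hat f) \le 0$ by least-squares optimality then yields the population bound on $\hat f$; (iii) the expansion $\sum_i(\hat f - c)^2 \le 2\sum_i(\hat f - c)(c_i - c)$ plus the same martingale bound on the conditionally mean-zero cross term gives the in-sample radius (hence $c \in \hat{\Fcal}$), and the empirical triangle inequality plus the uniform comparison covers every $f \in \hat{\Fcal}$. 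The only bookkeeping left, as you yourself note, is splitting the $\delta/2$ budget across the union-bound events and taking $b$ large enough to serve all three claims simultaneously.

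One discrepancy you should flag explicitly: your proof produces population bounds of order $\cmax^2\log(|\Fcal|/\delta)/n$, whereas the lemma as printed omits the $1/n$ on both population claims (the in-sample radius defining $\hat{\Fcal}$, being an unnormalized sum, correctly has no $1/n$). This is a typo in the paper, not an error in your argument: as printed the population claims are essentially vacuous, since $(f-c)^2 \le 4\cmax^2$ pointwise makes them hold trivially once $\log(|\Fcal|/\delta) \ge 4$, and the downstream use confirms the $1/n$ is intended --- inequality $(i)$ in the proof of Lemma~\ref{lemma::expected_square_error_pessimism_func} bounds $n \cdot \EE_{x \sim \rho, y\sim \piref(\cdot|x)}\left[(f(x,y)-c(x,y))^2\right]$ by $b\cmax^2\log(|\Fcal|/\delta)$ for $f \in \hat{\Fcal}$, which is exactly your normalized bound. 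So your proof establishes the statement the rest of the section actually needs.
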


Throughout this section we will define $\mathcal{E}$ as the at least $1-\delta/2$ probability event defined in Lemma~\ref{lemma::concentration_confidence_set}.

Given cost function $f$ we introduce notation to define the policy optimizing the regularized objective where we add $f$ to the notation.
\begin{equation*}
   \pi_{\star}^{f, \beta} = \argmin_{\pi \in \Pi}\mathbb{E}_{x \sim \rho, y \sim \pi(\cdot |x)}\left[ f(x, y) + \beta \log \frac{\pi(y\mid x)}{\piref(y \mid x)} \right]
\end{equation*}
and satisfies $\pi_{\star}^{f, \beta}(y| x) = \frac{\exp\left(-\frac{f(x,y)}{\beta} \right) \piref(y|x)}{Z(x)}$ where $Z(x) = \sum_{y'}  \exp\left(-\frac{f(x,y')}{\beta} \right)\cdot \piref(y'|x)$.

\paragraph{Pesimism} As we have mentioned in the main, we consider the following pessimistic estimator of the cost function, 

\begin{equation*}
    \bar{f}(x,y) = \max_{f \in \hat{\Fcal} } f(x,y)
\end{equation*}
where $\hat{\Fcal}$ is defined as in equation~\ref{equation::cost_confidence_set_definition}. It follows that $\costpi$ as defined in Equation~\ref{eqn:pessimism-rl} satisfies $\costpi = \pi_{\star}^{\bar{f}, \beta}$. 

Moreover, Assumption~\ref{assumption::bounded_costs} implies,
\begin{equation*}
    \max_{x \in \Xcal, y\in \Ycal} \left|  \bar{f}(x,y)   \right| \leq \cmax.
\end{equation*}

\subsubsection{Regularized Cost Analysis}

\begin{proposition}\label{proposition::bound_value_regularized_opt}
For any $x \in \mathrm{support}(\rho)$, 
\begin{equation}\label{equation::optimal_value_per_x}
   -\cmax \stackrel{(a)}{\leq}  \mathbb{E}_{ y \sim \pi_{\star}^{f, \beta}(\cdot | x)}\left[ f(x, y) + \beta \cdot \log \left(\frac{\pi_{\star}^{f, \beta}(y\mid x)}{\piref(y \mid x)} \right)\right] \stackrel{(b)}{=} -\beta \cdot  \log(Z(x)) \stackrel{(c)}{\leq} \cmax.
\end{equation}
\end{proposition}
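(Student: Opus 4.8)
The plan is to prove the central equality (b) first, since both inequalities (a) and (c) then reduce to two-sided bounds on the log-partition function $\log Z(x)$. For (b), I would substitute the closed-form expression $\pi_{\star}^{f,\beta}(y\mid x) = \exp(-f(x,y)/\beta)\,\piref(y\mid x)/Z(x)$ stated just above the proposition. Taking the log of the density ratio gives $\beta \log\frac{\pi_{\star}^{f,\beta}(y\mid x)}{\piref(y\mid x)} = -f(x,y) - \beta\log Z(x)$, so that the bracketed integrand $f(x,y) + \beta\log\frac{\pi_{\star}^{f,\beta}(y\mid x)}{\piref(y\mid x)}$ simplifies to the constant $-\beta\log Z(x)$, which is independent of $y$. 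Since the expectation of a $y$-independent quantity is itself, this establishes (b) immediately.

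For the two inequalities, I would bound $Z(x)$ using Assumption~\ref{assumption::bounded_costs}, i.e. $|f(x,y)|\le \cmax$. Since $\piref(\cdot\mid x)$ is a probability distribution summing to one and $-f(x,y)/\beta \in [-\cmax/\beta,\, \cmax/\beta]$, monotonicity of the exponential gives $\exp(-\cmax/\beta) \le Z(x) \le \exp(\cmax/\beta)$. Applying the decreasing map $t \mapsto -\beta\log t$, which relies on $\beta > 0$, converts this sandwich into $-\cmax \le -\beta\log Z(x) \le \cmax$, which is exactly (a) and (c).

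The calculation is elementary once the key cancellation is spotted, so there is no substantial obstacle; the only points requiring care are (i) verifying that the integrand in (b) genuinely collapses to a $y$-independent quantity, since this is what makes the expectation trivial and is the crux of the argument, and (ii) tracking the sign of $\beta$ so that the monotonicity direction of $-\beta\log(\cdot)$ is applied correctly when passing from the bounds on $Z(x)$ to the bounds on $-\beta\log Z(x)$.
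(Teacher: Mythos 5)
Your proof is correct, and it follows the paper's structure for equality (b) (substitution, with the integrand collapsing to the $y$-independent constant $-\beta\log Z(x)$) and for inequality (a) (the pointwise bound $Z(x)\le \exp(\cmax/\beta)$ followed by the decreasing map $-\beta\log(\cdot)$). Where you diverge is inequality (c): you use the symmetric pointwise lower bound $Z(x)\ge \exp(-\cmax/\beta)$, obtained because $Z(x)$ is a $\piref$-weighted average of terms each at least $\exp(-\cmax/\beta)$, whereas the paper instead applies Jensen's inequality to the convex function $\gamma \mapsto \beta\log(1/\gamma)$, which yields the intermediate estimate $-\beta\log Z(x) \le \EE_{y\sim\piref(\cdot\mid x)}\left[f(x,y)\right]$ before coarsening to $\cmax$. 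Your route is more elementary and treats the two sides of the sandwich identically; the paper's route gives a slightly sharper intermediate statement (the value is bounded by the \emph{expected} cost under $\piref$, not just the maximal cost), though this extra sharpness is discarded in the proposition as stated. Both arguments are valid and yield exactly the claimed bounds.
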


\begin{proof}
Equality $(b)$ follows by substitution. We now proceed to prove the inequalities $(a)$ and $(c)$. Now observe that $Z(x) = \mathbb{E}_{y \sim \piref(\cdot |x)} \left[  \exp\left( -f(x,y)/\beta \right)   \right] $. Notice that $g(\gamma) = \beta \cdot \log\left( 1/\gamma\right)$ is a convex function. Thus, Jensen's inequality implies,
\begin{align*}
    g\left(  \mathbb{E}_{y \sim \piref(\cdot |x)} \left[  \exp\left( -f(x,y)/\beta \right)   \right]\right) &\stackrel{(i)}{\leq} \mathbb{E}_{y \sim \piref(\cdot X) } \left[ \beta \cdot \log\left(  \exp\left( f(x,y)/\beta \right)    \right)     \right] \\
    &= \mathbb{E}_{y \sim \piref(\cdot |x) } \left[ f(x,y)\right]\\
    &\stackrel{(ii)}{\leq} \cmax.
\end{align*}
where inequality $(i)$ follows by Jensen and $(ii)$ by Assumption~\ref{assumption::bounded_costs}. To derive a lower bound we need to upper bound $ \mathbb{E}_{y \sim \piref(\cdot |x)} \left[  \exp\left( -f(x,y)/\beta \right)   \right] $. Assumption~\ref{assumption::bounded_costs} implies,
\begin{align*}
    Z(x) = \mathbb{E}_{y \sim \piref(\cdot |x)} \left[  \exp\left( -f(x,y)/\beta \right)   \right] &\leq \exp\left( \cmax/\beta \right) 
\end{align*}
Thus, 
\begin{equation*}
\beta \cdot \log\left( 1/Z(x)\right) \geq -\cmax.
\end{equation*}
This concludes the proof.
 \end{proof}

Proposition~\ref{proposition::bound_value_regularized_opt} implies that
\begin{equation}\label{equation:optimal_policy_in_set}
 \pi_{\star}^{f, \beta}  \in  \Pi(f) = \left\{ \pi \text{ s.t. }    \left| \mathbb{E}_{ y \sim \pi(\cdot |x) } \left[ f(x,y) + \beta \cdot \log\left( \frac{\pi(y|x)}{\piref(y|x)} \right) \right] \right| \leq \cmax~~ \forall x \in \mathrm{support}
    (\rho) \right \}.
\end{equation}

For the next few results we condition on the event $\mathcal{E}$.  Because the function $\bar{f} $ is not in the class of cost functions our results will depend on the Eluder dimension of $\Fcal$. Pessimism allows us to prove bounds depending on the concentrability coefficient of the $\piref$ sampling distribution. Assuming access to a new prompt dataset $\{ \bar{x}_i\}_{i=1}^n$ we analyze the following algorithm,

\begin{equation*}
    \costpi = \argmin_{\pi \in \Pi(\bar{f})} \sum_{i=1}^n \mathbb{E}_{y \sim \pi(\cdot | \bar{x}_i)} \left[ \bar{f}( \bar{x}_i, y) + \beta\cdot \log\left( \pi(y |\bar{x}_i ) /\piref(y |\bar{x}_i) \right) \right]. 
\end{equation*}

where $$\Pi(\bar{f}) = \left\{ \pi \text{ s.t. }     \left| \mathbb{E}_{ y \sim \pi(\cdot |x) } \left[\bar{f}(x,y) + \beta \log\left( \frac{\pi(y|x)}{\piref(y|x)} \right) \right] \right| \leq \cmax~~ \forall x \in \mathrm{support}
    (\rho) \right \}.$$
Equation~\ref{equation:optimal_policy_in_set} implies $\pi_\star^{\bar{f}, \beta} \in \Pi(\bar{f})$.

In this section we'll write the sample complexity of our algorithms in terms of the Eluder dimension of $\mathcal{F}$. This is a measure of statistical complexity of a function class first introduced by~\cite{russo2013eluder} and is based on the notion of $\epsilon$-independence defined as,

\begin{definition}($\epsilon-$dependence) Let $\mathcal{G}$ be a scalar function class with domain $\mathcal{Z}$ and $\epsilon > 0$. An element $z \in \mathcal{Z}$ is $\epsilon-$dependent on $\{z_1, \cdots, z_n\} \subseteq \mathcal{Z}$ w.r.t. $\mathcal{G}$ if any pair of functions $g, g' \in \mathcal{G}$ satisfying $\sqrt{ \sum_{i=1}^n (g(z_i) - g'(z_i))^2 } \leq \epsilon$ also satisfies $g(z) - g'(z) \leq \epsilon$. Furthermore,  $z \in \mathcal{Z}$ is $\epsilon-$independent of $\{z_1, \cdots, z_n\}$ w.r.t. $\mathcal{G}$ if it is not $\epsilon-$dependent on $\{z_1, \cdots, z_n\}$.
\end{definition}

The eluder dimension is defined as,

\begin{definition}($\epsilon$-eluder)\label{definition::eluder_dim}
Let $\mathcal{G}$ be a function class with support in a set $\mathcal{Z}$ and values in $\mathbb{R}$. The $\epsilon-$non monotone eluder dimension $\widebar{\delud}(\mathcal{G},\epsilon)$ of $\mathcal{G}$ is the length of the longest sequence of elements in $\mathcal{Z}$ such that every element is $\epsilon-$independent of its predecessors. Moreover, we define the $\epsilon-$eluder dimension $\delud(\mathcal{G},\epsilon)$ as $\delud(\mathcal{G},\epsilon) = \max_{\epsilon' \geq \epsilon} \widebar{\delud}(\mathcal{G},\epsilon)$. %
\end{definition}

The following eluder dimension lemmas will also prove useful in proving our sample complexity bounds. The first one is a restatement of Proposition~3 from~\cite{russo2013eluder}. 
\begin{lemma}\label{lemma:sum_indicator_error_eluder}
Let $\{x_\ell', y_\ell'\}_{\ell=1}^n$ be an arbitrary sequence in $\Xcal \times \Ycal$. When $\mathcal{E}$ holds , 
\begin{equation*}
 \sum_{\ell=1}^n \mathbf{1}\left(|\bar{f}(x_\ell', y_\ell') - c(x_\ell', y_\ell')| > \epsilon\right)   \leq \left( 1+  \frac{4b \cdot \cmax^2 \log\left( \frac{|\Fcal|}{\delta} \right)}{\epsilon^2}  \right ) \cdot \delud(\Fcal, \epsilon).
\end{equation*}
\end{lemma}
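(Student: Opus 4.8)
The plan is to follow the two-step eluder-dimension argument of Russo and Van Roy~\cite{russo2013eluder}: first reduce the pointwise error $|\bar{f} - c|$ to the \emph{width} of the confidence set $\hat{\Fcal}$, and then bound how often this width can exceed $\epsilon$ along the sequence via a packing argument controlled by $\delud(\Fcal,\epsilon)$. First I would reduce to a width bound. Under the event $\mathcal{E}$ we have $c \in \hat{\Fcal}$ by Lemma~\ref{lemma::concentration_confidence_set}, and by definition $\bar{f}(x,y) = \max_{f \in \hat{\Fcal}} f(x,y)$, so at each point $z=(x,y)$ there is a maximizer $f_z \in \hat{\Fcal}$ with $\bar{f}(z) = f_z(z) \ge c(z)$. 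Hence $|\bar{f}(z) - c(z)| = \bar{f}(z) - c(z) \le w(z) := \sup_{f,g \in \hat{\Fcal}}(f(z) - g(z))$, and the indicator $\mathbf{1}(|\bar{f}(z_\ell') - c(z_\ell')| > \epsilon)$ is dominated by $\mathbf{1}(w(z_\ell') > \epsilon)$. It therefore suffices to bound the number of indices at which the confidence width exceeds $\epsilon$.

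Second, I would record the cumulative-error control that the confidence set provides. For any $f,g \in \hat{\Fcal}$, the pointwise inequality $(f-g)^2 \le 2(f-\hat{f})^2 + 2(g-\hat{f})^2$ combined with the definition of $\hat{\Fcal}$ in Equation~\ref{equation::cost_confidence_set_definition} shows that the empirical squared distance between $f$ and $g$ over the regression data is at most $4 b \cmax^2 \log(|\Fcal|/\delta) =: 4\beta$, where $\beta = b\cmax^2\log(|\Fcal|/\delta)$ is exactly the confidence radius appearing in the target bound. This ``diameter'' quantity is what feeds the packing argument, and the factor $4$ in the statement is precisely this radius-to-diameter conversion.

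Third comes the combinatorial core. Whenever $w(z_\ell') > \epsilon$ there is a witnessing pair $f_\ell, g_\ell \in \hat{\Fcal}$ with $|f_\ell(z_\ell') - g_\ell(z_\ell')| > \epsilon$ whose cumulative squared error on the preceding points stays below $4\beta$. By the definition of $\epsilon$-dependence, each large-width point can be $\epsilon$-dependent on at most $4\beta/\epsilon^2$ disjoint subsequences of its predecessors: each subsequence on which it is dependent forces the witnessing pair to contribute more than $\epsilon^2$ to the cumulative squared error, and the total is capped at $4\beta$. On the other hand, greedily packing the large-width points into $\epsilon$-independent subsequences — each of length at most $\delud(\Fcal,\epsilon)$ by Definition~\ref{definition::eluder_dim} — shows that if there are $\tau$ such points then the point that opens the last subsequence is $\epsilon$-dependent on at least $\tau/\delud(\Fcal,\epsilon) - 1$ disjoint subsequences. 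Combining the two counts gives $\tau/\delud(\Fcal,\epsilon) - 1 \le 4\beta/\epsilon^2$, i.e. $\tau \le (1 + 4\beta/\epsilon^2)\,\delud(\Fcal,\epsilon)$, which is the claimed bound.

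The step I expect to be most delicate is the cumulative-error control invoked in the packing argument: the confidence-set definition caps squared error on the regression data $\{(x_i,y_i)\}$, whereas the combinatorial step needs each witnessing pair's cumulative squared error over the \emph{predecessors of the sequence} $\{z_\ell'\}$ to remain below $4\beta$. This is exactly the hypothesis structure of Russo and Van Roy's Proposition~3, so the careful point is to verify that our fixed least-squares confidence set $\hat{\Fcal}$ instantiates their (nondecreasing-radius, least-squares-centered) setup and that the bound is applied to a sequence whose points are covered by the regression data; once this is checked, the remainder is a direct transcription of their two-part dependence-counting proof.
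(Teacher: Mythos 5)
Your reconstruction of the two-step Russo--Van Roy argument is faithful to what the paper intends: the paper itself gives no proof of this lemma (it is presented only as ``a restatement of Proposition~3 from~\cite{russo2013eluder}''), and your width reduction ($c \in \hat{\Fcal}$ under $\mathcal{E}$, so $|\bar{f}-c|$ is dominated by the confidence width), your radius-to-diameter conversion giving $\sum_{i=1}^n \left(f(x_i,y_i)-g(x_i,y_i)\right)^2 \le 4b\cmax^2\log\left(|\Fcal|/\delta\right)$ for all $f,g \in \hat{\Fcal}$, and your two-sided dependence count against $\delud(\Fcal,\epsilon)$ are exactly the ingredients of that proposition.

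However, the step you flag as ``most delicate'' is not a formality that can be checked off --- it fails, and the lemma as stated is in fact false. The dependence count needs each witnessing pair $f_\ell, g_\ell$ to have cumulative squared error at most $4b\cmax^2\log\left(|\Fcal|/\delta\right)$ over the \emph{predecessors of the evaluation sequence} $\{(x'_j,y'_j)\}_{j<\ell}$, but the confidence set only constrains errors on the regression data $\{(x_i,y_i)\}_{i=1}^n$; for an arbitrary sequence these are unrelated. A concrete counterexample: let $\Xcal\times\Ycal=\{z_1,z_2\}$, $\cmax=1$, and $\Fcal=\{f_{ab}: a,b\in\{0,1\}\}$ with $f_{ab}(z_1)=a$ and $f_{ab}(z_2)=b$, so that $\delud(\Fcal,\epsilon)=2$ for any $\epsilon<1$. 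Take $c=f_{10}$ and let all $n$ regression points equal $z_1$ with noiseless costs. Then for large $n$ we get $\hat{\Fcal}=\{f_{10},f_{11}\}$, the event $\mathcal{E}$ holds ($c\in\hat{\Fcal}$, and both members of $\hat{\Fcal}$ have zero error under the data distribution, a point mass at $z_1$), and $\bar{f}(z_2)=1$ while $c(z_2)=0$. Choosing the ``arbitrary sequence'' to be $z_2$ repeated $n$ times makes the left-hand side equal to $n$, while the right-hand side is a constant independent of $n$. Russo and Van Roy avoid this because in their Proposition~3 the confidence set used at step $\ell$ is built from the predecessors $z_1,\dots,z_{\ell-1}$ of the \emph{same} sequence --- precisely the hypothesis that a fixed $\hat{\Fcal}$ paired with an arbitrary sequence does not satisfy. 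Your proposed repair (require the evaluated sequence to be covered by the regression data, e.g., a subsequence of it) is the right fix and makes your proof go through verbatim; note, though, that the paper later applies this lemma to a fresh i.i.d.\ sample in Lemma~\ref{lemma::expected_square_error_pessimism_func}, where the coverage condition again fails, so the gap is not an artifact of your write-up but propagates into the paper's RL guarantee.
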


The result above implies,

\begin{lemma}\label{lemma::bounding_square_radii_eluder}
Let $\{x_\ell', y_\ell'\}_{\ell=1}^n$ be an arbitrary sequence in $\Xcal \times \Ycal$. When $\mathcal{E}$ holds , 
\begin{equation*}
 \sum_{\ell=1}^n (\bar{f}(x_\ell', y_\ell') - c(x_\ell', y_\ell'))^2   \leq 17b \cdot \cmax^2 \log\left( \frac{|\Fcal|}{\delta} \right)\cdot \log(n) \cdot \delud(\Fcal, 
 \tau).
\end{equation*}
for the same unversal constant $b \geq 1$ as in Lemma~\ref{lemma::concentration_confidence_set}. 
\end{lemma}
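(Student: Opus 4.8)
Write $w_\ell = |\bar{f}(x_\ell', y_\ell') - c(x_\ell', y_\ell')|$ and set $B = b\cdot\cmax^2\log(|\Fcal|/\delta)$, so that on the event $\mathcal{E}$ Lemma~\ref{lemma:sum_indicator_error_eluder} reads $\sum_{\ell=1}^n \mathbf{1}(w_\ell > \epsilon) \le (1 + 4B/\epsilon^2)\,\delud(\Fcal,\epsilon)$. Since both $\bar{f}$ and the (realizable) true cost $c$ lie in $\Fcal$, Assumption~\ref{assumption::bounded_costs} gives $w_\ell \le 2\cmax$. The target $\sum_\ell w_\ell^2$ is a sum of squared widths, and the standard route is to convert this level-set counting bound into a decay rate on the \emph{sorted} widths and then sum.

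I would first sort the widths in decreasing order, $w_1 \ge \cdots \ge w_n$, and fix a cutoff $\tau$. The key step is a per-index inequality: for any index $k$ with $w_k > \tau$, pick $\epsilon < w_k$ with $\epsilon \ge \tau$; since the $k$ largest widths all exceed $\epsilon$, the counting bound forces $k \le (1 + 4B/\epsilon^2)\,\delud(\Fcal,\epsilon)$, and because the eluder dimension is non-increasing in its radius (Definition~\ref{definition::eluder_dim}) we may replace $\delud(\Fcal,\epsilon)$ by $d := \delud(\Fcal,\tau)$. Letting $\epsilon \uparrow w_k$ yields $k \le (1 + 4B/w_k^2)\,d$, i.e. $w_k^2 \le 4Bd/(k-d)$ for every $k > d$ with $w_k > \tau$.

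It then remains to sum. I would split $\sum_k w_k^2$ into three pieces: (i) indices with $w_k \le \tau$ contribute at most $n\tau^2$, which the choice $\tau = \cmax/\sqrt{n}$ caps at $\cmax^2 \le B$; (ii) the top $d$ indices contribute at most $d\cdot 4\cmax^2 \le 4Bd$ using $w_k \le 2\cmax$; (iii) the remaining indices contribute at most $\sum_{k>d} 4Bd/(k-d) = 4Bd\sum_{j=1}^{n-d} 1/j \le 4Bd(1 + \ln n)$ via the harmonic sum. Adding the three pieces and using $d \ge 1$ together with $\log(|\Fcal|/\delta) \ge 1$ gives a bound of $Bd(9 + 4\ln n)$, which is at most $17\,B\,\ln(n)\,d = 17\,b\cdot\cmax^2\log(|\Fcal|/\delta)\log(n)\,\delud(\Fcal,\tau)$ for $n \ge 2$.

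The argument is conceptually routine; the only care points are the tie-breaking in the sorted-widths step, which is why I pass to $\epsilon \uparrow w_k$ rather than setting $\epsilon = w_k$ directly, and the constant bookkeeping needed to collapse $9 + 4\ln n$ into the claimed $17\ln n$, which holds once $n \ge 2$. Fixing $\tau = \cmax/\sqrt{n}$ is what makes the sub-$\tau$ tail negligible while keeping $\delud(\Fcal,\tau)$ as the stated complexity term; this choice of $\tau$ is the one implicit in the lemma's statement.
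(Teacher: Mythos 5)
Your proof is correct and is essentially the paper's own argument: sort the absolute errors, apply Lemma~\ref{lemma:sum_indicator_error_eluder} at the level of each sorted width, use monotonicity of $\delud(\Fcal,\cdot)$ to pass to $\delud(\Fcal,\tau)$, obtain a $1/k$-type decay on the sorted squared widths, sum it via the harmonic series, and bound the sub-$\tau$ tail by $n\tau^2$. The only (harmless) deviations are bookkeeping: you take $\tau = \cmax/\sqrt{n}$ where the paper takes $\tau = \cmax/n$ (both cap the tail at $\cmax^2 \leq b\,\cmax^2\log(|\Fcal|/\delta)$, and since the statement leaves $\tau$ unspecified either instantiation works, yours in fact giving the smaller eluder dimension), and you handle the top $d$ indices separately together with the limit $\epsilon \uparrow w_k$ for tie-breaking, where the paper instead absorbs the additive $1$ into the constant via $w_{i_t}^2 \leq 4\cmax^2$.
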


\begin{proof}
We use the same proof technique as in Lemma~2 from~\cite{russo2013eluder}.Let $w_i = (\bar{f}(x_\ell', y_\ell') - c(x_\ell', y_\ell'))^2$.

Let $i_1, \cdots, i_n$ be the permutation of indices $1$ to $n$ such that $w_{i_1} \geq  \cdots \geq w_{i_n}$.

Let $\tau >0$ be a parameter to be specified later on. For $t \in [n]$ such that $w_{i_t} \geq \tau$ notice that,
\begin{align*}
 t \leq \sum_{\ell=1}^n \mathbf{1}\left(|\bar{f}(x_\ell', y_\ell') - c(x_\ell', y_\ell')| > w_{i_t}\right) &\stackrel{(i)}{\leq}   \left( 1+  \frac{4b \cdot \cmax^2 \log\left( \frac{|\Fcal|}{\delta} \right)}{w_{i_t}^2}  \right ) \cdot \delud(\Fcal, w_{i_t})  \\
 &\stackrel{(ii)}{\leq} \left( 1+  \frac{4b \cdot \cmax^2 \log\left( \frac{|\Fcal|}{\delta} \right)}{w_{i_t}^2}  \right ) \cdot \delud(\Fcal, 
 \tau) \\
 &\stackrel{(iii)}{\leq}  \frac{8b \cdot \cmax^2 \log\left( \frac{|\Fcal|}{\delta} \right)}{w_{i_t}^2}   \cdot \delud(\Fcal, 
 \tau)
\end{align*}
where $(i)$ follows by Lemma~\ref{lemma:sum_indicator_error_eluder} and $(ii)$ follows because $\delud(\Fcal, \cdot)$ is monotonically decreasing in the second argument. Inequality $(iii)$ holds because $w_{i_1} \leq 4\cmax^2$.

And therefore, for any $t$ such that $w_{i_t} \geq \tau$ we have,
\begin{equation*}
    w_{i_t}^2 \leq   \frac{8b \cdot \cmax^2 \log\left( \frac{|\Fcal|}{\delta} \right)}{t}   \cdot \delud(\Fcal, 
 \tau).
\end{equation*}
and therefore, 
\begin{equation*}
    \sum_{\ell=1}^n w_{i_\ell}^2\cdot\mathbf{1}( w_{i_\ell} \geq \tau) \leq \sum_{\ell=1}^n \frac{8b \cdot \cmax^2 \log\left( \frac{|\Fcal|}{\delta} \right)}{\ell}   \cdot \delud(\Fcal, 
 \tau)  \leq 16b \cdot \cmax^2 \log\left( \frac{|\Fcal|}{\delta} \right)\cdot \log(n) \cdot \delud(\Fcal, 
 \tau).
\end{equation*}
And finally, 
\begin{equation*}
     \sum_{\ell=1}^n w_{i_\ell}^2\cdot\mathbf{1}( w_{i_\ell} < \tau) \leq \tau^2 \cdot n.
\end{equation*}
And therefore we have,
\begin{equation*}
    \sum_{\ell=1}^n w_{i_\ell}^2 \leq 16b \cdot \cmax^2 \log\left( \frac{|\Fcal|}{\delta} \right)\cdot \log(n) \cdot \delud(\Fcal, 
 \tau) + \tau^2 \cdot n.
\end{equation*}
Finally, setting $\tau = \cmax/n$ we get the desired result, 
\begin{equation*}
    \sum_{\ell=1}^n w_{i_\ell}^2 \leq 17b \cdot \cmax^2 \log\left( \frac{|\Fcal|}{\delta} \right)\cdot \log(n) \cdot \delud(\Fcal, 
 \cmax/n).
\end{equation*}

\end{proof}

Additionally our results relies on the following variant of Bernstein inequality for martingales, or Freedman’s inequality \cite{freedman1975tail}, as stated in e.g., \cite{agarwal2014taming, beygelzimer2011contextual}.

\begin{lemma}[Simplified Freedman’s inequality]\label{lemma:super_simplified_freedman}
Let $Z_1, ..., Z_T$ be a bounded martingale difference sequence with $|Z_\ell| \le R$. For any $\delta' \in (0,1)$, and $\eta \in (0,1/R)$, with probability at least $1-\delta'$,
\begin{equation}
   \sum_{\ell=1}^{T} Z_\ell \leq    \eta \sum_{\ell=1}^{T}  \mathbb{E}_{\ell-1}[ Z_\ell^2] + \frac{\log(1/\delta')}{\eta}.
\end{equation}
where $\mathbb{E}_{\ell-1}[\cdot]$ is the conditional expectation\footnote{We will use this notation to denote conditional expectations throughout this work.} induced by conditioning on $Z_1, \cdots, Z_{\ell-1}$.
\end{lemma}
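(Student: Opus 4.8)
The plan is to prove this Freedman-type bound by the standard exponential-supermartingale (Chernoff) method adapted to martingale differences. The stated inequality rearranges to the claim that $\eta \sum_{\ell=1}^T Z_\ell - \eta^2 \sum_{\ell=1}^T \mathbb{E}_{\ell-1}[Z_\ell^2] < \log(1/\delta')$ holds with probability at least $1-\delta'$. Accordingly, I would introduce the process $W_t = \exp\!\left(\eta \sum_{\ell=1}^t Z_\ell - \eta^2 \sum_{\ell=1}^t \mathbb{E}_{\ell-1}[Z_\ell^2]\right)$ with $W_0 = 1$, show that $(W_t)_{t\ge 0}$ is a supermartingale, and then apply Markov's inequality to $W_T$ to obtain the tail bound in one stroke.

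The heart of the argument is a one-step moment-generating-function estimate. First I would establish the elementary inequality $e^x \le 1 + x + x^2$, valid for all $x \le 1$ (proved by checking that $1 + x + x^2 - e^x$ is nonnegative on $(-\infty, 1]$ via the sign of its derivative). I would then apply this with $x = \eta Z_\ell$, which is legitimate because $|\eta Z_\ell| \le \eta R < 1$ under the hypothesis $\eta \in (0, 1/R)$. Taking the conditional expectation yields $\mathbb{E}_{\ell-1}[e^{\eta Z_\ell}] \le 1 + \eta\,\mathbb{E}_{\ell-1}[Z_\ell] + \eta^2\,\mathbb{E}_{\ell-1}[Z_\ell^2]$, and I would use the martingale-difference property $\mathbb{E}_{\ell-1}[Z_\ell] = 0$ to eliminate the linear term, followed by $1 + u \le e^u$ to conclude $\mathbb{E}_{\ell-1}[e^{\eta Z_\ell}] \le \exp\!\left(\eta^2 \mathbb{E}_{\ell-1}[Z_\ell^2]\right)$.

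Given this estimate, the supermartingale property is immediate: since the factor $\exp\!\left(-\eta^2 \mathbb{E}_{\ell-1}[Z_\ell^2]\right)$ is a function of $Z_1, \dots, Z_{\ell-1}$ and can be pulled out of $\mathbb{E}_{\ell-1}[\cdot]$, one gets $\mathbb{E}_{\ell-1}[W_\ell / W_{\ell-1}] = \exp\!\left(-\eta^2 \mathbb{E}_{\ell-1}[Z_\ell^2]\right)\mathbb{E}_{\ell-1}[e^{\eta Z_\ell}] \le 1$, whence $\mathbb{E}[W_T] \le \cdots \le W_0 = 1$ by telescoping. Markov's inequality then gives $\mathbb{P}(W_T \ge 1/\delta') \le \delta'\,\mathbb{E}[W_T] \le \delta'$, and on the complementary event, taking logarithms of $W_T < 1/\delta'$ and rearranging recovers exactly the stated inequality.

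The main obstacle—indeed the only nonroutine point—is the quadratic MGF bound and its reliance on the constraint $\eta R < 1$: the inequality $e^x \le 1 + x + x^2$ fails for $x > 1$, which is precisely why the hypothesis confines $\eta$ to $(0, 1/R)$. The remaining ingredients (the supermartingale telescoping and the single Markov step) are entirely standard, so once the one-step bound is in place the proof closes cleanly.
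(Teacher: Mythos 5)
Your proof is correct, but note that the paper does not actually prove this lemma: it is imported as a known result, cited to Freedman (1975) and to its restatements in Agarwal et al.\ and Beygelzimer et al. What you have supplied is the standard self-contained proof --- the exponential supermartingale $W_t = \exp\bigl(\eta \sum_{\ell\le t} Z_\ell - \eta^2 \sum_{\ell\le t} \mathbb{E}_{\ell-1}[Z_\ell^2]\bigr)$, the one-step moment-generating-function bound via $e^x \le 1 + x + x^2$ for $x \le 1$, and a single Markov step --- and this is essentially the same argument by which the cited sources establish the result, so your proof validates the black box the paper relies on. Two minor remarks. First, the cited references typically use the sharper elementary inequality $e^x \le 1 + x + (e-2)x^2$ for $x \le 1$, which yields a variance coefficient $(e-2)\eta$ rather than $\eta$; your cruder constant still matches the lemma exactly as stated, so nothing is lost. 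Second, your parenthetical claim that $e^x \le 1 + x + x^2$ ``fails for $x > 1$'' is slightly imprecise --- the inequality in fact persists up to $x \approx 1.79$ --- but all that matters is that it holds on $(-\infty, 1]$, which your constraint $|\eta Z_\ell| \le \eta R < 1$ guarantees, so this does not affect correctness.
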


The following Lemma  will prove useful in our results relating the in-distribution ($\piref$) error of the pessimistic cost estimators w.r.t. the true cost function $c$.

\begin{lemma}\label{lemma::expected_square_error_pessimism_func}
The following inequality holds,
\begin{equation*}
\mathbb{P}\left(  \left\{      \mathbb{E}_{ \piref }\left[  \left(\bar{f}(x,y) - c(x,y)\right)^2\right] \leq \mathcal{O}\left(\frac{\delud(\Fcal, 1/n) \log(|\Fcal|/\delta) }{n}  \right) \right \} \cap \mathcal{E} \right) \geq 1-3\delta/4 .
\end{equation*}
 
\end{lemma}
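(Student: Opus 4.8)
The quantity to control is the population squared error $\mu \defeq \mathbb{E}_{x\sim\rho,\,y\sim\piref(\cdot\mid x)}\big[(\bar{f}(x,y)-c(x,y))^2\big]$. The obstacle is that the pessimistic estimator $\bar{f}=\max_{f\in\hat{\Fcal}}f$ is a pointwise maximum over the confidence set and therefore need not belong to $\Fcal$, so the in-distribution regression guarantee of Lemma~\ref{lemma::concentration_confidence_set} does not apply to it directly; moreover $\bar{f}$ depends on the entire training sample. My plan is to combine the two tools already assembled: first use the Eluder-dimension bound of Lemma~\ref{lemma::bounding_square_radii_eluder} to control the \emph{empirical} squared error of $\bar{f}-c$ along a sequence, and then pass from the empirical to the population error through Freedman's inequality (Lemma~\ref{lemma:super_simplified_freedman}), exploiting a self-bounding variance property to obtain the fast $1/n$ rate.

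For the second step I would introduce a fresh i.i.d.\ ghost sample $\{(x'_\ell,y'_\ell)\}_{\ell=1}^n$ with $x'_\ell\sim\rho$ and $y'_\ell\sim\piref(\cdot\mid x'_\ell)$, drawn independently of the data defining $\bar{f}$ (the ghost sample is only an analysis device). Conditioning on the training data, set $g_\ell=(\bar{f}(x'_\ell,y'_\ell)-c(x'_\ell,y'_\ell))^2$, so that $g_\ell\in[0,4\cmax^2]$ by Assumption~\ref{assumption::bounded_costs}, the $g_\ell$ are i.i.d.\ with mean $\mu$, and $\mathbb{E}[g_\ell^2]\le 4\cmax^2\,\mathbb{E}[g_\ell]=4\cmax^2\mu$. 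Applying Lemma~\ref{lemma:super_simplified_freedman} to the martingale differences $Z_\ell=\mu-g_\ell$ (bounded by $R=4\cmax^2$, with $\mathbb{E}_{\ell-1}[Z_\ell^2]\le 4\cmax^2\mu$) and choosing $\eta=1/(8\cmax^2)<1/R$ makes the variance term absorb exactly half of $n\mu$, yielding, with probability at least $1-\delta/4$, the bound $\mu\le \tfrac{2}{n}\sum_{\ell=1}^n g_\ell+\tfrac{16\cmax^2\log(4/\delta)}{n}$.

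It remains to control $\sum_\ell g_\ell$. Since Lemma~\ref{lemma::bounding_square_radii_eluder} holds for an \emph{arbitrary} sequence on the event $\mathcal{E}$, I would apply it to the ghost sequence to get $\sum_{\ell=1}^n g_\ell\le 17b\cmax^2\log(|\Fcal|/\delta)\log(n)\,\delud(\Fcal,\cmax/n)$. Substituting into the previous display and absorbing the constant $\cmax$, the $\log(n)$ factor, and the replacement of $\delud(\Fcal,\cmax/n)$ by $\delud(\Fcal,1/n)$ (legitimate for constant $\cmax$ by monotonicity of the Eluder dimension in its second argument) into the $\mathcal{O}(\cdot)$ gives $\mu\le\mathcal{O}\!\big(\delud(\Fcal,1/n)\log(|\Fcal|/\delta)/n\big)$. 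A union bound over the event $\mathcal{E}$ (probability at least $1-\delta/2$ by Lemma~\ref{lemma::concentration_confidence_set}) and the Freedman event (probability at least $1-\delta/4$) yields the claimed probability at least $1-3\delta/4$.

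The main obstacle is conceptual rather than computational: because $\bar{f}\notin\Fcal$ and depends on the sample, neither a direct union bound over $\Fcal$ nor a naive empirical-process argument applies, which is precisely why the Eluder-dimension detour and the ghost sample are needed. The delicate quantitative point is the self-bounding inequality $\mathbb{E}[g_\ell^2]\le 4\cmax^2\mu$ together with the tuning of $\eta$: this is what cancels the variance contribution in Freedman's bound and upgrades the generic $\mathcal{O}(1/\sqrt{n})$ deviation into the $\mathcal{O}(1/n)$ rate that matches the empirical Eluder bound. I note that since $\mu$ is deterministic given $\mathcal{E}$ while the Freedman event has positive probability, one could in fact conclude the bound deterministically on $\mathcal{E}$ and sharpen the guarantee to probability $1-\delta/2$; the stated $1-3\delta/4$ follows from the simpler union bound.
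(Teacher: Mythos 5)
Your skeleton---ghost sample, Freedman's inequality with the self-bounding variance bound $\mathbb{E}[g_\ell^2]\le 4\cmax^2\mu$ and $\eta=1/(8\cmax^2)$, then an Eluder bound on $\sum_\ell g_\ell$, then a union bound---is the same as the paper's, and your Freedman step is executed correctly. But there is a genuine gap exactly where you declare the step unproblematic: you apply Lemma~\ref{lemma::bounding_square_radii_eluder} to the ghost sequence on the event $\mathcal{E}$ alone, taking its ``arbitrary sequence'' phrasing at face value. The paper does not do this. Before invoking the Eluder machinery, its proof establishes an additional uniform guarantee (Equation~\ref{equation:sum_of_errors_fresh_sample}): by a \emph{first} application of Freedman's inequality together with a union bound over $\Fcal$, with probability at least $1-\delta/8$, \emph{every} $f\in\hat{\Fcal}$ simultaneously has small empirical squared error $\sum_\ell (f(x_\ell',y_\ell')-c(x_\ell',y_\ell'))^2 = \mathcal{O}(\cmax^2\log(|\Fcal|/\delta))$ on the ghost sample itself, and this third event is what produces the final count $\delta/2+\delta/8+\delta/8=3\delta/4$.

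That step is not redundant. The Russo--Van Roy argument behind Lemmas~\ref{lemma:sum_indicator_error_eluder} and~\ref{lemma::bounding_square_radii_eluder} bounds the number of indices with $|\bar{f}(z_\ell)-c(z_\ell)|>\epsilon$ by noting that each violation exhibits a pair in $\hat{\Fcal}$ (the witness attaining the max in $\bar{f}(z_\ell)$, and $c$) whose squared distance on the prefix $z_1,\dots,z_{\ell-1}$ exceeds $K\epsilon^2$ whenever $z_\ell$ is $\epsilon$-dependent on $K$ disjoint subsequences of that prefix; capping $K$ requires the pairwise empirical distances of members of $\hat{\Fcal}$ to be small \emph{along the evaluated sequence}. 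The event $\mathcal{E}$ controls such distances only on the training sample and in population, not along an arbitrary sequence; indeed the lemma is false as literally stated: take $\Fcal=\{f_0,f_1\}$ with $c=f_0$, the two functions agreeing on the support of $\rho\times\piref$ but differing by $1$ at a point $z^\star$ outside it---on $\mathcal{E}$ both lie in $\hat{\Fcal}$, and the sequence $z^\star,z^\star,\dots$ makes the left-hand side grow like $n$ while the right-hand side is $\mathcal{O}(\log n)$ since $\delud(\Fcal,\cdot)\le 1$ here. For an i.i.d.\ ghost sample the conclusion is rescued precisely by Equation~\ref{equation:sum_of_errors_fresh_sample}, which converts the population guarantee inside $\mathcal{E}$ into an empirical radius guarantee on the realized ghost sample, after which the Eluder pigeonhole can be run on that sequence. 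The same gap undoes your closing refinement: the Eluder bound on $\sum_\ell g_\ell$ is \emph{not} deterministic given the training data (it needs a high-probability event over the ghost sample), so the probabilistic-method sharpening to $1-\delta/2$ does not go through, and $1-3\delta/4$ is the correct accounting. In fairness, the gap is partly inherited from the over-strong statement of Lemma~\ref{lemma::bounding_square_radii_eluder} in the paper, but a complete proof of the present lemma must supply the missing ghost-sample radius bound, as the paper's proof does.
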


\begin{proof}
Let $\{ (x'_\ell, y'_\ell)\}_{\ell=1}^n$ a fresh set of samples from $\rho \times \piref$ and consider the martingale difference sequence $Z_\ell^f  = (f(x_\ell', y_\ell') - c(x_\ell', y_\ell'))^2- \mathbb{E}_{x \sim \rho, y \sim \piref(\cdot |x) } \left[\left(f(x, y) - c(x, y) \right)^2    \right]$ indexed by $f \in \hat{\Fcal}$.

It follows that $|Z_\ell^f| \leq 4\cmax^2$ for all $\ell \in [n]$ and all $f \in \hat{\Fcal}$. Moreover, 
\begin{equation*}
    \mathbb{E}_{\ell-1}\left[ \left(Z_\ell^f\right)^2\right] \leq 4\cmax^2     \mathbb{E}_{\ell-1}\left[ \left|Z_\ell^f\right|\right] \leq 8\cmax^2  \mathbb{E}_{x \sim \rho, y \sim \piref(\cdot |x) } \left[\left(f(x, y) - c(x, y) \right)^2    \right] . 
\end{equation*}
Where the last inequality holds because, 
\begin{align}
    \mathbb{E}_{\ell-1}\left[ \left|Z_\ell^f\right|\right] &\leq \mathbb{E}_{\ell-1}[(f(x_\ell', y_\ell') - c(x_\ell', y_\ell'))^2 ] + \mathbb{E}_{x \sim \rho, y \sim \piref(\cdot |x) } \left[\left(f(x, y) - c(x, y) \right)^2    \right]\notag \\
    &= 2\mathbb{E}_{x \sim \rho, y \sim \piref(\cdot |x) } \left[\left(f(x, y) - c(x, y) \right)^2    \right]\label{equation::upper_bound_norm_z_ell_f}.
\end{align}
Using Freedman's inequality from Lemma~\ref{lemma:super_simplified_freedman} we conclude when $\mathcal{E}$ holds,
\begin{align}
    \sum_{\ell=1}^n (f(x_\ell', y_\ell') - c(x_\ell', y_\ell'))^2 &\leq n \cdot \mathbb{E}_{x\sim \rho, y \sim \piref(\cdot |x)}[ (f(x,y) - c(x,y))^2  ] +\notag \\
    &\quad 8\eta n \cmax^2  \mathbb{E}_{x \sim \rho, y \sim \piref(\cdot |x) } \left[\left(f(x, y) - c(x, y) \right)^2    \right]  + \frac{\log(2|\Fcal|/\delta )}{\eta}\notag \\
    &\stackrel{(i)}{\leq}  (1+8\eta\cmax^2 )b\cmax^2 \log\left( \frac{|\Fcal|}{\delta} \right) +  \frac{\log(2|\Fcal|/\delta )}{\eta}\notag \\
    &\stackrel{(ii)}{=} 2b\cmax^2 \log\left( \frac{|\Fcal|}{\delta} \right) +  8\cmax^2\log(2|\Fcal|/\delta ) \notag \\
    &= \mathcal{O}\left(  \cmax^2 \log(|\Fcal|/\delta)  \right) := b'  \cmax^2 \log(|\Fcal|/\delta) \label{equation:sum_of_errors_fresh_sample}.
\end{align}
simultaneously for all $f \in \hat{\Fcal}$ with probability at least $1-\delta/8$.

Where inequality $(i)$ holds because of conditioning on $\mathcal{E}$ and Lemma~\ref{lemma::concentration_confidence_set} and $(ii)$ follows by setting $\eta = \frac{1}{8\cmax^2}$.

Let $\mathcal{E}'$ be the event where $\mathcal{E}$ and~\ref{equation:sum_of_errors_fresh_sample} hold. In this case, 
\begin{equation*}
\sum_{\ell=1}^{t} (f(x_\ell', y_\ell') - c(x_\ell', y_\ell'))^2 \leq b'  \cmax^2 \log(|\Fcal|/\delta) 
\end{equation*}
for all $t \in [0, \cdots, n]$.

We consider a new martingale difference sequence $\{\tilde{Z}_\ell\}_{\ell=1}^n$ defined as $$\tilde{Z}_\ell =  \mathbb{E}_{\piref}[ (\bar{f}(x,y)-c(x,y))^2  ]   - (\bar{f}(x_\ell', y_\ell') - c(x_\ell', y_\ell'))^2.$$

The following bounds hold, $|\tilde Z_\ell| \leq 4\cmax^2$ for all $\ell \in [n]$ and all $f \in \hat{\Fcal}$. Moreover, 
\begin{equation*}
    \mathbb{E}_{\ell-1}\left[ \left(\tilde{Z}_\ell\right)^2\right] \leq 4\cmax^2     \mathbb{E}_{\ell-1}\left[ \left|\tilde{Z}_\ell\right|\right] \leq 8\cmax^2  \mathbb{E}_{x \sim \rho, y \sim \piref(\cdot |x) } \left[\left(\bar f(x, y) - c(x, y) \right)^2    \right] . 
\end{equation*}
where the last inequality has the same derivation as inequality~\ref{equation::upper_bound_norm_z_ell_f}. Freedman's inequality (Lemma~\ref{lemma:super_simplified_freedman}) implies,
\begin{align*}
    n \mathbb{E}_{\piref}[ (\bar{f}(x,y)-c(x,y))^2  ] &\leq \sum_{\ell=1}^n (\bar{f}(x_\ell', y_\ell') - c(x_\ell', y_\ell'))^2 + \\
    &\quad 8\eta n\cmax^2   \mathbb{E}_{\piref}[ (\bar{f}(x,y)-c(x,y))^2  ]  + \frac{\log(8/\delta)}{\eta}
\end{align*}
with probability at least $1-\delta/8$. By setting $\eta = \frac{1}{16\cmax^2}$ we conclude,
\begin{align}\label{equation::support_equation_expected_error_pess}
  \mathbb{E}_{\piref}[ (\bar{f}(x,y)-c(x,y))^2  ] &\leq \frac{2}{n} \sum_{\ell=1}^n (\bar{f}(x_\ell', y_\ell') - c(x_\ell', y_\ell'))^2  + \frac{32\cmax^2\log(8/\delta)}{n}
\end{align}
with probability at least $1-\delta/8$.

Lemma~\ref{lemma::bounding_square_radii_eluder} implies that when $\mathcal{E}$ holds, 
\begin{align*}
  \mathbb{E}_{\piref}[ (\bar{f}(x,y)-c(x,y))^2  ] &\leq \frac{34b \cdot \cmax^2 \log\left( \frac{|\Fcal|}{\delta} \right)\cdot \log(n) \cdot \delud(\Fcal, 
 \cmax/n)}{n}   + \frac{32\cmax^2\log(8/\delta)}{n} \\
 &\leq \frac{136b \cdot \cmax^2 \log\left( \frac{|\Fcal|}{\delta} \right)\cdot \log(n) \cdot \delud(\Fcal, 
 \cmax/n)}{n} \\
 &\leq \mathcal{O}\left( \frac{ \cmax^2 \log\left( \frac{|\Fcal|}{\delta} \right)\cdot \log(n) \cdot \delud(\Fcal, 
 \cmax/n)}{n}   \right) 
\end{align*}

Combining $\mathcal{E}$ and the conitions for equations~\ref{equation:sum_of_errors_fresh_sample} and \ref{equation::support_equation_expected_error_pess}, the desired result holds with probability at least $1-\delta/2-\delta/8 - \delta/8$.

\end{proof}

\begin{theorem}\label{lemma::regularized_optimality_guarantee}
The policy $ \costpi$ satisfies, 
\begin{align*}
    J_\beta(\costpi) \leq J_\beta(\pi_\star^{c, \beta}) +  \mathcal{O}\left(\sqrt{ \frac{\log(|\Pi|/\delta) }{n}} +  \bar{C}_\star \cdot \sqrt{\frac{\log(n)\delud(\Fcal, \cmax/n) \log(|\Fcal|/\delta) }{n} }  \right).
\end{align*}

with probability at least $1-\delta$ where $\bar{C}_\star = \sqrt{\mathbb{E}_{ \piref}\left[ \left(\frac{\pi^{c,\beta}_{\mathrm{RL}}(y|x)}{\piref(y|x)} \right)^2\right]}$ is the optimal policy concentrability coefficient.

\end{theorem}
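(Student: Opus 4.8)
The plan is to bound the true-cost suboptimality $J_\beta(\costpi) - J_\beta(\pi_\star^{c,\beta})$ by combining three effects: the pessimism built into $\barcost$, the empirical optimality of $\costpi$, and a concentrability-weighted transfer of the cost-estimation error onto the comparator. Throughout I would condition on the event $\mathcal{E}$ together with the high-probability event of Lemma~\ref{lemma::expected_square_error_pessimism_func}. For a cost function $f$ write $V_f(\pi) = \EE_{x \sim \rho, y \sim \pi(\cdot \mid x)}[f(x,y) + \beta \log(\pi(y\mid x)/\piref(y \mid x))]$, so that $J_\beta(\pi) = V_c(\pi)$, and let $\hat{V}_f(\pi) = \frac{1}{n}\sum_{i=1}^n \EE_{y \sim \pi(\cdot \mid \bar{x}_i)}[f(\bar{x}_i,y) + \beta \log(\pi(y \mid \bar{x}_i)/\piref(y \mid \bar{x}_i))]$ be its empirical counterpart on the fresh prompts $\{\bar{x}_i\}_{i=1}^n$, so $\costpi = \argmin_{\pi \in \Pi(\barcost)} \hat{V}_{\barcost}(\pi)$.

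The first key ingredient is pessimism: since $c \in \hat{\Fcal}$ (Lemma~\ref{lemma::concentration_confidence_set}) and $\barcost(x,y) = \max_{f \in \hat{\Fcal}} f(x,y)$, we have $\barcost \ge c$ pointwise, hence for every $\pi$ the cost term only grows while the KL term is unchanged, giving $V_c(\pi) \le V_{\barcost}(\pi)$; applied at $\costpi$ this yields $J_\beta(\costpi) \le V_{\barcost}(\costpi)$. The second ingredient is an optimization chain. Because $\Pi(\barcost)$ constrains the per-context regularized value to lie in $[-\cmax,\cmax]$, each summand of $\hat{V}_{\barcost}$ is bounded, so a Hoeffding bound with a union over the finite class $\Pi$ gives $\sup_{\pi \in \Pi(\barcost)}|\hat{V}_{\barcost}(\pi) - V_{\barcost}(\pi)| \le \mathcal{O}(\cmax\sqrt{\log(|\Pi|/\delta)/n})$. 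Combining this deviation with the empirical optimality of $\costpi$, using that the unconstrained population minimizer $\pi_\star^{\barcost,\beta}$ is feasible (Equation~\ref{equation:optimal_policy_in_set}), and then invoking $V_{\barcost}(\pi_\star^{\barcost,\beta}) \le V_{\barcost}(\pi_\star^{c,\beta})$ (population optimality of $\pi_\star^{\barcost,\beta}$ over all policies) yields
$$J_\beta(\costpi) \le V_{\barcost}(\pi_\star^{c,\beta}) + \mathcal{O}\!\left(\cmax\sqrt{\tfrac{\log(|\Pi|/\delta)}{n}}\right).$$

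It then remains to control the overestimate at the comparator, $V_{\barcost}(\pi_\star^{c,\beta}) - V_c(\pi_\star^{c,\beta}) = \EE_{x\sim\rho, y\sim\pi_\star^{c,\beta}}[\barcost(x,y)-c(x,y)]$. Rewriting this as an importance-weighted expectation under $\piref$ and applying Cauchy--Schwarz factors it into the concentrability coefficient $\bar{C}_\star = (\EE_{\piref}[(\pi_\star^{c,\beta}/\piref)^2])^{1/2}$ and the in-distribution root-mean-square error, which Lemma~\ref{lemma::expected_square_error_pessimism_func} bounds by $\mathcal{O}\big(\sqrt{\log(n)\,\delud(\Fcal,\cmax/n)\,\log(|\Fcal|/\delta)/n}\big)$. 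Substituting and taking a union bound over the relevant events gives the stated guarantee.

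The step I expect to require the most care is the uniform convergence over $\Pi(\barcost)$: it hinges on the boundedness of the per-context regularized value enforced by the constraint set, which is precisely why the algorithm optimizes over $\Pi(\barcost)$ rather than all of $\Pi$, and which also requires verifying that $\pi_\star^{\barcost,\beta}$ is feasible so empirical optimality can be invoked against it. A secondary subtlety is that the comparator $\pi_\star^{c,\beta}$ need not be feasible for $\Pi(\barcost)$; I would sidestep this by routing the comparison through $\pi_\star^{\barcost,\beta}$, whose population optimality over \emph{all} policies upper bounds $V_{\barcost}$ at $\pi_\star^{c,\beta}$ without any feasibility requirement.
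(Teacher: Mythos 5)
Your proposal is correct and follows essentially the same route as the paper's proof: pessimism ($c \in \hat{\Fcal}$ implies $\barcost \ge c$ pointwise), Hoeffding plus a union bound over the constrained class $\Pi(\barcost)$, empirical optimality routed through the feasible policy $\pi_\star^{\barcost,\beta}$ and its population optimality against $\pi_\star^{c,\beta}$, and finally Cauchy--Schwarz with the concentrability coefficient $\bar{C}_\star$ combined with the Eluder-dimension bound of Lemma~\ref{lemma::expected_square_error_pessimism_func} to control $\EE_{\pi_\star^{c,\beta}}[\barcost - c]$. The two subtleties you flag (boundedness via the constraint set, and feasibility of the comparator) are exactly the points the paper handles via Proposition~\ref{proposition::bound_value_regularized_opt} and Equation~\ref{equation:optimal_policy_in_set}.
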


\begin{proof}
Throughout this proof we will condition on the event $\mathcal{E}$ . Since $\pi_\star^{\bar{f}, \beta} \in \Pi(\bar{f})$ and by definition of $\costpi$,
\begin{align}
\frac{1}{n}     \sum_{i=1}^n \mathbb{E}_{y \sim \costpi(\cdot | \bar{x}_i)} \left[ \bar{f}( \bar{x}_i, y) + \beta\log\left( \costpi(y|\bar{x}_i ) /\piref(y |\bar{x}_i) \right) \right] \leq \qquad \qquad \qquad \qquad \qquad \notag \\
\frac{1}{n} \sum_{i=1}^n \mathbb{E}_{y \sim \pi^{\bar{f},\beta}_{\star}(\cdot | \bar{x}_i)} \left[ \bar{f}( \bar{x}_i, y) + \beta\log\left( \pi^{\bar{f},\beta}_{\star}(y |\bar{x}_i ) /\piref(y |\bar{x}_i) \right) \right]. \label{equation::empirical_ordering_regularized}
\end{align}
Since $-\cmax \leq \mathbb{E}_{y \sim \pi(\cdot | x)} \left[ \bar{f}( x, y) + \beta\log\left( \pi(y |x) /\piref(y |\bar{x}_i) \right) \right]  \leq \cmax$ for all $x \in \Xcal$ and all $\pi \in \Pi(\bar{c})$ (see Equation~\ref{equation:optimal_policy_in_set}) Hoeffding inequality implies that, 
\begin{align}
\left|  \left(\frac{1}{n}     \sum_{i=1}^n  J_\beta(\pi, \bar{f}, \bar{x}_i) \right) -  \mathbb{E}_{x\sim \rho, y \sim \pi(y|x)} \left[   J_\beta(\pi, \bar{f}, x) \right]  \right|  \leq 2\cmax\sqrt{ \frac{\log(|\Pi|/\delta) }{n}}\label{equation::concentration_expectation_regularized}
\end{align}
holds with probability at least $1-\delta/4$ for all $\pi \in \Pi(\bar{c})$ simultaneously where $J_\beta(\pi, f, x) = \mathbb{E}_{y \sim \pi(\cdot | x)} \left[ f( x, y) + \beta\log\left( \pi(y |x ) /\piref(y |x) \right) \right]$.

Combining equations~\ref{equation::empirical_ordering_regularized} and~\ref{equation::concentration_expectation_regularized} we conclude that with probability at least $1-\delta/4$, 
\begin{align}
    \mathbb{E}_{x\sim \rho, y \sim \pi(y|x)} \left[   J_\beta(\costpi, \bar{f}, x) \right] &\leq \mathbb{E}_{x\sim \rho, y \sim \pi_\star^{\bar{f}, \beta}(y|x)} \left[   J_\beta(\pi_\star^{\bar{f}, \beta}, \bar{f}, x) \right] + 2\cmax\sqrt{ \frac{\log(2|\Pi|/\delta) }{n}} \notag \\
    &\stackrel{(i)}{\leq} \mathbb{E}_{x\sim \rho, y \sim \pi_\star^{c, \beta}(y|x)} \left[   J_\beta(\pi_\star^{c, \beta}, \bar{f}, x) \right] + 2\cmax\sqrt{ \frac{\log(2|\Pi|/\delta) }{n}} . \label{equation::support_equation_one2}
\end{align}
where inequality $(i)$ holds because $\pi_\star^{\bar{f}, \beta}$ is achieves a better regularized objective value for cost $
\bar{f}$ than $\pi_\star^{c, \beta}$.

Moreover, pessimism implies that when $\mathcal{E}$ holds,

\begin{align*}
     \mathbb{E}_{x \sim \rho, y \sim \costpi(\cdot |x)}\left[ c(x,y) + \beta \log( \costpi(y|x)/\piref(y|x)) \right] \leq \qquad \qquad \qquad \qquad \qquad \qquad \qquad \qquad  \\
     \qquad \qquad \qquad \qquad \qquad \qquad \qquad \qquad \mathbb{E}_{\rho \sim x, y \sim \costpi(\cdot |x)}\left[ \bar{f}(x,y) + \beta \log( \costpi(y|x)/\piref(y|x)) \right]. 
\end{align*}

Combining these results we conclude that,

\begin{align}
    \mathbb{E}_{x \sim \rho}\left[  J_\beta(\costpi, c, x) \right] \leq  \mathbb{E}_{x\sim \rho} \left[   J_\beta(\pi_\star^{c, \beta}, \bar{f}, x) \right]+ 2\cmax\sqrt{ \frac{\log(2|\Pi|/\delta) }{n}}.
\end{align}

The final step is to upper bound $\mathbb{E}_{x \sim \rho, y \sim \pi^{c,\beta}_{\star}(\cdot |x)}\left[ \bar{f}(x,y) \right]$ in terms of $\mathbb{E}_{x \sim \rho, y \sim \pi^{c,\beta}_{\star}(\cdot |x)}\left[ c(x,y) \right]$.

To do this we will again use $\mathcal{E}$ and Cauchy Schwartz's inequality, in the next few lines we'll use the shorthand $\mathbb{E}_{ \pi}[ \cdot ]$ to denote $\mathbb{E}_{x \sim \rho, y \sim \pi(\cdot |x)}[\cdot]$.  

\begin{align*}
\left| \mathbb{E}_{\pi^{c,\beta}_{\star}}\left[ \bar{f}(x,y) \right] -   \mathbb{E}_{\pi^{c,\beta}_{\star}}\left[ c(x,y) \right] \right| &=  \left| \mathbb{E}_{ \piref}\left[ \frac{\pi^{c,\beta}_{\star}(y|x)}{\piref(y|x)} \left(\bar{f}(x,y) - c(x,y)\right) \right] \right|
\\
&\leq \sqrt{\mathbb{E}_{ \piref }\left[  \left(\bar{f}(x,y) - c(x,y)\right)^2\right]} \cdot \underbrace{\sqrt{\mathbb{E}_{ \piref}\left[ \left(\frac{\pi^{c,\beta}_{\star}(y|x)}{\piref(y|x)} \right)^2\right]}}_{\bar{C}_\star}
\end{align*}

Finally we can bound the term $\mathbb{E}_{ \piref }\left[  \left(\bar{f}(x,y) - c(x,y)\right)^2\right]$. Lemma~\ref{lemma::expected_square_error_pessimism_func} implies,

\begin{equation*}
    \mathbb{E}_{ \piref }\left[  \left(\bar{f}(x,y) - c(x,y)\right)^2\right] \leq \frac{136b \cdot \cmax^2 \log\left( \frac{|\Fcal|}{\delta} \right)\cdot \log(n) \cdot \delud(\Fcal, 
 \cmax/n)}{n} %
\end{equation*}
with probability $1-3\delta/4$. And therefore, 

\begin{align*}
    \mathbb{E}_{x \sim \rho}\left[  J_\beta(\costpi, c, x) \right] &\leq  \mathbb{E}_{x\sim \rho} \left[   J_\beta(\pi_\star^{c, \beta},c, x) \right]+ 2\cmax\sqrt{ \frac{\log(2|\Pi|/\delta) }{n}} +\\
    &\quad 12\cmax\bar{C}_\star\sqrt{\frac{b  \log\left( \frac{|\Fcal|}{\delta} \right)\cdot \log(n) \cdot \delud(\Fcal, 
 \cmax/n)}{n} } 
\end{align*}

with probability at least $1-\delta$.

\end{proof}

\subsection{Pure Online Learning Theory}\label{app:online_algorithms}

In this section we introduce and analyze the regret of the Epoch Supervised Learning Algorithm (see Algorithm~\ref{algo:epoch_supervised}). This procedure works in epochs, in each epoch the algorithm observes prompts from $\rho$, and produces responses $y$ from a fixed edits distribution. The algorithm also collects edit data in the form of a samples $y' \sim q(\cdot | x,y)$. During epoch $e$ the algorithm uses policy $\pi_e$ to produce responses and collect $m_e$ datapoints of the form $\mathcal{D}_e = \{ x_{e,n}, y_{e,n}, y'_{e,n} \}_{n=1}^{m_e}$ where $y'_{e,n} \sim q(\cdot | x_{n,e}, y_{n,e})$. 

The epoch supervised learning algorithm's policy is updated by fitting a policy matching the edited responses in $\mathcal{D}_e$, 
 \begin{equation*}
            \pi_{e+1} = \arg\max_{\pi \in \Pi} \sum_{(x, y') \in \Dcal_e}\log \pi(y' \mid x)
        \end{equation*}
This way the policy $\pi_{e+1}$ is an approximation to $q \circ \pi_e$. We consider a setting where Algorithm~\pref{algo:epoch_supervised} interacts in $t=1, \cdots, T$ rounds where in each time step the algorithm is given a prompt $x_t$ and generates a response $y_t$. We measure the performance of \pref{algo:epoch_supervised} through
its regularized regret $\Reg_T$ as defined in~\pref{eqn:subopt}. Our main result is the following theorem where we show Algorithm~\pref{algo:epoch_supervised} satisfies a $\mathcal{O}(\sqrt{T})$ regret bound.

\begin{algorithm}
    \caption{Epoch Supervised Learning}\label{algo:epoch_supervised}
    \begin{algorithmic}[1]
        \State Start arbitrary initial policy $\pi_1 = \piref$.
        \For{Epoch $e=1, 2, \cdots, \lfloor e(T) \rfloor$}
        \State $\Dcal_e = \emptyset$
        \For{$n=1, 2, \cdots, m_e$}
        \State Given context $x_{e,n}$
        \State Agent generates $y_{e,n} \sim \pi_{e}$
        \State User edits it to $y'_{e,n}$
        \State Agent collects edit distance of $\editdist(y_{e,n}, y'_{e,n})$
        \State $\Dcal_e \leftarrow \Dcal \cup \cbr{(x_{e,n}, y'_{e,n})}$
        \EndFor
        \State Update the policy:
        \begin{equation*}
            \pi_{e+1} = \arg\max_{\pi \in \Pi} \sum_{(x, y') \in \Dcal_e}\log \pi(y' \mid x)
        \end{equation*}
        \EndFor
    \end{algorithmic}
\end{algorithm}

\begin{theorem}
When~\pref{assum:user-dist} and ~\pref{assum:bounded-density-ratio} hold then Algorithm~\pref{algo:epoch_supervised} with inputs $m_e = \frac{2\ln\rbr{\frac{|\Pi|}{\delta_e}}}{(1-\minp)^{2e}}$, $\delta_e = \delta/2e^2$ and $e(T)$ be the first index such that $\sum_{e=1}^{e(T) - 1} m_e < T$ and $\sum_{e=1}^{e(T)} m_e \geq T$ satisfies the regret bound,

\begin{align*}
    \Reg_T \leq  \mathcal{O}\left(\frac{\cmax + \Vmax}{(1-\minp)^2}\cdot \left(  \log_{1/(1-\minp )}\left( T \ln\rbr{\frac{T|\Pi|}{\delta}} \right)  \right)^{3/2}(T) \sqrt{ T \ln\rbr{\frac{T|\Pi|}{\delta}}} \right) .
\end{align*}
with probability at least $1-\delta$.
\end{theorem}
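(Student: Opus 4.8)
The plan is to track the expected total-variation distance $\epsilon_e \defeq D(\pi_e, \pi_\star^\beta)$ between the policy played in epoch $e$ and the KL-regularized optimal policy, and to show it contracts geometrically across epochs. The key observation is that the maximum-likelihood update in~\pref{algo:epoch_supervised} fits the Bayes classifier of its log-likelihood objective, which under the epoch-$e$ sampling distribution is exactly $q \circ \pi_e$. Hence $\pi_{e+1}$ is a finite-sample approximation of $q \circ \pi_e$, and the contraction property of~\pref{lem:contraction_property} is what pulls $q \circ \pi_e$ towards $\pi_\star^\beta$ at rate $(1-\minp)$. The suboptimality bound then follows by converting $\epsilon_e$ into regret through~\pref{lem:tv-subopt}.

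First I would establish the one-step recursion. By the finite-sample MLE guarantee used in the proof of~\pref{thm:sft-main} (which relies on realizability of $q\circ\pi_e$ from~\pref{assum:realizability}), with probability at least $1-\delta_e$ we have $D(\pi_{e+1}, q\circ\pi_e) \le \nu_e$ with $\nu_e = \mathcal{O}\!\left(\sqrt{\ln(|\Pi|/\delta_e)/m_e}\right)$; the chosen schedule $m_e = 2\ln(|\Pi|/\delta_e)/(1-\minp)^{2e}$ is tuned precisely so that $\nu_e = \mathcal{O}\!\left((1-\minp)^{e}\right)$, i.e. the statistical error decays at the same geometric rate as the contraction. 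Combining this with the triangle inequality and the contraction bound $D(q\circ\pi_e, \pi_\star^\beta) \le (1-\minp)\epsilon_e$ yields
\[
\epsilon_{e+1} \le (1-\minp)\,\epsilon_e + \nu_e .
\]
Unrolling from $\epsilon_1 = D(\piref,\pi_\star^\beta) \le 1$ gives $\epsilon_e = \mathcal{O}\!\left(e\,(1-\minp)^{e}\right)$, where the linear factor $e$ appears exactly because $\nu_j$ decays at the contraction rate. Feeding this into~\pref{lem:tv-subopt} under~\pref{assum:bounded-density-ratio} bounds the per-round suboptimality of epoch $e$ by $\SubOpt(\pi_e) \le 2(\cmax+\Vmax)\epsilon_e = \mathcal{O}\!\left((\cmax+\Vmax)\,e\,(1-\minp)^{e}\right)$.

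Next I would assemble the regret. Since $\pi_e$ is played for at most $m_e$ rounds, $\Reg_T \le \sum_{e=1}^{e(T)} m_e\,\SubOpt(\pi_e)$, and substituting the two expressions gives a per-epoch contribution proportional to $e\,\ln(|\Pi|/\delta_e)\,(1-\minp)^{-e}$. Because $(1-\minp)^{-e}$ grows geometrically while the remaining factors grow only polynomially/logarithmically, the sum is dominated (up to a $1/(1-\minp)$ factor from the geometric series) by the final epoch $e(T)$. The remaining work is to translate the schedule into explicit $T$-dependence: the defining inequalities $\sum_{e\le e(T)} m_e \ge T > \sum_{e<e(T)} m_e$ together with the geometric growth of $m_e$ yield $(1-\minp)^{-e(T)} = \Theta\!\left(\sqrt{T/\ln(|\Pi|/\delta_{e(T)})}\right)$ and hence $e(T) = \Theta\!\left(\log_{1/(1-\minp)} T\right)$. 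Substituting these, absorbing the slowly-varying $\ln(|\Pi|/\delta_{e(T)})$ and $\delta_e=\delta/(2e^2)$ factors into $\ln(T|\Pi|/\delta)$, and collecting the powers of $\log_{1/(1-\minp)}(\cdot)$ produces the stated bound. Finally the union bound over epochs succeeds since $\sum_{e\ge 1}\delta_e = \sum_{e\ge1}\delta/(2e^2) \le \delta$.

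The main obstacle is handling the recursion and the epoch schedule simultaneously and sharply. The delicate point is that $\nu_e$ is deliberately matched to the contraction rate, so one must verify both that this gives the $\epsilon_e = \mathcal{O}(e(1-\minp)^e)$ behavior and that the geometric growth of the $m_e$ keeps the cumulative budget $\sum_e m_e$ comparable to $m_{e(T)} = \Theta(T)$, so that the regret is genuinely governed by the last epoch rather than accumulating across all epochs. Once these two facts are in place the $T$-dependence is pinned down by the schedule, and the MLE application, the contraction step, and the union bound are all routine given the earlier lemmas.
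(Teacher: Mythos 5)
Your proposal follows the paper's proof almost step for step: the MLE guarantee for fitting $q\circ\pi_e$ (using realizability from \pref{assum:realizability}), the contraction from \pref{lem:contraction_property}, the recursion $\epsilon_{e+1}\le(1-\minp)\epsilon_e+\nu_e$ with the schedule chosen so that $\nu_e\asymp(1-\minp)^{e}$, the unrolled bound $\epsilon_e=\mathcal{O}\bigl(e(1-\minp)^{e}\bigr)$, the conversion to per-round suboptimality via \pref{lem:tv-subopt} under \pref{assum:bounded-density-ratio}, and the union bound over epochs with $\delta_e=\delta/2e^2$ are exactly the paper's steps.

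The gap is in how you sum the per-epoch regrets. You claim that $\sum_{e\le e(T)} e\,\ln(|\Pi|/\delta_e)\,(1-\minp)^{-e}$ is dominated by its last term ``up to a $1/(1-\minp)$ factor from the geometric series.'' For a geometric series with ratio $r=1/(1-\minp)>1$, the correct overhead is $\frac{r}{r-1}=\frac{1}{\minp}$, not $\frac{1}{1-\minp}$. This matters precisely in the interesting regime: when $\minp$ is small (weak users), $r\approx 1+\minp$, the terms are nearly flat, no single epoch dominates, and the $1/\minp$ factor blows up. So your route either yields a bound carrying a spurious $1/\minp$ dependence that does not appear in the theorem, or---if you instead bound the sum by $e(T)$ times its largest term---a $\bigl(\log_{1/(1-\minp)}(\cdot)\bigr)^{2}$ factor, which is weaker than the claimed $\bigl(\log_{1/(1-\minp)}(\cdot)\bigr)^{3/2}$. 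The paper avoids this by never invoking geometric domination: it writes the per-epoch regret as $\frac{2(\cmax+\Vmax)e}{1-\minp}\sqrt{2\ln(|\Pi|/\delta_e)\,m_e}$, applies Cauchy--Schwarz to obtain $\sum_e e\sqrt{m_e}\le e(T)\sqrt{e(T)\sum_e m_e}$, and then uses the stopping rule to bound the total sample budget $\sum_{e\le e(T)}m_e\le 2T/(1-\minp)^2$; this combination is what produces the $e^{3/2}(T)$ polylog factor together with only a $(1-\minp)^{-2}$ dependence. Replacing your last step with this Cauchy--Schwarz-plus-budget argument repairs the proof; everything before that step can stand as written.
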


Algorithm~\ref{algo:epoch_supervised} works over $e(T)$ epochs. In each epoch the algorithm interacts with the ``editor'' for $m_e$ steps.

At the start of epoch $e$ we assume access to a response generator $\pi_{e}(y | x)$. This induces a population version of the target distribution $p_{e+1}(x,y,y')$ defined as $p_{e+1}(x,y,y') =  \PP_\Xcal(x) \pi_{e}(y \mid x) q(y' \mid x, y)$. This induces the following ideal updated generator $\bar{\pi}_{e+1}(y|x)$ defined as,
\begin{equation*}
\bar{\pi}_{e+1}( y |x) = \frac{\sum_{\tilde{y}}  p_{e+1}(x,\tilde y,y)    }{\PP_\Xcal(x)} 
\end{equation*}

Since we admit $m_e$ samples during epoch $e$, the MLE guarantee ensures that the computed generator $\pi_{e+1}$ satisfies,

\begin{equation}\label{equation::MLE_consequence}
 \EE_{x \sim \PP_\Xcal}\sbr{\TV{\pi_{e+1}(Y' \mid x) - \bar{\pi}_{e+1}(Y' \mid x)}} \le \sqrt{\frac{2}{m_e}\ln\rbr{\frac{|\Pi|}{\delta_e}}}
\end{equation}
with probability at least $1-\delta_e$. 
\pref{lem:contraction_property} implies

\begin{equation}\label{equation::contraction_assumption}
 \EE_{x \sim \PP_\Xcal}\sbr{\TV{\bar{\pi}_{e+1}(Y' \mid x) - \pi^\beta_{\star}(Y' \mid x)}} \le (1-\minp ) \cdot \EE_{x \sim \PP_\Xcal}\sbr{\TV{\pi_{e}(Y' \mid x) - \pi^\beta_{\star}(Y' \mid x)}}
\end{equation}

therefore, combining Equations~\ref{equation::MLE_consequence} and~\ref{equation::contraction_assumption} we obtain,

\begin{align*}
    \EE_{x \sim \PP_\Xcal}\sbr{\TV{ \pi_{e+1}(Y' \mid x) - \pi_\star^\beta(Y' \mid x)}} &\leq (1-\minp )\cdot \EE_{x \sim \PP_\Xcal}\sbr{\TV{ \pi_{e}(Y' \mid x) - \pi_\star^\beta(Y' \mid x)}} + \sqrt{\frac{2}{m_e}\ln\rbr{\frac{|\Pi|}{\delta_e}}}
\end{align*}

we will now unroll this sum. 

Let $\delta_{e} =   \EE_{x \sim \PP_\Xcal}\sbr{\TV{ \pi_{e}(Y' \mid x) - \pi^\beta_\star(Y' \mid x)}} $ and $\xi_e = \sqrt{\frac{2}{m_e}\ln\rbr{\frac{|\Pi|}{\delta_e}}}$.

Then, unrolling the recursion we obtain,

\begin{align*}
    \delta_{e+1} \leq (1-\minp ) \delta_e + \xi_e \leq \eta\left((1-\minp ) \delta_{e-1}  + \xi_{e-1}\right) + \xi_e = (1-\minp )^2 \delta_{e-1}  + (1-\minp )\xi_{e-1} + \xi_e \leq \cdots  
\end{align*}
so that $\delta_{e+1} \leq \sum_{i=1}^{e} (1-\minp )^{e-i} \xi_i + (1-\minp )^e \delta_1$. Therefore, 
\begin{equation*}
    \delta_e = \sum_{i=1}^{e-1} (1-\minp )^{e-i-1} \xi_i + (1-\minp )^{e-1} \delta_1.
\end{equation*}

\pref{lem:tv-subopt} allows us to bound the regret of epoch $e$ by, 

\begin{align*}
    \Reg_{e,T} &\leq 2(\cmax + \Vmax) \cdot m_{e} \cdot  \left(\sum_{i=1}^{e-1} (1-\minp )^{e-i-1} \xi_i + (1-\minp )^{e-1} \delta_1 \right)
\end{align*}

Set $m_e = \frac{2\ln\rbr{\frac{|\Pi|}{\delta_e}}}{(1-\minp )^{2e}}$ so that $\xi_e = (1-\minp )^e$. Then,

\begin{align*}
    \Reg_{e,T} &\leq 2(\cmax + \Vmax) \cdot m_{e} \cdot  \left(\sum_{i=1}^{e-1} \eta^{e-i-1} \xi_i + (1-\minp )^{e-1} \delta_1 \right) \\
    &= 4(\cmax + \Vmax) \cdot \frac{\ln\rbr{\frac{|\Pi|}{\delta_e}}}{(1-\minp )^{2e}} \cdot (1-\minp )^{e-1} \cdot e \\
    &= 4 (\cmax + \Vmax) e \cdot \frac{\ln\rbr{\frac{|\Pi|}{\delta_e}}}{(1-\minp )^{e+1}}\\
    &= \frac{2(\cmax + \Vmax) e}{1-\minp }\sqrt{  2\ln\rbr{\frac{|\Pi|}{\delta_e}} m_{e} } 
\end{align*}

where the last equality follows because $m_e = \frac{2\ln\rbr{\frac{|\Pi|}{\delta_e}}}{(1-\minp )^{2e}}$.

Setting $\delta_e = \delta/2e^2$ we conclude that 
\begin{equation*}
     \Reg_{e,T} \leq  \frac{2(\cmax + \Vmax) e}{1-\minp }\sqrt{  2\ln\rbr{\frac{|\Pi|}{\delta_e}} m_{e} } 
\end{equation*}
for all $e \in\mathbb{N}$ simultaneously with probability at least $1-\delta$. %

To prove a regret bound up to time $T$ let $e(T)$ be the necessary epoch index such that $\sum_{e=1}^{e(T) - 1} m_e < T \leq \sum_{e=1}^{e(T) } m_e $. Since the size of $m_e$ is increasing it follows that,
    \begin{equation*}
   T \leq  \sum_{e=1}^{e(T)} m_e \leq \frac{T}{\eta^2} \cdot \frac{\ln\rbr{\frac{|\Pi|}{\delta_{e(T)+1}}}}{\ln\rbr{\frac{|\Pi|}{\delta_{e(T)}}}} \leq \frac{2T}{(1-\minp )^2}.
\end{equation*}

the epoch parameter $e(T)$ does not need to be greater than $e(T) \leq \left\lceil 2 \log_{1/(1-\minp )}\left( T / 2\ln\rbr{\frac{2T^2|\Pi|}{\delta}} \right) \right\rceil $ since at this point,
\begin{equation*}
    T \leq  \sum_{e=1}^{\left\lceil 2 \log_{1/(1-\minp )}\left( T / 2\ln\rbr{\frac{2T^2|\Pi|}{\delta}} \right)\right \rceil} m_e.
\end{equation*}

And therefore
\begin{align*}
    \Reg_T &\leq \frac{2(\cmax + \Vmax)}{1-\minp } \sum_{e=1}^{e(T)} e\sqrt{  2\ln\rbr{\frac{2T^2|\Pi|}{\delta}} m_e }\\
    &\leq \mathcal{O}\left(  \frac{\cmax + \Vmax}{1-\minp } e(T) \sqrt{ e(T)\ln\rbr{\frac{T|\Pi|}{\delta}} \sum_{e=1}^{e(T)} m_e} \right)\\ 
    &\leq \mathcal{O}\left(  \frac{\cmax + \Vmax}{1-\minp } e(T) \sqrt{ e(T)\ln\rbr{\frac{T|\Pi|}{\delta}} \frac{T}{\eta^2} } \right) \\
    &= \mathcal{O}\left(\frac{\cmax + \Vmax}{(1-\minp )^2} e^{3/2}(T) \sqrt{ T \ln\rbr{\frac{T|\Pi|}{\delta}}} \right) 
\end{align*}
with probability at least $1-\delta$. Finally since $e(T) \leq \left\lceil 2 \log_{1/(1-\minp )}\left( T / 2\ln\rbr{\frac{2T^2|\Pi|}{\delta}} \right) \right\rceil $ the result follows.

\subsection{Support Results}

We state useful supporting results below that allow us to prove our main results.

\begin{lemma}[Sigmoid Lower Bound]\label{lem:sigmoid_bounds} For any $x, y \in \RR$ we have:
\begin{equation*}
   | \sigma(x) - \sigma(y) | \geq \min\cbr{\sigma'(x), \sigma'(y)} \cdot | x - y|  
\end{equation*}
\end{lemma}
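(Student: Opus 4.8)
The plan is to reduce the inequality to the Mean Value Theorem combined with a unimodality property of $\sigma'$.

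If $x = y$ both sides vanish, so I may assume without loss of generality that $x < y$. Since $\sigma$ is differentiable, the Mean Value Theorem yields some $\xi \in (x, y)$ with $\sigma(y) - \sigma(x) = \sigma'(\xi)(y - x)$. Because $\sigma$ is strictly increasing we have $\sigma(y) > \sigma(x)$, so $|\sigma(x) - \sigma(y)| = \sigma'(\xi)\,|x - y|$. It therefore suffices to show that $\sigma'(\xi) \geq \min\{\sigma'(x), \sigma'(y)\}$ for every $\xi \in [x, y]$.

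The key step is to establish that $\sigma'$ is unimodal with peak at the origin, so that its minimum over any interval is attained at an endpoint. Using the identities $\sigma'(t) = \sigma(t)(1 - \sigma(t))$ and $\sigma''(t) = \sigma'(t)(1 - 2\sigma(t))$, observe that $\sigma'(t) > 0$ everywhere, while $\sigma(t) < \tfrac{1}{2}$ for $t < 0$ and $\sigma(t) > \tfrac{1}{2}$ for $t > 0$. Hence $\sigma''(t) > 0$ on $(-\infty, 0)$ and $\sigma''(t) < 0$ on $(0, \infty)$, so $\sigma'$ is strictly increasing on $(-\infty, 0]$ and strictly decreasing on $[0, \infty)$. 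In particular $\sigma'$ is quasi-concave, and consequently $\min_{t \in [x,y]} \sigma'(t) = \min\{\sigma'(x), \sigma'(y)\}$. Combining this with the Mean Value Theorem step gives $\sigma'(\xi) \geq \min\{\sigma'(x), \sigma'(y)\}$, which is exactly the claimed bound.

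The main obstacle, though a mild one, is the unimodality argument for $\sigma'$; once the sign analysis of $\sigma''$ is in place, the remainder is a one-line application of the Mean Value Theorem. An alternative route avoiding quasi-concavity would split into cases according to whether $x$ and $y$ lie on the same side of the origin (where monotonicity of $\sigma'$ applies directly) or straddle it, but this is more tedious, and the single observation about the sign of $\sigma''$ handles all cases uniformly.
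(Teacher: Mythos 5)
Your proof is correct and follows essentially the same route as the paper's: the Mean Value Theorem combined with the sign analysis of $\sigma''$ (convex on $(-\infty,0]$, concave on $[0,\infty)$, i.e.\ $\sigma'$ unimodal with peak at the origin), which forces the intermediate derivative $\sigma'(\xi)$ to be at least the smaller endpoint value. The only differences are cosmetic: you compute $\sigma''$ via the identity $\sigma' = \sigma(1-\sigma)$ rather than $\sigma' = e^{-t}\sigma^2$, and you package the paper's explicit three-case analysis as a single quasi-concavity statement.
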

\begin{proof} We have $\sigma(t) = \frac{1}{1+e^{-t}}$ for any $t \in \RR$. This gives $\sigma'(t) = \frac{e^{-t}}{\rbr{1+e^{-t}}^2} = e^{-t}\sigma(t)^2$. Observe that $\sigma'(t) \ge 0$ for all $t \in \RR$. We also have 
\begin{align*}
    \sigma''(t) &= - e^{-t} \sigma(t)^2 + 2e^{-t} \sigma(t)\sigma'(t) \\
    &= - \sigma'(t) + 2e^{-t} \sigma(t) \sigma'(t) \\
    &= \sigma'(t) \rbr{2 e^{-t} \sigma(t) - 1}\\
    &= \sigma'(t) \frac{\rbr{1 - e^t}}{1 + e^t}.
\end{align*}
Therefore, $\sigma''(t) \ge 0$ (convexity condition) if and only if $1 \ge e^t$. Therefore, $\sigma$ is concave on $[0, \infty)$ and convex on $(-\infty, 0]$.

Fix $x, y \in \RR$ and assume $x < y$ without loss of generality. Note that the result is trivially true for $x=y$. As $\sigma$ is differentiable over $\RR$, we have from mean value theorem $\frac{\sigma(y) - \sigma(x)}{y - x} = \sigma'(u)$ for some $u \in [x, y]$. As $\sigma'(t) \ge 0$ for all $t \in \RR$, we have $\abr{\sigma(y) - \sigma(x)} = \sigma'(t) \abr{y - x}$.

We now consider 3 cases. 
\begin{enumerate}
    \item If $x \le y \le 0$ in which case we are in the convex regime and $\sigma'(u) \ge \sigma'(x)$
    \item If $y \ge x \ge 0$ in which case we are in the concave regime and $\sigma'(u) \le \sigma'(y)$.
    \item If $x \le 0 \le y$, then if $u \le 0$, then we have $\sigma'(u) \ge \sigma'(x)$. And if $u \ge 0$, then we have $\sigma'(u) \ge \sigma'(y)$.
\end{enumerate}

Putting it together, we always get $\sigma'(t) \ge \min\cbr{\sigma'(y), \sigma'(x)}$ giving us the desired result.
\end{proof}

\begin{lemma}[Sigmoid Derivative Bound]\label{lem:sigmoid-derivative-bound} If $|x| \le c$ then $\sigma'(x) \ge \sigma'(c)$.
\end{lemma}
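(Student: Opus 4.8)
The plan is to exploit two structural facts about $\sigma'$: that it is an even function, and that it is non-increasing on $[0,\infty)$. Together these reduce the claim to a one-sided monotonicity statement.

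First I would record the even symmetry of the derivative. Since $\sigma(-t) = 1 - \sigma(t)$, differentiating both sides gives $\sigma'(-t) = \sigma'(t)$ for every $t \in \RR$; equivalently $\sigma'(t) = \sigma'(|t|)$. Next I would establish that $\sigma'$ is non-increasing on $[0,\infty)$. This is immediate from the computation of $\sigma''$ already carried out in the proof of~\pref{lem:sigmoid_bounds}, where we found $\sigma''(t) = \sigma'(t)\frac{1 - e^t}{1 + e^t}$. For $t \ge 0$ we have $e^t \ge 1$, so the factor $\frac{1-e^t}{1+e^t} \le 0$, and since $\sigma'(t) \ge 0$ this yields $\sigma''(t) \le 0$. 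Hence $\sigma'$ is non-increasing on $[0,\infty)$.

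Finally I would combine these facts. The hypothesis $|x| \le c$ forces $c \ge |x| \ge 0$, so both $|x|$ and $c$ lie in $[0,\infty)$ with $|x| \le c$. Monotonicity on $[0,\infty)$ then gives $\sigma'(|x|) \ge \sigma'(c)$, and by the even symmetry $\sigma'(x) = \sigma'(|x|)$ while $\sigma'(c) = \sigma'(|c|)$ since $c \ge 0$. Chaining these equalities and the inequality yields $\sigma'(x) = \sigma'(|x|) \ge \sigma'(c)$, which is the desired conclusion.

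There is essentially no obstacle here: the whole argument is a short reuse of the sign analysis of $\sigma''$ from the preceding lemma together with the evenness of $\sigma'$. The only point requiring any care is to notice that the hypothesis $|x| \le c$ already implies $c \ge 0$, which is precisely what permits applying the one-sided monotonicity of $\sigma'$ to the pair $|x| \le c$.
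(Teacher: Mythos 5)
Your proof is correct and rests on the same two ingredients as the paper's own argument: the sign analysis of $\sigma''$ carried over from~\pref{lem:sigmoid_bounds}, and the symmetry of $\sigma'$. The only difference is organizational—you invoke the evenness $\sigma'(t) = \sigma'(|t|)$ upfront to reduce everything to the half-line $[0,\infty)$ and apply monotonicity once, whereas the paper splits into the cases $x \ge 0$ and $x \le 0$, using concavity on the positive side and convexity together with the pointwise identity $\sigma'(-c) = \sigma'(c)$ on the negative side; both routes are equally valid.
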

\begin{proof} If $x \ge 0$, then $ -c \le 0 \le x \le c$. As $x$ is in the concave regime of $\sigma$ we have $\sigma'(x) \ge \sigma'(c)$. Alternatively, if $x \le 0$, we have $-c \le x \le 0 \le c$, then $x$ is in the convex regime of $\sigma$ and we have $\sigma'(x) \ge \sigma'(-c)$. As, $\sigma'(-c) = \frac{e^c}{(1 + e^c)^2} = \frac{e^{-c}}{(1 + e^{-c})^2} = \sigma'(c)$, we get $\sigma'(x) \ge \sigma'(c)$ in all cases.
\end{proof}

\begin{lemma}[Log Lower Bound]\label{lem:log-bound} For any $t > 0$, we have $\abr{\log(t)} \ge \frac{\abr{t-1}}{t+1}$.
\end{lemma}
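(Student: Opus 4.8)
The plan is to exploit a symmetry of the inequality and then reduce everything to a single-variable calculus check. First I would observe that both sides of the claimed inequality are invariant under the substitution $t \mapsto 1/t$: since $\log(1/t) = -\log t$ we have $\abr{\log(1/t)} = \abr{\log t}$, and multiplying the numerator and denominator of $\frac{\abr{1/t - 1}}{1/t + 1}$ by the positive quantity $t$ yields exactly $\frac{\abr{t-1}}{t+1}$. Consequently it suffices to establish the inequality for $t \ge 1$, with the case $0 < t \le 1$ then following by applying the $t \ge 1$ result to $1/t \ge 1$.

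For $t \ge 1$ both $\log t$ and $t-1$ are nonnegative, so the absolute values may be dropped and the goal reduces to $\log t \ge \frac{t-1}{t+1}$. I would prove this by defining $g(t) = \log t - \frac{t-1}{t+1}$ on $[1, \infty)$, noting $g(1) = 0$, and differentiating. A short computation gives $g'(t) = \frac{1}{t} - \frac{2}{(t+1)^2} = \frac{t^2 + 1}{t(t+1)^2}$, which is strictly positive for every $t > 0$. Hence $g$ is increasing on $[1,\infty)$, and since $g(1) = 0$ we conclude $g(t) \ge 0$ for all $t \ge 1$, which is precisely the desired inequality on this range.

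There is no genuine obstacle here beyond bookkeeping; the only points requiring a moment of care are verifying the $t \mapsto 1/t$ invariance cleanly so that the reduction to $t \ge 1$ is rigorous, and checking that the derivative $g'$ collapses to a manifestly positive rational function (the $2t$ terms in the numerator cancel). An alternative route avoiding the symmetry reduction would handle $0 < t < 1$ directly with the analogous auxiliary function and the reversed sign of $\log t$, but the invariance argument is cleaner and halves the case analysis.
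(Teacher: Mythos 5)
Your proof is correct and follows essentially the same route as the paper's: both establish $\log u \ge \frac{u-1}{u+1}$ for $u \ge 1$ via the auxiliary function $g(u) = \log u - \frac{u-1}{u+1}$ with the identical derivative computation $g'(u) = \frac{u^2+1}{u(u+1)^2} \ge 0$, and then handle $0 < t < 1$ by the substitution $t \mapsto 1/t$. Your framing of the reduction as a symmetry/invariance of both sides is a slightly cleaner packaging of the paper's case analysis, but it is the same argument in substance.
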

\begin{proof} We have $\log(u) \ge \frac{u-1}{u+1}$ for $u \ge 1$. To prove this let $g(u) = \log(u) - \frac{(u-1)}{u+1}$ then $g(1) = 0$ and $g'(u) = \frac{1}{u} - \frac{2}{(u+1)^2} = \frac{u^2 + 1}{u(u+1)^2}$. This means $g'(u) \ge 0$ for $u \ge 1$, and so $g(u) \ge 0$ for $u \ge 1$.

If $t \ge 1$, then $\abr{\log(t)} = \log(t) = \frac{t-1}{t+1} = \frac{|t-1|}{t+1}$. If $t \le 1$, then $\abr{\log(t)} = - \log(t) = \log(1/t) = \frac{1/t - 1}{1/t + 1} = \frac{|1-t|}{t+1}$. This completes the proof.
\end{proof}

\section{Related Work}
\label{app:related}

\paragraph{Theory} Although there is no theory prior work on edits that we are aware of, there is a rich literature studying theoretical guarantees for RLHF objectives from preference feedback. The works of~\citep{zhu2023principled} and~\citep{wang2023rlhf} pioneered the study of reinforcement learning from human feedback via offline preference data, while works such as~\citep{saha2023dueling} and~\citep{xu2020preference}. These preliminary results were mostly concerned with tabular scenarios. More recently, many works have been dedicated to the study of function approximation regimes for learning policies from preference feedback in RLHF scenarios, such as the introduction of the XPO algorithm for exploratory RLHF~\citep{xie2024exploratory} and~\citep{zhan2023provable} for provable guarantees for RLHF from offline data as well as unifying frameworks that bridge online and offline preference feedback~\citep{cen2024value}. Beyond policy optimization, theoretical insights have also been developed for auxiliary phenomena in RLHF. For example, ~\citep{huang2024self} characterized the self-improvement dynamics of iteratively refined policies, and~\citep{yang2024faster} analyzed the statistical efficiency of Best-of-N
mechanisms in preference alignment.

\paragraph{Modeling edits.} Prior work on text edits has explored various modeling approaches, including generative diffusion models with edit-based corruption and reconstruction~\citep{Reid2023DiffusERDV}, latent vector representations of edits~\citep{Guu2017GeneratingSB,MarreseTaylor2020VariationalIF}, and modeling modular operations in the editing process~\citep{Stahlberg2020Seq2EditsST,Mallinson2020FELIXFT,Agrawal2022AnIL,Zhang2022CoditT5PF}. While these approaches effectively model specific aspects of text editing, our work unifies multiple forms of user feedback — preferences, supervised labels, and cost — within a single framework. These types of feedback, typically studied in isolation, are combined and analyzed both theoretically and empirically in this work.

\paragraph{Using edits.} Research on text revision often focuses on task-specific improvements, such as enhancing model factuality~\citep{Cao2020FactualEC,Liu2022OnIS,Balachandran2022CorrectingDF}, refining academic writings~\citep{Mita2022TowardsAD,Jiang2022arXivEditsUT,DArcy2023ARIESAC}, and enabling style transfer~\citep{Reid2021LEWISLE}. Similarly, work on code edits has primarily aimed at automating program repair~\citep{Yin2018LearningTR,Chen2018SequenceRSL,Reid2023DiffusERDV,Zhang2023ASO,Sobania2023AnAO,Bouzenia2024RepairAgentAA}. In contrast, we focus on user-driven text edits that reflect expectations and preferences in practical use cases. Building on the setup proposed in prior work~\citep{Gao2024AligningLA, ArocaOuellette2024PREDICTPR}, we systematically investigate the algorithms and learning challenges associated with leveraging such edits.

\section{Additional Experimental Details}
\label{app:exp-details}

We provide details of our experimental setup here. We used the Prelude framework of Gao et al.,~\cite{gao2024aligning} which is available at~\url{https://github.com/gao-g/prelude} and has MIT License. We use the following models from HuggingFace: Llama 3.1 8b Instruct, Llama 3.3 70b Instruct and Qwen3-32B.

\paragraph{Grid Search.} We provide hyperparameter values of $\sft$ in~\pref{tab:hyperparams-sft} and for $\dpo$ and $\early$ in~\pref{tab:hyperparams-dpo}. We selected the best hyperparameters by computing log-loss of user-edits given context on a held-out validation set with 200 examples.

\begin{table}[h]
    \centering
    \begin{tabular}{l c}
        \hline
        \textbf{Hyperparameter} & \textbf{Values} \\
        \hline
        \texttt{Pretrained Model} & \texttt{meta-llama/Llama-3.1-8B-Instruct} \\
        \texttt{Precision} & \texttt{Bfloat16} \\
        \texttt{Optimizer} & \texttt{AdamW, betas=(0.9, 0.95)} \\
        \texttt{Learning Rate} & \{1.0e-5, 1.0e-6, 1.0e-7\} \\
        \texttt{Epoch} & \{1ep, 2ep\} \\
        \hline
    \end{tabular}
    \caption{Hyperparameter search ranges for $\sft$.}
    \label{tab:hyperparams-sft}
\end{table}

\begin{table}[h]
    \centering
    \begin{tabular}{l c}
        \hline
        \textbf{Hyperparameter} & \textbf{Values} \\
        \hline
        \texttt{Pretrained Model} & \texttt{meta-llama/Llama-3.1-8B-Instruct} \\
        \texttt{Precision} & \texttt{Bfloat16} \\
        \texttt{Optimizer} & \texttt{AdamW, betas=(0.9, 0.95)} \\
        \texttt{DPO beta} & \{0.1, 0.5\} \\
        $\lambda$ in $\early$ (\pref{eqn:early-ensemble}) & \{0, 0.5, 0.75, 1.0\} \\
        \texttt{Learning Rate} & \{1.0e-6, 3.0e-7, 7.0e-7\} \\
        \texttt{Epoch} & \{2ep\} \\
        \hline
    \end{tabular}
    \caption{Hyperparameter search ranges for $\dpo$ ($\lambda=0$) and $\early$ ($\lambda>0$).}
    \label{tab:hyperparams-dpo}
\end{table}

For $\late$ (\pref{alg:late-ensemble}) we set $\alpha=150$ based on the approximate average edit cost of the base model. We did not tune $\alpha$.

\paragraph{Inference.} We do greedy decoding and allow the agent to generate at most 1000 tokens. 

\paragraph{Compute and Time.} We ran our experiments on a cluster with H100s and H200s. All experiments took a few hours to train. The inference took 10min for each seed for $\sft$, $\dpo$ and $\early$ for each seed since we can evaluate each datapoint in parallel, while it took 30min for each seed for $\late$ (\pref{alg:late-ensemble}) since we cannot evaluate datapoints in parallel for this.

\paragraph{Datasets.} We do not release any new dataset with this paper. However, we used source articles from four datasets provided below. The link to the dataset contains other details such as size of the dataset, samples, and license. We urge readers to check the current status of these datasets in context of this work, and recommend them to avoid using any dataset that has been deprecated.\looseness=-1

{\small
\begin{table*}[!h]
\begin{tabular}{l|l}
\textbf{Domain}    & \textbf{Link to the Dataset} \\
\hline
PG19[]   & \url{https://github.com/google-deepmind/pg19} \\
Arxiv & \url{https://huggingface.co/datasets/CShorten/ML-ArXiv-Papers} \\
BillSum & \url{https://huggingface.co/datasets/FiscalNote/billsum} \\
Elsevier OA CC-By~\cite{} &
\url{https://huggingface.co/datasets/orieg/elsevier-oa-cc-by}\\
\hline
\end{tabular}
\caption{List of publicly available datasets that we use for training and evaluation.}
\label{tab:datasets}
\end{table*}
}

We provide additional experimental results below. 

\paragraph{Test-time plots.} \pref{fig:cumm-user-edit-qwen} shows the cumulative edit cost at test time for the email writing setting in~\pref{tab:main_results}. We also show cumulative user-edits at inference time for the transfer learning setting in~\pref{tab:transfer_results} in~\pref{fig:cumm-user-edit-llama}. The trend is similar to the trend in~\pref{fig:cumm-user-edit-qwen} with $\late$ mostly converging to the performance of the best method. We can see that in all domain except email writing with a weak user, the performance of $\late$ gradually drifts towards the best performing method as we would expect. The anomalous setting of email writing with weak user may need more test rounds before $\late$ converges to the best method.

\begin{figure}
  \begin{subfigure}[t]{.5\textwidth}
    \centering
    \includegraphics[width=\linewidth]{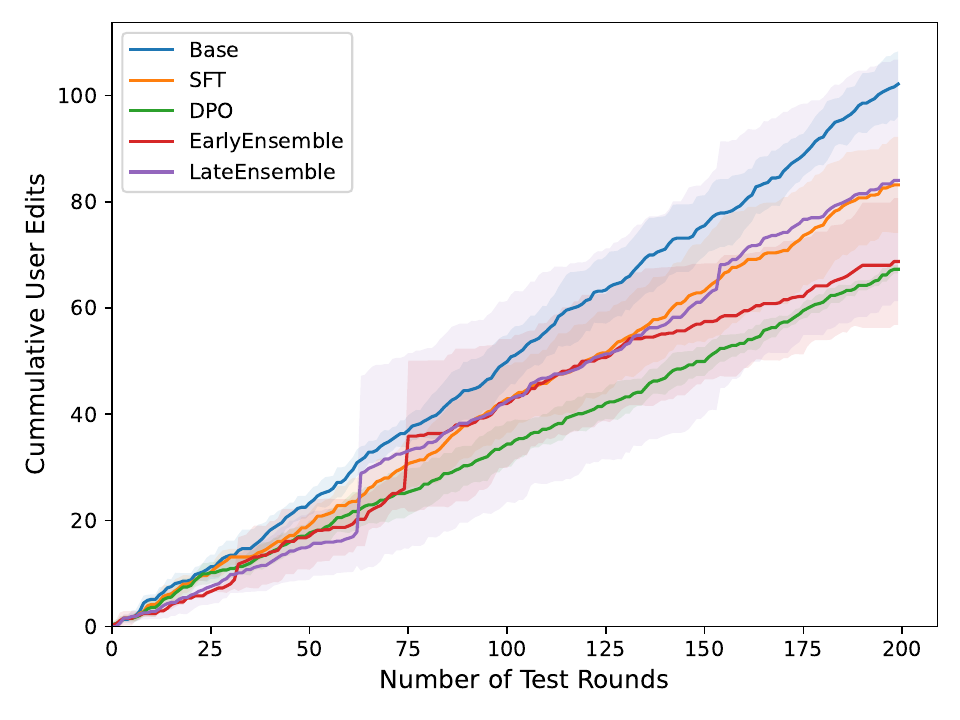}
    \caption{Email Writing with Strong User}
  \end{subfigure}
  \hfill
  \begin{subfigure}[t]{.5\textwidth}
    \centering
    \includegraphics[width=\linewidth]{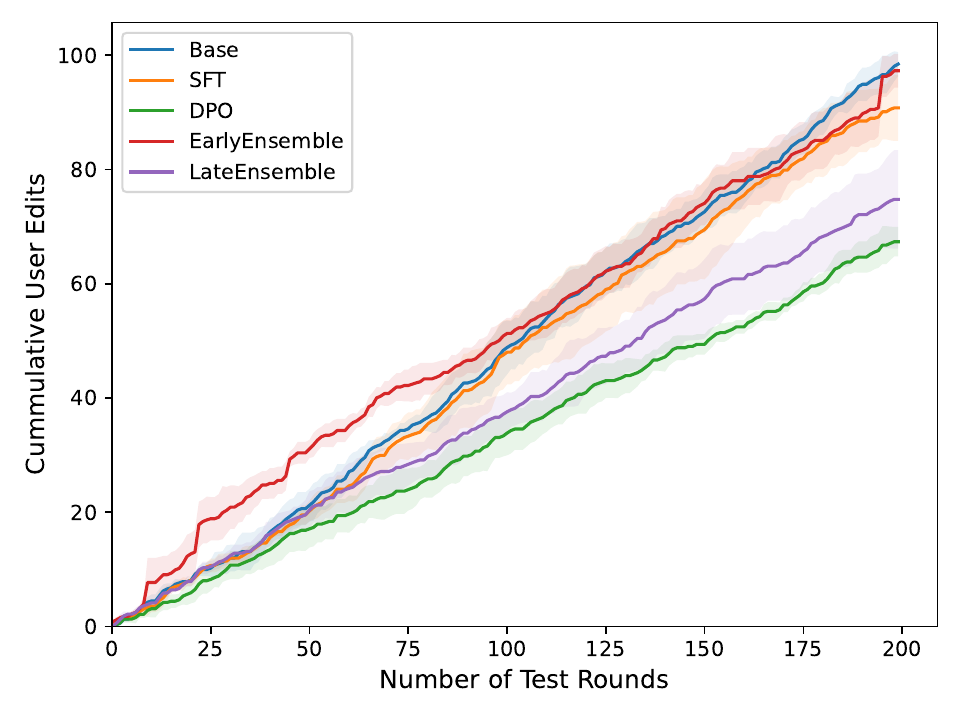}
    \caption{Email Writing with Weak User}
  \end{subfigure}
  \caption{Cumulative User Edits at test time corresponding to ~\pref{tab:main_results} for the different setup. For each round, we show mean and standard deviation across 3 seeds.}
  \label{fig:cumm-user-edit-qwen}
\end{figure}

\begin{figure}
  \begin{subfigure}[t]{.5\textwidth}
    \centering
    \includegraphics[width=\linewidth]{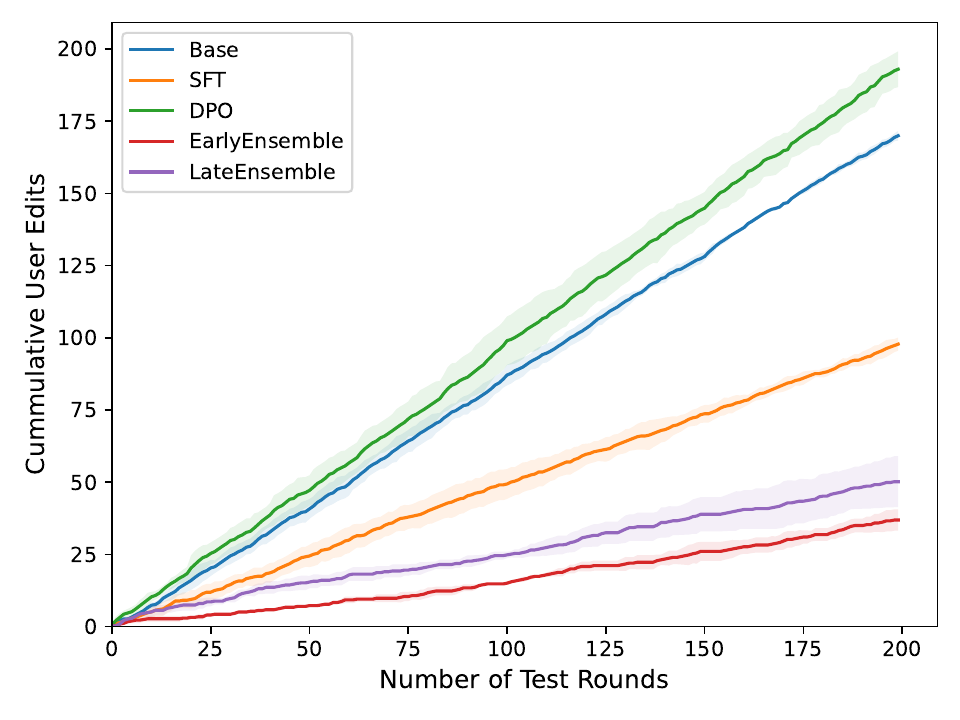}
    \caption{Summarization Domain with Strong User}
  \end{subfigure}
  \hfill
  \begin{subfigure}[t]{.5\textwidth}
    \centering
    \includegraphics[width=\linewidth]{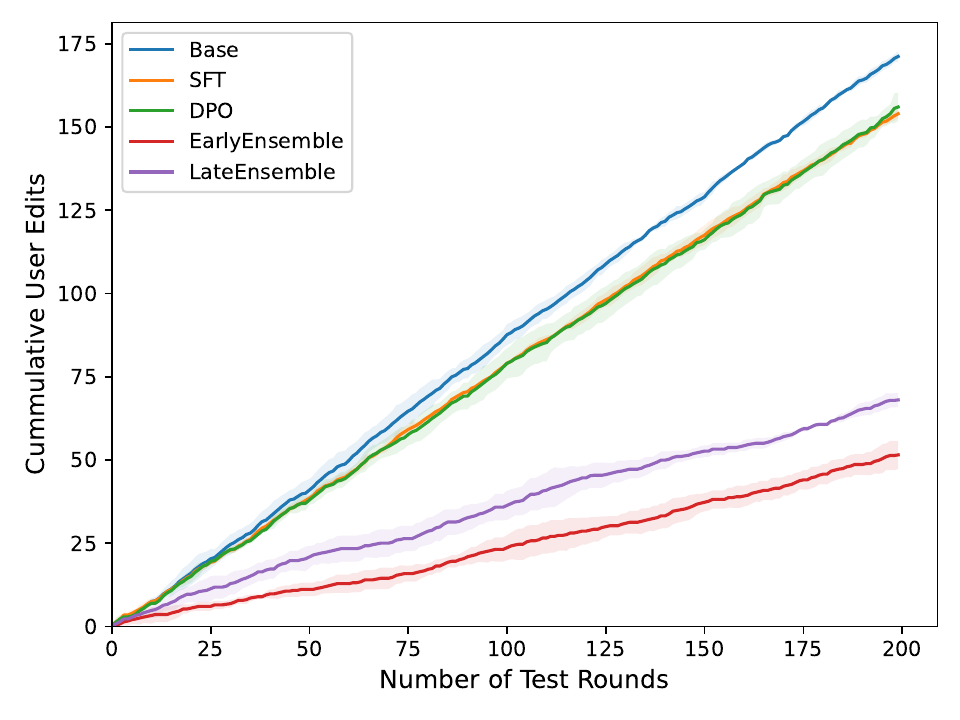}
    \caption{Summarization Domain with Weak User}
  \end{subfigure}

  \medskip

  \begin{subfigure}[t]{.5\textwidth}
    \centering
    \includegraphics[width=\linewidth]{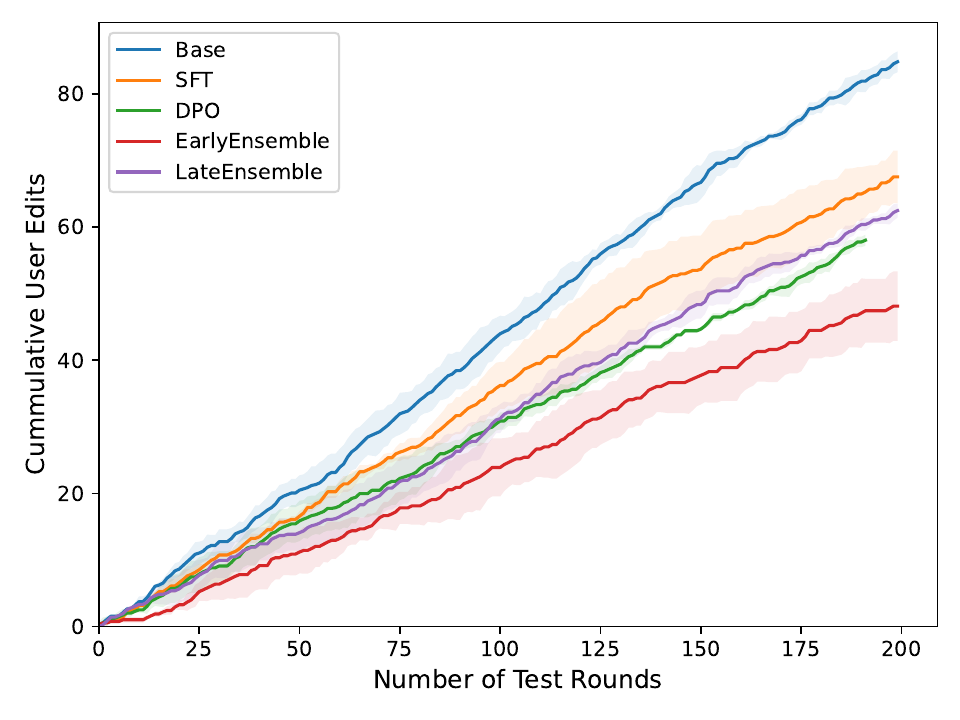}
    \caption{Email Writing with Strong User}
  \end{subfigure}
  \hfill
  \begin{subfigure}[t]{.5\textwidth}
    \centering
    \includegraphics[width=\linewidth]{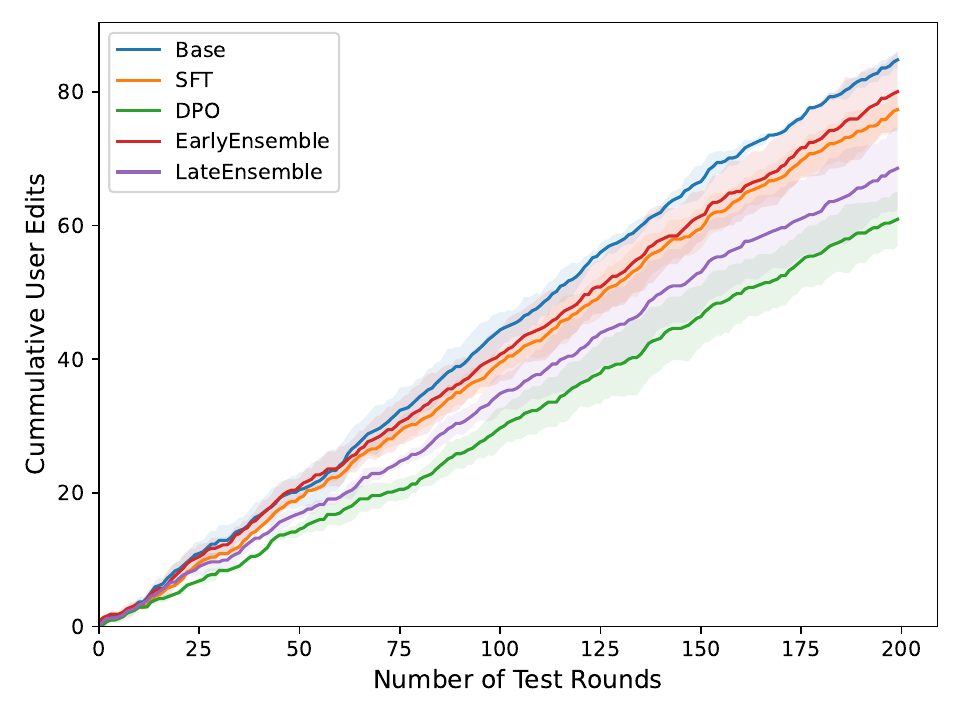}
    \caption{Email Writing with Weak User}
  \end{subfigure}
  \caption{Cumulative User Edits at test time corresponding to ~\pref{tab:transfer_results} for the different setup. For each round, we show mean and standard deviation across 3 seeds.}
  \label{fig:cumm-user-edit-llama}
\end{figure}

\end{document}